
\documentclass[10pt,twocolumn,letterpaper]{article}

\usepackage[pagenumbers]{cvpr} 

%
%
\usepackage[dvipsnames]{xcolor}


\usepackage{times}
\usepackage{epsfig}
\usepackage{graphicx}
\usepackage{amsmath}
\usepackage{amssymb}

\usepackage{multirow}
\usepackage{booktabs}

\usepackage{enumitem}

\usepackage{tcolorbox}

\usepackage{tikz}

\usepackage{amsfonts}
\usepackage{bm}
\usepackage{amsmath,amsthm,amssymb,rotating, mathrsfs}
\usepackage{footmisc}

\usepackage{bibunits}

\newtheorem{theorem}{Theorem}

\newtheorem{proposition}{Proposition}

\newtheorem{lemma}{Lemma}

\theoremstyle{definition}

\theoremstyle{remark}
\newtheorem{remark}{Remark}

\usepackage[linesnumbered,lined,boxed,commentsnumbered,ruled,vlined]{algorithm2e}

\SetCommentSty{mycommfont}

\usepackage{bm}


\usepackage{mathtools}
\usepackage{cancel} 

\usepackage{xparse}




\newcommand{\cI}{\mathcal{I}}
\newcommand{\cJ}{\mathcal{J}}

\newcommand{\cN}{\mathcal{N}}

\newcommand{\cQ}{\mathcal{Q}}

\DeclareMathAlphabet{\mathpzc}{OT1}{pzc}{m}{it}




\newcommand{\bbR}{\mathbb{R}}
\newcommand{\bbS}{\mathbb{S}}


\newcommand{\bI}{\bm{I}}

\newcommand{\bR}{\bm{R}}





\newcommand{\br}{\bm{r}}

\newcommand{\bt}{\bm{t}}

\newcommand{\bx}{\bm{x}}
\newcommand{\by}{\bm{y}}

\NewDocumentCommand{\norm}{mG{2}}{\big\|#1\big\|_{#2}}

\DeclareMathOperator{\tear}{\texttt{TEAR}}

\DeclareMathOperator{\SE}{SE}



\NewDocumentCommand{\seqp}{mG{n}}{{#1}_1-\cdots+ {#1}_{#2}}
\NewDocumentCommand{\seqm}{mG{n}}{{#1}_1-\cdots- {#1}_{#2}}


\newcommand{\myparagraph}[1]{\smallskip\noindent\textbf{#1.}}

%
\definecolor{cvprblue}{rgb}{0.21,0.49,0.74}
\usepackage[pagebackref,breaklinks,colorlinks,citecolor=cvprblue,linkcolor=cvprblue]{hyperref}
\usepackage{makecell}


\usepackage{soul}

\title{Scalable 3D Registration via Truncated Entry-wise Absolute Residuals}

\author{Tianyu Huang$^{1}$\thanks{\ Equal contribution} 
\and
Liangzu Peng$^{2}$\footnotemark[1]
\and
Ren\'e Vidal$^{2}$
\and
Yun-Hui Liu$^{1}$
\and
$^{1}$The Chinese University of Hong Kong
\and
$^{2}$University of Pennsylvania 
\and
$^{1}${\tt\small tyhuang, yhliu@mae.cuhk.edu.hk}
\and
$^{2}${\tt\small lpenn, vidalr@seas.upenn.edu}
}

\begin{document}
\maketitle

\begin{abstract}
    Given an input set of $3$D point pairs, the goal of outlier-robust $3$D registration is to compute some rotation and translation that align as many point pairs as possible. This is an important problem in computer vision, for which many highly accurate approaches have been recently proposed. Despite their impressive performance, these approaches lack scalability, often overflowing the 
    $16$GB of memory of a standard laptop to handle roughly $30,000$ point pairs. 
    In this paper, we propose a $3$D registration approach that can process more than ten million ($10^7$) point pairs with over $99\%$ random outliers. Moreover, our method is efficient, entails low memory costs, and maintains high accuracy at the same time. We call our method $\tear$~\footnote{\url{https://github.com/tyhuang98/TEAR-release}\label{footnote:code}}, as it involves minimizing an outlier-robust loss that computes Truncated Entry-wise Absolute Residuals. To minimize this loss, we decompose the original $6$-dimensional problem into two subproblems of dimensions $3$ and $2$, respectively, solved in succession 
    to global optimality via a customized branch-and-bound method. While branch-and-bound is often slow and unscalable, 
    this does not apply to $\tear$ as we propose novel bounding functions that are tight and computationally efficient. Experiments on various datasets are conducted to validate the scalability and efficiency of our method.
\end{abstract}

\section{Introduction}\label{sec:intro}

The \textit{$3$D registration} problem aims to find a rotation and translation that
best align an input set of 3D point pairs. Ideally, the alignment errors for all point pairs are small, and we call them \textit{inliers}. In practice, the inliers are contaminated by other point pairs, called \textit{outliers}, that induce significant alignment errors. Given a set of inlier and outlier point pairs, \textit{outlier-robust 3D registration} aims to align the 3D inlier point pairs via some rotation and translation.



In this paper, we tackle the 3D registration problem with extremely many outliers, via the proposed method that we call \textit{Truncated Entry-wise Absolute Residuals} ($\tear$). Numerically, $\tear$ can handle more than $10^7$ point pairs with $99.8\%$ random outliers, 
a setting in which no existing methods have been shown to succeed: They are either unscalable, inefficient, or inaccurate. In  \cref{subsection:prior-work}, we review prior 3D registration methods. In \cref{subsection:contribution}, we further highlight the $\tear$ approach and overview our contributions.

\subsection{Prior Art}\label{subsection:prior-work}

In this section, we briefly review several families of 3D robust registration methods that are relevant to our work.

First, observe that the outlier-robust 3D registration problem can be decoupled into two subproblems: (1) if the outliers are known, one can easily estimate a rotation and translation by SVD \cite{Arun-TPAMI1987,Horn-JOSAA1987,Horn-JOSAA1988}, and (2) if the true rotation and translation are given, one can easily remove the outliers. These observations turn immediately into an efficient alternating minimization method, which lies at the heart of classical approaches including \textit{iterative closest point} \cite{Besl-PAMI1992,Rusinkiewicz-2001}, \textit{iteratively reweighted least-squares} \cite{Coleman-TOMS1980,Ochs-SIAM-J-IS2015,Beck-SIAMOpt2015,Aftab-WCACV2015,Peng-NeurIPS2022,Peng-CVPR2023,Peng-arXiv2023b}, and \textit{graduated non-convexity} \cite{Blake-1987,Zhou-ECCV2016,Zach-ECCV2018,Yang-RA-L2020,Le-3DV2021,Sidhartha-CVPR2023}. Such an alternating minimization scheme is shown to be convergent in \cite{Aftab-WCACV2015,Peng-CVPR2023} under very general conditions, but it might not converge to a desired solution if the outlier ratio is high.

\begin{figure*}[!t]
\centering
\includegraphics[width=0.98\textwidth]{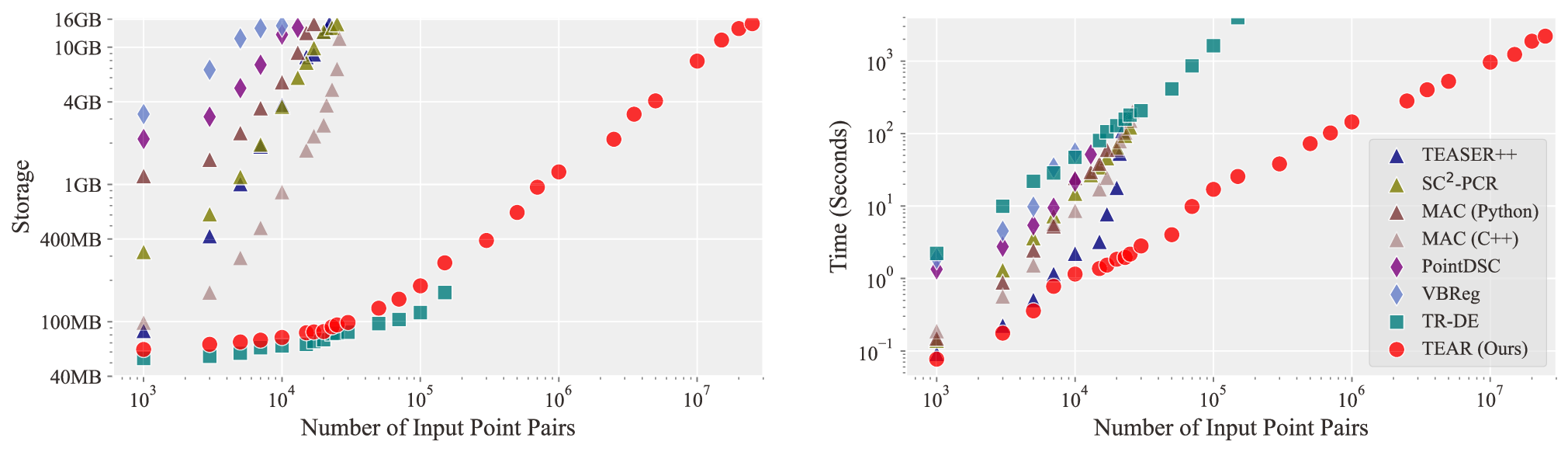}
    \\[-0.4em]
    \makebox[0.49\textwidth]{\footnotesize \quad \quad \ \ (a)}
    \makebox[0.49\textwidth]{\footnotesize \quad \quad \quad (b)}
    \\[-0.8em]
    \caption{Comparisons of $\tear$ (ours) to prior methods on random, synthetic, noisy data~(20 trials). The outlier ratio is set to $95\%$ and all presented methods find accurate solutions. \cref{fig:storage_time}a: $\tear$ is $10^3$ times more scalable than methods based on consistency graphs including TEASER++ \cite{Yang-T-R2021}, SC$^2$-PCR  \cite{Chen-CVPR2022b}, MAC \cite{Zhang-CVPR2023} and than deep learning methods including PointDSC \cite{Bai-CVPR2021} and VBReg \cite{Jiang-CVPR2023}. \cref{fig:storage_time}b: $\tear$ is $100$ times faster than TR-DE \cite{Chen-CVPR2022c}, a recent branch-and-bound method. }
    \label{fig:storage_time}
\end{figure*}

The RANSAC method \cite{Fischler-C-ACM1981} proceeds as follows: randomly sample a minimum number of point pairs (typically $3$ pairs), compute a rotation and translation that aligns them,  
measure the alignment error for all pairs, repeat these steps until a termination criterion is reached, and output the rotation and translation 
that give the smallest alignment error.
RANSAC could terminate with a correct estimation as soon as all pairs sampled in an iteration are inliers, but the probability of achieving so diminishes as the outlier ratio grows, suggesting that RANSAC, or its variants \cite{Torr-CVIU2000,Raguram-TPAMI2012,Barath-CVPR2018,Barath-CVPR2019}, can be inefficient in the presence of extremely many outliers.

Alternatively, one could formulate some non-convex objective (\eg, \textit{consensus maximization}) for outlier-robust registration, and solve it via the \textit{branch-and-bound} algorithm \cite{Daniel-Book2011}. Branch-and-bound guarantees global optimality by design, so it has served well in recent years as a validation tool \cite{Li-ICCV07,Olsson-TPAMI2008,Li-ICCV09,Yang-PAMI16,Bustos-PAMI16,Lian-PAMI17,Liu-ECCV18,Campbell-PAMI18,Cai-ISPRS-J-PRS2019,Li-ECCV2020,Chen-CVPR2022c,Liu-IJCV2022,Chen-TPAMI2022,Zhang-arXiv2023}. Its efficiency hinges on two critical aspects: the dimension of the search space and bounds on the objective. Care is needed to reduce the search space dimension or tighten the bounds, otherwise branch-and-bound could quickly be intractable as the problem scales up. The recent well-designed branch-and-bound method of \cite{Chen-CVPR2022c}, called TR-DE, takes more than $10^3$ seconds to handle $10^5$ point pairs (\cref{fig:storage_time}).

Instead of tackling the non-convex robust registration problem directly, one could consider relaxing it into a convex semidefinite program, which is typically solvable in polynomial time \cite{Briales-CVPR2017,Yang-ICCV2019,Iglesias-CVPR2020,Peng-ECCV2022,Yang-MP2023}. While semidefinite relaxations might recover the solution of the original non-convex objective, this recovery property could come at the cost of working with quadratically many optimization variables and constraints \cite{Yang-ICCV2019}, and thus at the expense of vast computation: State-of-the-art solvers need more than $7$ hours to solve such semidefinite programs and align 1,000 point pairs, even if the translation is given \cite[Table 3]{Yang-MP2023}.

The \textit{outlier removal} method \cite{Bustos-ICCV2015,Bustos-TPAMI2018} has a for loop: For each point pair, assume it is an inlier, then one can reason which point pair conflicts with this assumption. Doing so allows removing some point pairs and---this is a virtue---the point pairs to be removed are guaranteed to be outliers. While outlier removal often serves as a preprocessing step that facilitates subsequent alignment, it does so by charging a large amount of time (say, exponential in the variable dimension \cite{Bustos-TPAMI2018,Chin-CVPR2016} or at least quadratic in the number of points); \eg, on our laptop, the method of \cite{Bustos-TPAMI2018} takes more than 2 hours for $10^5$ point pairs with $95\%$ outliers.  

The next several methods we review rely on the so-called \textit{consistency graph}, a graph where a vertex denotes a point pair and an edge indicates two \textit{consistent} point pairs that can both be inliers. The consistency graph finds its early use in \cite{Leordeanu-ICCV2005,Enqvist-ICCV2009} and is the cornerstone of many recent methods for outlier-robust 3D registration \cite{Yang-T-R2021,Lusk-ICRA2021,Sun-RAL2022,Yan-TPAMI2022,Zhang-CVPR2023,Chen-TPAMI2023}. For example, using the consistency graph, the TEASER++ method computes a \textit{maximum clique}\footnote{A maximum clique of a graph is a complete subgraph containing the largest possible number of vertices, whereas a maximal clique is a complete subgraph not contained in any (other) maximum clique. \label{footnote:clique} } often containing most inliers and few outliers  \cite{Yang-T-R2021}; the SC$^2$-PCR method generalizes the consistency graph into a second-order version more discriminant between inliers and outliers \cite{Chen-CVPR2022b,Chen-TPAMI2023}; the MAC method generalizes the maximum clique formulation into computing \textit{maximal cliques}\footref{footnote:clique} \cite{Zhang-CVPR2023}. Even though these methods have established state-of-the-art performance, they have also brought an \textit{elephant} into the room: Computing a consistency graph uses memory quadratic in the number of point pairs, \eg, doing so for 30,000 point pairs would occupy the total 16GB memory of a standard laptop, which limits the applicability of all these methods to larger-scale robust 3D registration problems  (\cref{fig:storage_time}).

There have recently been many deep learning approaches developed to extract and match features of the input point clouds \cite{Wang-ICCV2019,Pais-CVPR2020,Choy-CVPR2020,Lee-ICCV2021,Guo-RAL2023,Yao-TPAMI2023,Yu-CVPR2023,Qin-CVPR2023}, the most relevant to ours are methods that perform robust 3D registration, such as PointDSC \cite{Bai-CVPR2021} and VBReg \cite{Jiang-CVPR2023}. PointDSC builds upon the consistency graph, and needs extra storage for a large network and (temporarily, during forward passes) the high-dimensional feature of each 3D point. VGReg builds upon PointDSC and uses a recurrent network that admits a variational interpretation, but it inherits the drawback of PointDSC of being not scalable  (\cref{fig:storage_time}).

\myparagraph{Summary} Since scalability has been \textit{a fly in the ointment} compromising the recent success of 3D registration methods, why not simply downsample huge-scale point clouds and perform registration from there? The answer to this sticking point is that downsampling ignores some input information that one could otherwise leverage, so it ultimately impinges upon performance; for numerical evidence, see, \eg, \cite[Table 2]{Jiang-CVPR2023}, \cite[Fig. 9]{Chen-TPAMI2023}, and \cref{fig:huge_down_rota}. This thus renders downsampling into a stopgap that eventually necessitates developing scalable and efficient registration methods.

\subsection{Our Contribution: TEAR}\label{subsection:contribution}
Which methodology, reviewed above, can be utilized to design a scalable approach for registrating ten million point pairs with extremely many outliers? Alternating minimization and RANSAC are known to be brittle at high outlier ratios. Semidefinite programs are costly to solve. Outlier removal typically needs quadratic time, constructing consistency graphs further consumes quadratic storage, and deep learning demands even more memory.

To design a scalable method, we advocate the branch-and-bound method. This might be surprising (if not doubtful): Conventional wisdom has it that branch-and-bound is slow and induces exponential running times! Contrary to the common wisdom, \cref{fig:storage_time} indicates the running time of our proposal ($\tear$) grows almost linearly. We achieve this by revising the problem-solving pipeline, from the problem formulation to mathematical derivations (of upper and lower bounds), and furthermore to implementation details. More explicitly, we make the following contributions:
\begin{itemize}
    \item (\textit{Problem Formulation}, \cref{subsection:problem-formulation,subsection:tear-off}) We formulate the 3D registration problem using the robust loss that we call TEAR, a shorthand for \textit{Truncated Entry-wise Absolute Residuals} (\cref{subsection:problem-formulation}). TEAR is similar in spirit to commonly seen robust losses (\eg, consensus maximization, truncated least-squares), but it has subtle differences that enable faster branch-and-bound algorithms to be derived. Moreover, in \cref{subsection:tear-off} we decompose TEAR into two subproblems of dimensions $3$ and $2$ respectively, which further facilitates developing branch-and-bound implementations. In fact, at a high level, our approach is very simple: We solve the two subproblems, one after another, by a basic branch-and-bound template.
    \item (\textit{Upper and Lower Bounds}, \cref{subsection:UB-LB,subsection:TEAR-CM-TLS}) The nontrivial part of our approach lies in deriving tight lower and upper bounds for the branch-and-bound method, and our key idea for achieving so is as follows (\cref{subsection:UB-LB}). To solve the $3$-dimensional subproblem, for example, our implementation searches a $2$-dimensional space (rather than $3$). In this implementation, we derive upper and lower bounds that can be computed via solving a specific $1$-dimensional problem in $O(N\log N)$ time, where $N$ is the total number of point pairs. We follow a similar route to solve the other $2$-dimensional subproblem to global optimality, and for simplicity, we call the final algorithm $\tear$. Via numerical comparisons (\cref{subsection:TEAR-CM-TLS}), we will show that using TEAR as the robust loss ensures the bounds are tighter than using the commonly used consensus maximization loss, and it also ensures the bounds are more efficient to compute than using the truncated least-squares loss. 
    \item (\textit{Experiments}, \cref{section:standard-experiments,section:huge-scale-experiments}) In \cref{section:standard-experiments} we perform standard experiments on synthetic and real data, showing that   $\tear$  reaches state-of-the-art accuracy while being more efficient in most cases. In \cref{section:huge-scale-experiments} we perform experiments on large-scale point clouds, presenting $\tear$ as a unique method that can handle ten million ($10^7$) point pairs in the presence of extremely many random outliers ($99.8\%$).
\end{itemize}

\section{The Design of TEAR}
This section introduces the design of $\tear$. In \cref{subsection:problem-formulation}, we revisit commonly used formulations for 3D registration and their drawbacks, thus motivating our proposal of a novel formulation called \textit{Truncated Entry-wise Absolute Residuals} (TEAR). In \cref{subsection:tear-off}, we decompose TEAR into easier subproblems. In \cref{subsection:UB-LB}, we describe how to solve the subproblems using branch-and-bound. In \cref{subsection:TEAR-CM-TLS}, we provide numerical validation that TEAR overcomes the drawbacks of other formulations and can be solved more efficiently.

\subsection{Problem Formulation: TEAR}\label{subsection:problem-formulation}
\myparagraph{Rethink Existing Formulations} Recall that our goal is to find some  3D rotation $\bR^*$ and translation $\bt^*$ that best aligns an input set of 3D point pairs $\{ (\by_i,\bx_i )\}_{i=1}^N$ containing a large fraction of (random) outliers. The first step of designing $\tear$ is to choose, if not to propose, an outlier-robust problem formulation that admits scalable algorithms. To this end, we recall two highly robust losses often used in geometric vision, namely \textit{Consensus Maximization} (CM) and \textit{Truncated Least-Squares} (TLS):
\begin{align}
    \max_{\substack{(\bR,\bt)\in \SE(3)}} &\sum_{i=1}^N \bm{1}\left(\|\by_i - \bR \bx_i - \bt \|_2 \leq \xi_i \right),  \label{eq:CM} \tag{\textcolor{cvprblue}{CM}} \\ 
    \min_{\substack{(\bR,\bt)\in \SE(3)}} & \sum_{i=1}^N \min \left\{\|\by_i - \bR \bx_i - \bt \|_2^2,\ \xi_i^2 \right\}.  \label{eq:TLS} \tag{\textcolor{cvprblue}{TLS}}
\end{align}
Here, $\bm{1}(\cdot)$ denotes the indicator function, and $\xi_i\geq 0$ is a threshold hyper-parameter, such that, if the residual $\|\by_i - \bR \bx_i - \bt \|_2$ is larger than $\xi_i$, then $(\by_i,\bx_i)$ is regarded as an outlier (with respect to $\bR,\ \bt$). Therefore, \ref{eq:CM} aims to minimize the number of outliers, whereas \ref{eq:TLS} furthermore minimizes the residuals of inliers using least-squares.

In the context of branch-and-bound, \ref{eq:CM} has been popular, as its upper and lower bounds are relatively easy to derive. However, using branch-and-bound for \ref{eq:CM} has a subtle yet crucial drawback. Indeed, note that in many cases two different rotations and translations could correspond to the same number of outliers (as determined by $\xi_i$), but the corresponding residuals are hardly the same. Note then that \ref{eq:CM} only counts the number of outliers with respect to the current rotation and translation, and it never calculates the sum of residuals. These imply the upper and lower bounds of \ref{eq:CM} are usually loose, which would jeopardize, consequentially, the efficiency of branch-and-bound.

Is \ref{eq:TLS} suitable for branch-and-bound? Here are some concerns, though. First, to our knowledge, no prior work applied branch-and-bound to \ref{eq:TLS}: The technical challenge is deriving the corresponding upper and lower bounds; the conceptual challenge is the impression that branch-and-bound would be slow anyway. Our second, major, concern has a deeper cause, which we will illustrate later in \cref{subsection:TEAR-CM-TLS}.

\myparagraph{TEAR} In light of the above concerns, we propose \textit{Truncated Entry-wise Absolute Residuals} (TEAR):
\begin{equation}\label{eq:TEAR}
    \min_{\substack{(\bR,\bt)\in \SE(3)}} \sum_{i=1}^N \min\left\{ \|\by_i - \bR \bx_i - \bt \|_1,\ \xi_i \right\}. \tag{\textcolor{red}{TEAR}} 
\end{equation}
Different from \ref{eq:CM}, \ref{eq:TEAR} evaluates the sum of (truncated) residuals. Different from \ref{eq:TLS}, \ref{eq:TEAR} evaluates the sum of the entry-wise absolute values of $\by_i - \bR \bx_i - \bt$ ($\ell_1$ loss), rather than the squared sum ($\ell_2$ loss). The benefit of having these differences is computational: With \ref{eq:TEAR} we can derive a branch-and-bound algorithm with tight and efficiently computable upper and lower bounds (\cref{subsection:UB-LB,subsection:TEAR-CM-TLS}).
\begin{remark}[Truncated Least Unsquared Deviations]
    In a recent preprint \cite{Lee-arXiv2023}, the following robust loss was considered (translated to the context of robust 3D registration):
    \begin{equation}
        \min_{\substack{(\bR,\bt)\in \SE(3)}} \sum_{i=1}^N \min \left\{\|\by_i - \bR \bx_i - \bt \|_2,\ \xi_i \right\}.  \label{eq:TLUD} \tag{\textcolor{cvprblue}{TLUD}}
    \end{equation}
    \ref{eq:TLUD} uses the unsquared $\ell_2$ norm rather than the $\ell_1$ norm of \ref{eq:TEAR}, and if $\by_i - \bR \bx_i - \bt$ were a scalar, \ref{eq:TLUD} would be equivalent to \ref{eq:TEAR}. One potential disadvantage of \ref{eq:TLUD} is that $\|\by_i - \bR \bx_i - \bt \|_2$ is not separable and this might bring computational difficulties (\eg, see \cite[Table 1]{Lee-arXiv2023}). In \cref{subsection:tear-off,subsection:UB-LB}, we will simplify \ref{eq:TEAR}  and improve computational efficiency by leveraging its separable residual.
\end{remark}


\subsection{Tear Off: Decomposition of TEAR}\label{subsection:tear-off}
Branching over the $6$-dimensional space $\SE(3)$ would be inefficient (as prior work showed), and directly applying branch-and-bound to \ref{eq:TEAR} would lead to unscalable implementations. Therefore, in order to implement a scalable branch-and-bound method, in this section we decompose \ref{eq:TEAR} into lower-dimensional problems (tears).

Denote by $y_{ij}$, $\br_j^\top$, and $t_j$ the $j$-th row of $\by_i$, $\bR$, and $\bt$, respectively. The residual $\|\by_i - \bR \bx_i - \bt \|_1$ has 3 summands:
\begin{equation*}
    \|\by_i - \bR \bx_i - \bt \|_1=\sum_{j=1}^3 | y_{ij} - \br_j^\top \bx_i - t_j|.
\end{equation*}
While $\|\by_i - \bR \bx_i - \bt \|_1$ has $6$ degrees of freedom ($\bR$ and $\bt$), each summand $| y_{ij} - \br_j^\top \bx_i - t_j|$ has only $3$ degrees of freedom ($\br_j\in \bbS^2:=\{\br\in\bbR^3: \| \br\|=1\}$ and $t_j\in \bbR$). This motivates us to approximate the \ref{eq:TEAR} problem as follows. First, with some threshold hyper-parameter\footnote{The choices of hyper-parameters are discussed in the appendix. \label{footnote:hyperparameters} } $\xi_{i1}$, we \textit{tear off} the first summand from \ref{eq:TEAR}, targeting at 
\begin{equation}\label{eq:tear-1}
    \min_{\substack{\br_1\in \bbS^2,t_1\in \bbR } } \sum_{i=1}^N \min\left\{| y_{i1} - \br_1^\top \bx_i - t_1|,\ \xi_{i1} \right\}. \tag{\textcolor{red}{TEAR-1}}
\end{equation}
Solving \ref{eq:tear-1} gives a solution $(\hat{\br}_1, \hat{t}_1 )$ revealing the set 
\begin{equation}\label{eq:I1-after-tear1}
    \hat{\cI}_1=\left\{ i: | y_{i1} - \hat{\br}_1^\top \bx_i - \hat{t}_1| \leq \xi_{i1} \right\},
\end{equation}
which contains the indices of potential inliers. Then, we tear off the second summand $| y_{i2} - \br_2^\top \bx_i - t_2|$, focusing on 
\begin{equation}\label{eq:tear-2}
    \begin{split}
        \min_{\substack{\br_2\in \bbS^2,t_2\in \bbR } } & \sum_{i\in \hat{\cI}_1 } \min\left\{| y_{i2} - \br_2^\top \bx_i - t_2|,\ \xi_{i2} \right\} \\ 
        \textnormal{s.t.} &\quad \quad  \br_2^\top \hat{\br}_1 = 0 
    \end{split} \tag{\textcolor{red}{TEAR-2}}
\end{equation}
where $\xi_{i2}$ is another threshold hyper-parameter\footref{footnote:hyperparameters}. Note that the extra constraint $\br_2^\top \hat{\br}_1 = 0$ in \ref{eq:tear-2} ensures the resulting rotation has orthogonal rows. Moreover, it implies \ref{eq:tear-2} has $2$ degrees of freedom, easier to solve than \ref{eq:tear-1} after a suitable reparameterization.

\begin{figure}[!t]
    \centering
    \includegraphics[width=0.4\textwidth]{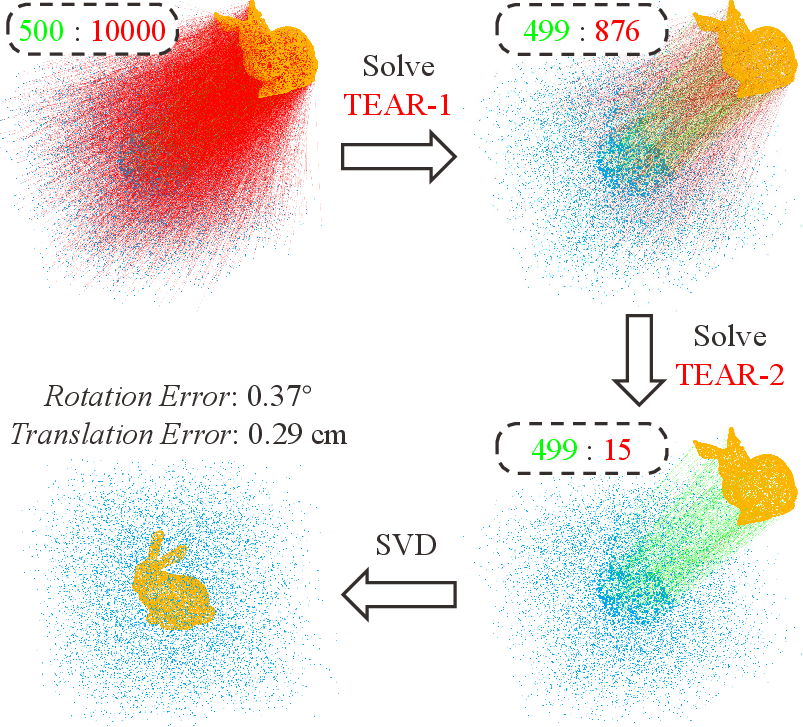}
   \\[-0.8em]
    \caption{The pipeline of $\tear$ visualized (\cf \cref{subsection:tear-off}). Green (\textit{resp.} red)  values denote the numbers of inliers (\textit{resp.} outliers), and green (\textit{resp.} red) lines denote inlier (\textit{resp.} outlier) point pairs. Top left: Input point pairs; top right: point pairs indexed by $\hat{\cI}_1$ \eqref{eq:I1-after-tear1} after solving \ref{eq:tear-1}; bottom right: point pairs indexed by $\hat{\cI}_2$ \eqref{eq:I2-after-tear2} after solving \ref{eq:tear-2}; bottom left: the final output. }
    \label{fig:TEAR-pipeline}
\end{figure}

Shall we proceed similarly for the third summand $| y_{i3} - \br_3^\top \bx_i - t_3|$? In fact, no more tear is needed: Given some optimal $(\hat{\br}_1, \hat{t}_1)$ and $(\hat{\br}_2, \hat{t}_2)$ from \ref{eq:tear-1} and \ref{eq:tear-2} respectively, one needs to set $\br_3$ to be $\pm(\hat{\br}_1\times \hat{\br}_2)$ to satisfy the rotation constraint (the sign is chosen to ensure the determinant is $1$), after which one needs only to find the final translation component. But what we do to implement $\tear$ is even simpler: Solving \ref{eq:tear-2} yields again a set
\begin{equation}\label{eq:I2-after-tear2}
    \hat{\cI}_2:=\left\{ i\in \hat{\cI}_1: | y_{i2} - \hat{\br}_2^\top \bx_i - \hat{t}_2| \leq \xi_{i2} \right\},
\end{equation}
which is expected to contain very few outlier indices (\eg, see \cref{fig:TEAR-pipeline}), and next, we apply SVD \cite{Arun-TPAMI1987,Horn-JOSAA1987,Horn-JOSAA1988} to the remaining point pairs indexed by $\hat{\cI}_2$ and this gives a rotation and translation estimate as the final output.

In summary, $\tear$ consists of solving \ref{eq:tear-1} and then \ref{eq:tear-2}, followed by an SVD; this is illustrated in \cref{fig:TEAR-pipeline} where we use the Stanford Bunny point cloud \cite{Curless-CCGIT1996} as an example. We solve \ref{eq:tear-1} and \ref{eq:tear-2} by a tailored branch-and-bound method, and we discuss that in \cref{subsection:UB-LB}.

\subsection{Branch and Bound with Tears}\label{subsection:UB-LB}
As mentioned, \ref{eq:tear-2} is easier to solve, so here we only describe how to solve \ref{eq:tear-1} using branch-and-bound, a global optimization technique that we assume, for conciseness, the reader is familiar with. The full recipe for solving \ref{eq:tear-1} is in the appendix, and the code is available\footref{footnote:code}.

While \ref{eq:tear-1} has $3$ degrees of freedom, it would suffice to branch over the sphere $\bbS^{2}$ where $\br_1$ resides or equivalently over the rectangle $\cQ:= [0,2\pi)\times [0,\pi]$, as each point $(\alpha,\beta)$ in this rectangle corresponds uniquely to a unit vector $\br_1=[\sin\beta \cos\alpha, \sin\beta \sin \alpha, \cos\beta ]$. To implement branch-and-bound, we need to consider two key steps:
\begin{itemize}
    \item (\textit{Upper Bound}) Given a point in $\cQ$, that is given a unit vector $\br_1$, we minimize \ref{eq:tear-1} in variable $t_1$, \ie, solve
    \begin{equation}\label{eq:tear-1-UB}
    \min_{\substack{t_1\in \bbR } } \sum_{i=1}^N \min\left\{| y_{i1} - \br_1^\top \bx_i - t_1|,\ \xi_{i1} \right\}. 
\end{equation}
    By construction, the minimum value of \eqref{eq:tear-1-UB} is always an upper bound of the minimum of \ref{eq:tear-1}. 
    \item (\textit{Lower Bound}) Given a subrectangle $[\alpha_1,\alpha_2]\times [\beta_1,\beta_2]$ in $\cQ$, consider the following program:
    \begin{equation}\label{eq:tear-1-LB}
        \begin{split}
            \min_{\substack{\alpha\in [\alpha_1,\alpha_2] \\ \beta\in [\beta_1,\beta_2] \\ t_1\in \bbR } } & \sum_{i=1}^N \min\left\{| y_{i1} - \br_1^\top \bx_i - t_1|,\ \xi_{i1} \right\} \\ 
        \textnormal{s.t.} &\quad  \br_1=[\sin\beta \cos\alpha, \sin\beta \sin \alpha, \cos\beta ]^{\top}
        \end{split}
    \end{equation}
    Note that \eqref{eq:tear-1-LB} is identical to \ref{eq:tear-1} except that $\br_1$ is now constrained such that the two angles $\alpha$ and $\beta$ are bounded in the intervals $[\alpha_1,\alpha_2]$ and $[\beta_1,\beta_2]$. While \eqref{eq:tear-1-LB} is a subproblem encountered during branch-and-bound, solving \eqref{eq:tear-1-LB} efficiently is not easy. The key step that we take is to relax \eqref{eq:tear-1-LB} a little bit more so that a lower bound on the minimum of \eqref{eq:tear-1-LB} can be efficiently calculated (see the appendix for details). By construction, this lower bound is also a lower bound of \ref{eq:tear-1} subject to $\alpha\in [\alpha_1,\alpha_2]$ and $\beta\in [\beta_1,\beta_2]$. This is what we meant by computing a lower bound for the branch-and-bound method.   
\end{itemize}
We use the following statement to encapsulate the details of computing the desired upper and lower bounds:
\begin{theorem}\label{theorem:UB-LB}
    We can solve \eqref{eq:tear-1-UB} in $O(N\log N)$ time and compute a ``tight'' lower bound of \eqref{eq:tear-1-LB} also in $O(N\log N)$ time.
\end{theorem}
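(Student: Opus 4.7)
The plan is to recast both problems as one-dimensional minimizations of piecewise-linear functions in $t_1$ with $O(N)$ breakpoints, and then solve each by a sort-and-sweep in $O(N\log N)$ time. The algorithmic engine is the same in both halves of the theorem; only the derivation of the univariate objective differs.

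For the upper bound \eqref{eq:tear-1-UB}, fix $\br_1$ and set $a_i := y_{i1} - \br_1^\top \bx_i$. Each summand $\min\{|a_i-t_1|,\xi_{i1}\}$ is a truncated V with three breakpoints $a_i-\xi_{i1}$, $a_i$, $a_i+\xi_{i1}$ and slope jumps $-1,+2,+1$. The total objective is therefore piecewise-linear with at most $3N$ breakpoints. I would sort the $3N$ breakpoints tagged with their signed slope-changes in $O(N\log N)$ time, evaluate the objective once at the leftmost breakpoint in $O(N)$ time, and then sweep rightward maintaining the current slope and current value in $O(1)$ per breakpoint. Because a univariate piecewise-linear function attains its infimum at a breakpoint (or along a flat segment between two), the sweep returns a global minimizer.

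For the lower bound the challenge is that $\br_1^\top\bx_i$ varies with $(\alpha,\beta)$ over $[\alpha_1,\alpha_2]\times[\beta_1,\beta_2]$. The key relaxation is a great-circle bound: letting $\br_1^c$ be the direction at the rectangle's center and $\psi$ an upper bound on the angular radius of the corresponding spherical patch, one has $|\br_1^\top\bx_i-(\br_1^c)^\top\bx_i|\le\psi\|\bx_i\|=:\Delta_i$, all computable in $O(N)$ time. Writing $c_i:=(\br_1^c)^\top\bx_i$, this yields the pointwise lower bound
\begin{equation*}
\min\{|y_{i1}-\br_1^\top\bx_i-t_1|,\xi_{i1}\}\;\ge\;\min\{\max\{0,|y_{i1}-c_i-t_1|-\Delta_i\},\xi_{i1}\},
\end{equation*}
which eliminates $(\alpha,\beta)$ and leaves a univariate piecewise-linear function of $t_1$ with four breakpoints per index (at $y_{i1}-c_i\pm\Delta_i$ and $y_{i1}-c_i\pm(\Delta_i+\xi_{i1})$) and slope jumps $-1,+1,-1,+1$. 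The same sort-and-sweep on at most $4N$ breakpoints returns a lower bound on \eqref{eq:tear-1-LB} in $O(N\log N)$ time.

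The main obstacle is choosing $\Delta_i$ so the relaxation is simultaneously valid and small, which is what I take ``tight'' to mean. The validity comes from the great-circle inequality, and the tightness from $\psi\to 0$ as the rectangle shrinks under branching, so the relaxed objective converges pointwise to the true one and branch-and-bound terminates within any chosen tolerance. The only step requiring careful but routine trigonometry is accounting for the metric distortion of the $(\alpha,\beta)$ parameterization near $\beta\in\{0,\pi\}$ when bounding $\psi$; once this constant is nailed down, the $O(N\log N)$ complexity of both halves follows immediately from the sorting step.
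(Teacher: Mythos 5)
Your upper-bound half is essentially the paper's argument: with $\br_1$ fixed, the objective of \eqref{eq:tear-1-UB} is piecewise linear in $t_1$ with at most $3N$ breakpoints, its minimum is attained at a breakpoint (the paper sharpens this to the $N$ points $a_i$), and a sort-plus-sweep that incrementally maintains the slope and value gives $O(N\log N)$; this matches the paper's Algorithm for the 1D TEAR upper bound. Your lower-bound half is correct but takes a genuinely different relaxation. The paper does not use a uniform angular-radius bound: for each $i$ it computes the \emph{exact} range $[b_{il},b_{iu}]$ of $b_i=y_{i1}-\br_1^\top\bx_i$ over the $(\alpha,\beta)$ rectangle in constant time via a closed-form trigonometric analysis (its Proposition on the range of $b_i$), and only then relaxes by letting each $b_i$ vary independently in its interval, yielding a piecewise-linear univariate problem with $4N$ breakpoints solved by the same sweep. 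Your great-circle bound $|\br_1^\top\bx_i-(\br_1^c)^\top\bx_i|\le\psi\|\bx_i\|$ replaces that exact interval by the (generally larger) interval $[c_i-\Delta_i,\,c_i+\Delta_i]$; it is a valid lower bound and still shrinks to zero with the branch, so branch-and-bound converges, but it is strictly looser per branch (e.g.\ when $\bx_i$ is nearly parallel to $\br_1^c$ the true variation of $\br_1^\top\bx_i$ is second order in $\psi$ while $\Delta_i$ is first order), which in practice means more branch subdivisions; what it buys you is avoiding the case analysis of the exact range computation, at the cost of having to bound $\psi$ carefully near the poles of the $(\alpha,\beta)$ chart, which you correctly flag. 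Two bookkeeping slips that do not affect the method: the slope jumps of $\min\{|a_i-t_1|,\xi_{i1}\}$ at its three breakpoints are $-1,+2,-1$ (they must sum to zero), and those of your relaxed summand, read left to right, are $-1,+1,+1,-1$ rather than $-1,+1,-1,+1$.
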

Behind Theorem \ref{theorem:UB-LB} are two novel, non-trivial algorithms that we propose to compute the bounds~(see the appendix for algorithmic details), and they are the game changers that enable \ref{eq:tear-1} to be solved highly efficiently. The counterparts of Theorem \ref{theorem:UB-LB} for \ref{eq:CM} and \ref{eq:TLS} are also derived, but they lead to less efficient branch-and-bound solvers than what we proposed for \ref{eq:tear-1}. We consolidate this claim with theoretical and numerical insights in \cref{subsection:TEAR-CM-TLS}.

\subsection{TEAR Versus CM and TLS}\label{subsection:TEAR-CM-TLS}
Previously, in \cref{subsection:UB-LB}, we advocated performing branch-and-bound with tears. We now validate this design choice. 
Specifically, we follow exactly the same logic as in \cref{subsection:tear-off,subsection:UB-LB} to derive branch-and-bound methods for \ref{eq:CM} and \ref{eq:TLS}, and then compare them to \ref{eq:TEAR}. Again, at the heart of the derivation is to compute the upper and lower bounds (\cf, \cref{subsection:UB-LB}). Some details are given below.

\begin{figure}[!t]
    \centering
    \includegraphics[width=0.478\textwidth]{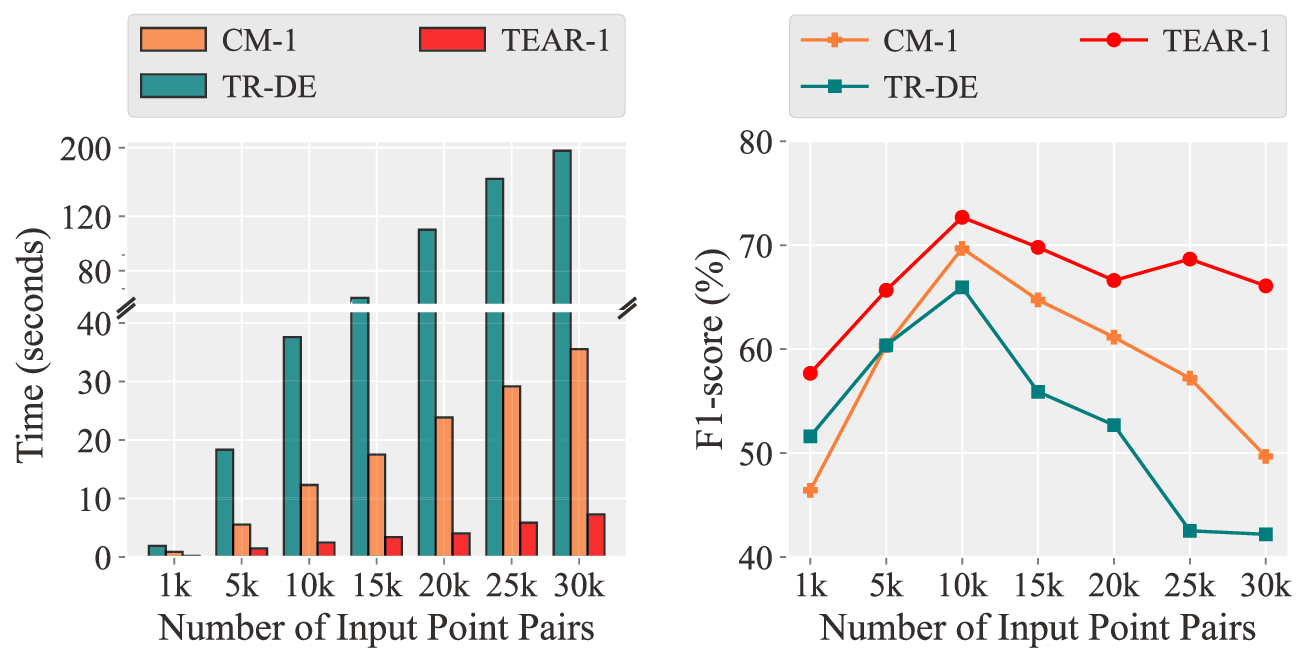}
    \\[-0.7em]
    \makebox[0.235\textwidth]{\footnotesize \quad \quad \ (a)}
    \makebox[0.235\textwidth]{\footnotesize \quad \quad \quad (b)}
    \\[0.2em]
    \includegraphics[width=0.478\textwidth]{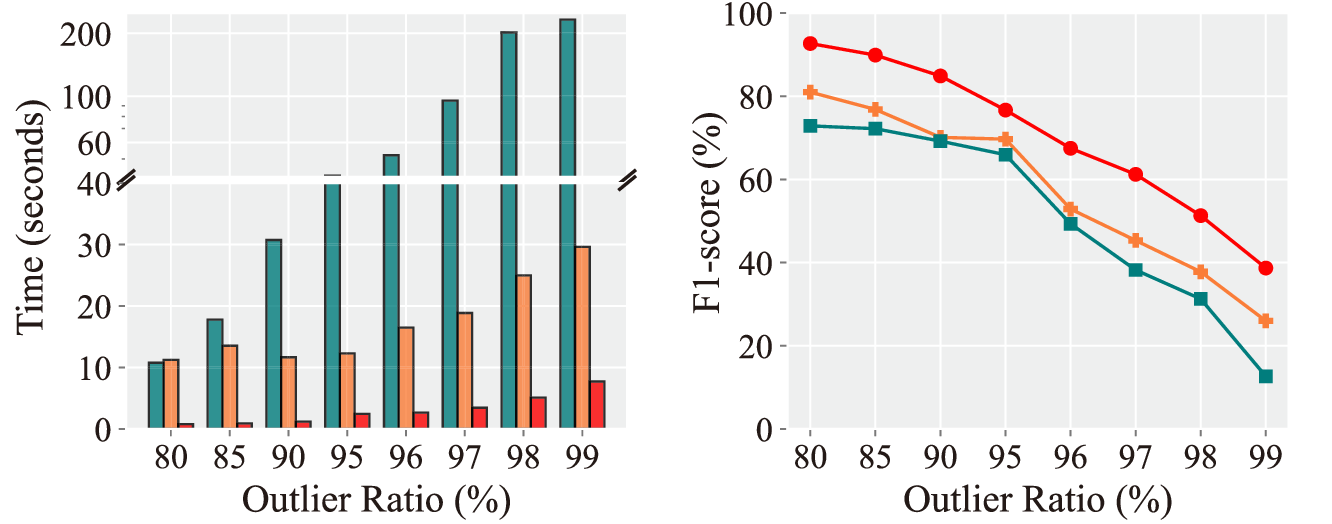}
    \\[-0.5em]
    \makebox[0.235\textwidth]{\footnotesize \quad \quad \ (c)}
    \makebox[0.235\textwidth]{\footnotesize \quad \quad \quad (d)}
    \\[-0.8em]
    \caption{Solve \ref{eq:tear-1} and \ref{eq:CM-1} via branch-and-bound on random, synthetic, noisy data (\cref{subsection:TEAR-CM-TLS}). TR-DE \cite{Chen-CVPR2022c}, a recent branch-and-bound method, is also compared. Outlier ratio: 95$\%$ (\cref{fig:TEAR-CM}a, \cref{fig:TEAR-CM}b); $N=10000$ (\cref{fig:TEAR-CM}c, \cref{fig:TEAR-CM}d). 30 trials.}
    \label{fig:TEAR-CM}
\end{figure}

\myparagraph{TEAR Versus CM} Recall that TR-DE \cite{Chen-CVPR2022c} is a branch-and-bound method consisting of searching a $3$-dimensional space to optimize some consensus maximization objective comparable to \ref{eq:tear-1}, with upper and lower bounds computed in $O(N)$ time. However, similarly in \cref{subsection:UB-LB}, one could optimize the same objective by searching instead in a $2$-dimensional space, with upper and lower bounds computable in $O(N\log N)$ time; this is in fact the core idea of the recent works \cite{Zhang-arXiv2023,Li-arXiv2023} to accelerate branch-and-bound. We contextualize this idea by implementing our version to solve a consensus maximization counterpart of \ref{eq:tear-1}:
\begin{equation}\label{eq:CM-1}
    \max_{\substack{\br_1\in \bbS^2,t_1\in \bbR } } \sum_{i=1}^N \bm{1} \left( | y_{i1} - \br_1^\top \bx_i - t_1| \leq \xi_{i1}  \right). 
    \tag{CM-1}
\end{equation}
\cref{fig:TEAR-CM} visualizes the differences between these methods: TR-DE is $5$ times slower than \ref{eq:CM-1} as \ref{eq:CM-1} searches a space of $1$ dimension lower; \ref{eq:CM-1} is more than $5$ times slower than \ref{eq:tear-1} as \ref{eq:tear-1} better leverages numerical values of the residuals than the binary truncation of \ref{eq:CM-1}. Finally, \cref{fig:TEAR-CM} shows \ref{eq:tear-1} has higher F1-scores, meaning that \ref{eq:tear-1} is more robust to outliers than \ref{eq:CM-1}.
\begin{remark}
    Since TR-DE \cite{Chen-CVPR2022c} was implemented with a single thread, experiments in \cref{subsection:TEAR-CM-TLS} (\cref{fig:TEAR-CM,fig:TEAR-TLS}) all use a single thread for fair comparison. In experiments of all other sections, we use a parallel implementation of $\tear$ that is multiple times faster than the single thread version.
\end{remark}

\myparagraph{TEAR Versus TLS} Similarly, we solve via branch-and-bound the TLS counterpart of \ref{eq:tear-1}:
\begin{equation}\label{eq:TLS-1}
    \min_{\substack{\br_1\in \bbS^2,t_1\in \bbR } } \sum_{i=1}^N \min\left\{ (y_{i1} - \br_1^\top \bx_i - t_1)^2,\ \xi_{i1}^2 \right\}.
    \tag{TLS-1}
\end{equation}
To our knowledge, branch-and-bound algorithms have not been applied to \ref{eq:TLS-1}, so we derive lower and upper bounds for such implementation and specify the computation complexities below (compare this to \cref{theorem:UB-LB}):
\begin{proposition}\label{prop:TLS-1-UB-LB}
    At each iteration of the branch-and-bound method for solving \ref{eq:TLS-1}, we can compute an upper bound in $O(N\log N)$ time via solving 
    \begin{equation}\label{eq:TLS-1-UB}
        \min_{\substack{t_1\in \bbR } } \sum_{i=1}^N \min\left\{ (y_{i1} - \br_1^\top \bx_i - t_1)^2,\ \xi_{i1}^2 \right\},
    \end{equation}
    and we can compute a lower bound in $O(N^2)$ time. 
\end{proposition}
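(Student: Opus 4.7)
I will prove the two complexity claims separately, closely paralleling the $\tear$ strategy of \cref{subsection:UB-LB} but adapting it to the squared residuals in \ref{eq:TLS-1}.

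\emph{Upper bound.} With $\br_1$ fixed, let $a_i := y_{i1} - \br_1^\top \bx_i$, so \eqref{eq:TLS-1-UB} reduces to the scalar program $\min_{t}\,\sum_{i=1}^N \min\{(a_i-t)^2,\xi_{i1}^2\}$. Each summand is a quadratic capped at $\xi_{i1}^2$ with breakpoints $t = a_i \pm \xi_{i1}$, so the full objective is piecewise-quadratic with $2N$ breakpoints. I sort these breakpoints in $O(N\log N)$ time and sweep left-to-right, maintaining the four running scalars $|A|$, $\sum_{i\in A} a_i$, $\sum_{i\in A} a_i^2$, and $\sum_{i\notin A}\xi_{i1}^2$, where $A$ is the current ``non-truncated'' index set. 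Each event flips one index's membership in $A$ and updates these scalars in $O(1)$. On the current piece the objective equals $|A|\,t^2 - 2\bigl(\sum_{i\in A} a_i\bigr)t + C$ for a piece-constant $C$, so the minimizer is either the clipped point $t^\star = \bigl(\sum_{i\in A} a_i\bigr)/|A|$ (when it lies in the piece) or a piece endpoint. Taking the best value across all $2N{+}1$ pieces solves \eqref{eq:TLS-1-UB} in overall $O(N\log N)$ time.

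\emph{Lower bound.} For a branch $[\alpha_1,\alpha_2]\times[\beta_1,\beta_2]$, standard spherical-patch inequalities (applied componentwise to $\br_1 = [\sin\beta\cos\alpha,\sin\beta\sin\alpha,\cos\beta]^\top$) yield per-index intervals $\ell_i \le \br_1^\top \bx_i \le u_i$ valid for every admissible $\br_1$. Setting $m_i := (u_i+\ell_i)/2$ and $h_i := (u_i-\ell_i)/2$, a short case check shows that the minimum of $(y_{i1}-\br_1^\top\bx_i-t)^2$ over $\br_1$ in the patch is at least $g_i(t) := \bigl(\max\{0,\,|y_{i1}-m_i-t|-h_i\}\bigr)^2$. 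Hence $\min_{t}\sum_{i=1}^N\min\{g_i(t),\xi_{i1}^2\}$ is a valid lower bound for \ref{eq:TLS-1} on the branch. Each $g_i$ is piecewise-quadratic with at most four breakpoints---two where the flat bottom meets a quadratic arm and two where an arm meets the truncation level $\xi_{i1}^2$---so the sum has $O(N)$ breakpoints in $t$ and $O(N)$ pieces. I sort the breakpoints and, within each piece, take as candidate minimizers the two piece endpoints together with the interior stationary point of the current quadratic sum. Evaluating the objective at each of the $O(N)$ candidates by direct summation costs $O(N)$ per candidate, which delivers the claimed $O(N^2)$ bound.

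\emph{Main obstacle.} The delicate step is the lower bound. In the $\ell_1$ case of \cref{theorem:UB-LB} each $g_i$ is piecewise \emph{linear}, so a global optimum sits at a breakpoint and a single $O(1)$-update running sum reduces the sweep to $O(N\log N)$. Here each $g_i$ is a two-sided flat-bottomed quadratic well, and the quadratic contributing on a given piece depends on whether $t$ is in the left arm, the flat bottom, or the right arm of that well---effectively two disjoint signed subsets would need to be tracked simultaneously while their individual quadratic stationary points are also monitored. Organizing this bookkeeping cleanly is error-prone, and the straightforward ``re-evaluate at every candidate'' strategy described above is precisely what loses a factor of $N$ relative to $\tear$ and lands at the $O(N^2)$ complexity stated in the proposition, consistent with the comparative message of \cref{subsection:TEAR-CM-TLS}.
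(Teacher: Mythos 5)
Your proposal is correct and follows essentially the same route as the paper: for the upper bound, a sorted sweep over the $2N$ truncation breakpoints with $O(1)$ incremental updates of the active quadratic's statistics and the clipped-mean minimizer per piece; for the lower bound, the interval relaxation of $\br_1^\top\bx_i$ over the branch (your $g_i$ is exactly the paper's flat-bottomed $L_{TLS}(t_1,i)$), followed by a piecewise-quadratic minimization with $O(N)$ work on each of the $O(N)$ pieces. Your closing diagnosis of why the lower bound loses a factor of $N$ also matches the paper's remark that it can currently only achieve $O(N^2)$ there.
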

\begin{remark}
    \cite[Algorithm 2]{Yang-T-R2021} is not necessarily optimal to \eqref{eq:TLS-1-UB} as it only finds the least-squares solution at a maximum consensus set, while an $O(N\log N)$ optimal algorithm was described in \cite{Liu-JCGS2019}. Computing lower bounds entails solving a harder problem than \eqref{eq:TLS-1-UB} (\eg, compare \eqref{eq:tear-1-UB} and \eqref{eq:tear-1-LB}), and for the moment we are only able to do so in $O(N^2)$ time.
\end{remark}
\cref{fig:TEAR-TLS} compares the efficiency of solving \ref{eq:tear-1} and \ref{eq:TLS-1}. \cref{fig:TEAR-TLS}a shows \ref{eq:TLS-1} takes slightly fewer iterations than \ref{eq:tear-1}, meaning that the bounds for \ref{eq:TLS-1} are slightly tighter. On the other hand, since computing bounds for \ref{eq:TLS-1} is more time-consuming (\cf Theorem \ref{theorem:UB-LB}, Proposition \ref{prop:TLS-1-UB-LB}), the overall running time of \ref{eq:TLS-1} grows more rapidly than \ref{eq:tear-1} as the number of point pairs increases. Finally, \cref{fig:TEAR-TLS}b shows \ref{eq:TLS-1} and \ref{eq:tear-1} have similar F1-scores, suggesting they are comparable in rejecting outliers.

\begin{figure}[!t]
    \centering
    \includegraphics[width=0.478\textwidth]{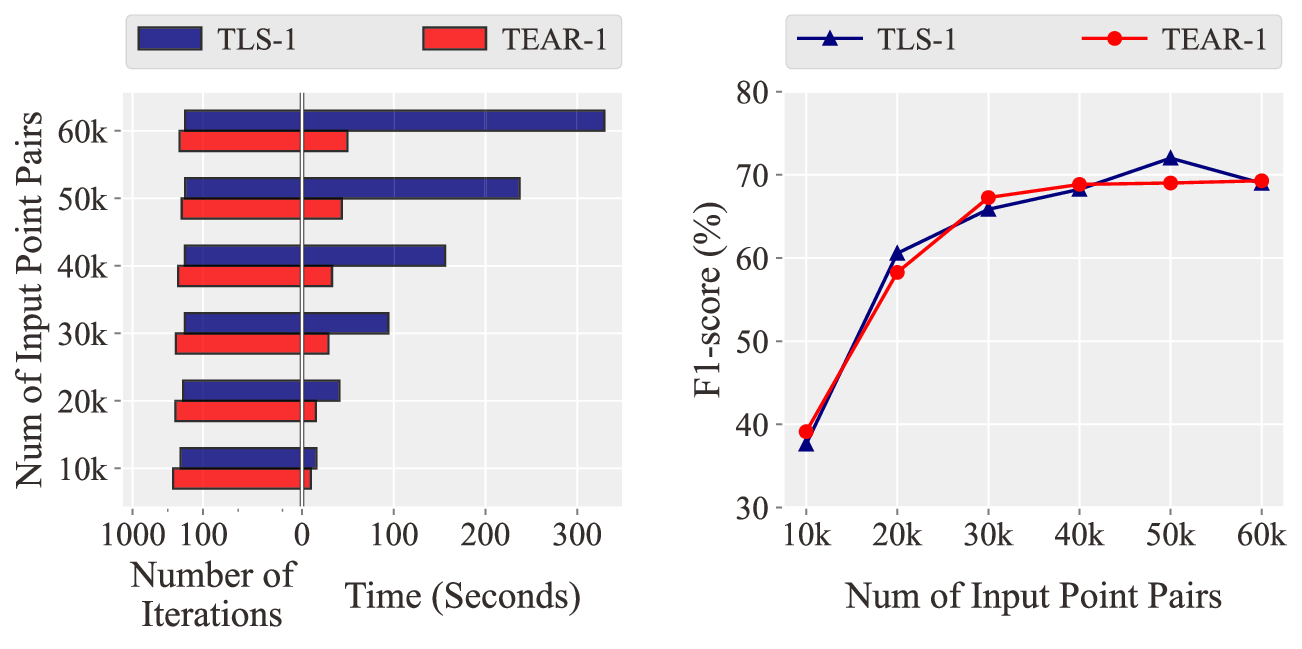}
    \\[-0.8em]
    \makebox[0.235\textwidth]{\footnotesize \quad \quad \ (a)}
    \makebox[0.235\textwidth]{\footnotesize \quad \quad \quad (b)}
    \\[-0.7em]
    \caption{Solve \ref{eq:tear-1} and \ref{eq:TLS-1} via branch-and-bound on random, synthetic, noisy data (\cref{subsection:TEAR-CM-TLS}). Outlier ratio: 99$\%$. 30 trials.}
    \label{fig:TEAR-TLS}
\end{figure}

\section{Standard Experiments}\label{section:standard-experiments}
Here we conduct experimental comparisons that are standard as in prior works. These include experiments on synthetic data (\cref{sec:syn_exper}) and real datasets (\cref{sec:real_exper}).

\myparagraph{Setup} For a fair comparison, we evaluate all methods on a laptop equipped with an Intel Core i7-10875H@2.3 GHz and 16GB of RAM. For methods implemented in PyTorch, we slightly adjust their codes so that they run on CPUs; note that this typically only affects running times, not accuracy. We run all methods in Python or through the provided Python interfaces of the C++ codes. Since the Python and C++ codes of MAC \cite{Zhang-CVPR2023} behave differently, we compare both versions, namely MAC (Python) and MAC (C++). The maximum number of iterations for RANSAC is set to 100k.

\myparagraph{Evaluation Metrics} Following \cite{Bai-CVPR2021, Chen-CVPR2022b, Chen-CVPR2022c}, we use $5$ metrics for evaluation: Registration Recall~(RR), F1-score (F1), Rotation Error (RE), Translation Error (TE), and Time. RR denotes the percentage of successful registration where the RE and TE are below specific thresholds, \eg, ($15^{\circ},\ 30$cm) for 3DMatch dataset, ($5^{\circ},\ 60$cm) for KITTI dataset, and ($3^{\circ},\ 50$cm) for ETH dataset. F1 denotes the harmonic mean of precision and recall~\cite{Bai-CVPR2021}. Finally, we report the peak memory consumption of an algorithm during its execution.




\begin{figure}[!t]
    \centering
    \includegraphics[width=0.478\textwidth]{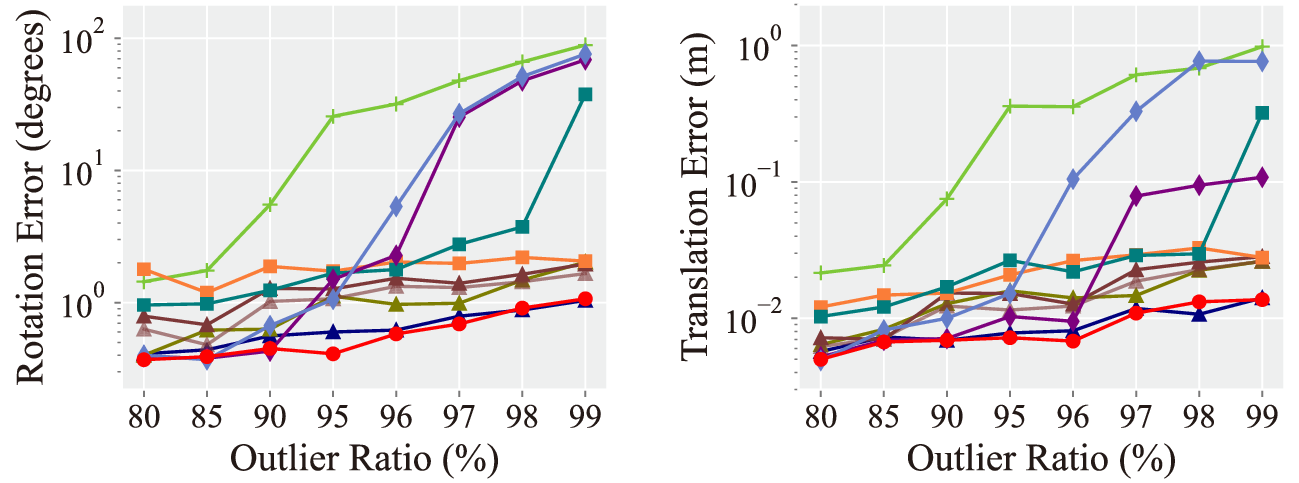}
    \\[-0.5em]
    \makebox[0.235\textwidth]{\footnotesize \quad \quad \ (a)}
    \makebox[0.235\textwidth]{\footnotesize \quad \quad \quad (b)}
    \\[0.4em]
    \includegraphics[width=0.478\textwidth]{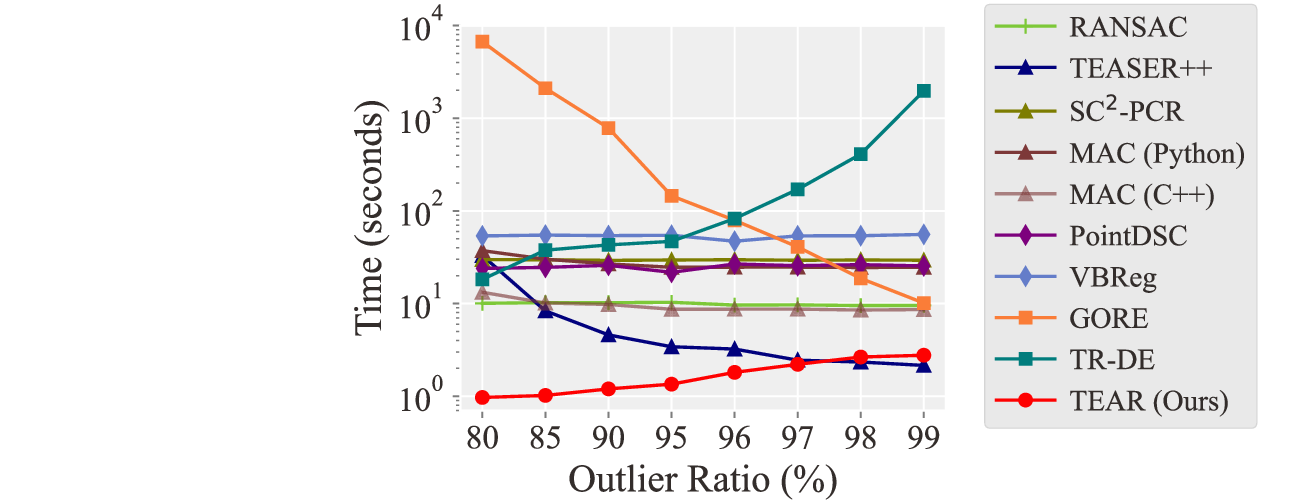}
    \\[-0.4em]
    \makebox[0.24\textwidth]{\footnotesize \quad \quad \ \ (c)}
    \\[-0.6em]
    \caption{Synthetic Experiments (\cref{sec:syn_exper}).  $N=10000$, 30 trials. }
    \label{fig:robust_outlier}
\end{figure}

\subsection{Experiments on Synthetic Data}\label{sec:syn_exper}
\myparagraph{Data Generation} Following~\cite{Bustos-TPAMI2018, Chen-CVPR2022b}, we generate synthetic point pairs as follows. First, we randomly sample $N$ points $\{\bx_i\}_{i=1}^N$ from $\mathcal{N}(0, \bI_3)$. Then, we apply a random rotation $\bR^*$ and translation $\bt^*$ to each $\bx_i$ and add some Gaussian noise $\bm{\epsilon}_i \sim \cN(0,\sigma^2\bI_3)$ of variance $\sigma=0.01$; that is $\by_i=\bR^*\bx_i + \bt^* + \bm{\epsilon}_i$. This gives $N$ (noisy) inlier pairs $\{\bx_i,\by_i\}_i^N$. Next, to generate outliers, we replace a fraction of $\by_i$'s with random Gaussian points sampled from $\mathcal{N}(0, \tau^2\bI_3)$, where $\tau$ is set to $1.67$. Almost all methods require an inlier threshold $\xi_i$, and we set it to $5.54\sigma$.

\myparagraph{Results} $\tear$ is shown to be scalable~(\cref{fig:storage_time}), accurate (\cref{fig:robust_outlier}a, \cref{fig:robust_outlier}b) and more efficient in most cases (\cref{fig:robust_outlier}c).

\subsection{Experiments on Real Data}
\label{sec:real_exper}
Here we report experimental results on three real-world datasets, namely 3DMatch (indoor scenes) \cite{Zeng-CVPR2017}, KITTI (outdoor scenes) \cite{Geiger-IJRR2013}, and ETH (outdoor, large-scale) \cite{Theiler-ISPRS2014}. 


\myparagraph{3DMatch} We follow \cite{Yang-T-R2021, Chen-CVPR2022b} to use 3DSmoothNet~\cite{Gojcic-CVPR2019} as the feature descriptor for the 3DMatch dataset~\cite{Zeng-CVPR2017}. The results are shown in \cref{tab:3DMatch_comparison}. Note that MAC (Python) runs out of the $16$GB memory, so we compared its C++ version.


\myparagraph{KITTI} Following \cite{Bai-CVPR2021, Zhang-CVPR2023}, we use FPFH \cite{Rusu-ICRA2009} as the feature descriptor of the KITTI dataset \cite{Geiger-IJRR2013}. In  \cref{tab:kitti_comparison}, MAC (Python) works well, and MAC (C++) runs out of memory. 

\myparagraph{ETH} Following \cite{Theiler-ISPRS2014,Li-arXiv2023}, we use ISS \cite{Zhong-ICCVW2009} and FPFH \cite{Rusu-ICRA2009} as feature descriptors for the ETH dataset. This time, in \cref{tab:eth_comparison}, more methods run out of memory including both versions of MAC and two deep learning methods, PointDSC and VBReg. This is because, after feature matching, there are still a few relatively large point clouds~(\eg, $N >$ 10k) with high outlier ratios, making registration challenging.

\myparagraph{Analysis of Results} That MAC often runs out of memory is not quite surprising, as it stores all maximal cliques besides the consistency graph. In theory, in the worst case, a graph can have as many as $3^{N/3}$ maximal cliques \cite{Moon-IJM1965,Tomita-TCS2006}. In our experiments of \cref{tab:kitti_comparison}, MAC (Python) typically produces more than $10^4$ cliques, and in \cref{tab:3DMatch_comparison}, MAC (C++) typically produces more than $10^5$, or occasionally a few million, maximal cliques ($N=5000$ in both tables).

That TR-DE \cite{Chen-CVPR2022c} is often slower than $\tear$ validates our design and implementation again: Even though TR-DE is also a branch-and-bound method (similarly to $\tear$), $\tear$ is up to $70$ times faster than TR-DE on real data.

That other methods perform really well---once given the memory they demand---might have been established knowledge in the literature (\eg, TEASER++ \cite{Yang-T-R2021}, PointDSC~\cite{Bai-CVPR2021}, SC$^2$-PCR~\cite{Chen-CVPR2022b}). But the drawback of their unscalability manifests itself when memory is insufficient or point clouds encountered are huge; we re-emphasize this point in \cref{section:huge-scale-experiments}.

Overall, on the three standard datasets, we find $\tear$ to be competitive in registration accuracy, while it runs a few times faster than the second fastest method.




\begin{table}[!t]
    \centering
    \caption{Results on 3DMatch (3DSmoothNet descriptors). }
    \vspace{-0.8em}
    \footnotesize
    \renewcommand{\tabcolsep}{3.2pt}
    \renewcommand\arraystretch{1.25}
    \begin{tabular}{l|ccccc}
        \Xhline{1pt}
         Method & RR($\%$)$\uparrow$ & F1($\%$)$\uparrow$ & RE($^{\circ}$)$\downarrow$ & TE(cm)$\downarrow$  & Time(s)$\downarrow$\\ 
        \Xhline{0.5pt}
        RANSAC~\cite{Fischler-C-ACM1981} & 92.30 & 87.95 & 2.59 & 7.91  & \underline{2.52}\\
        TEASER++~\cite{Yang-T-R2021} & 92.05 & 87.42 & 2.23 & 6.62 & 3.77 \\
        SC$^2$-PCR~\cite{Chen-CVPR2022b} & 94.45 & 89.23 & 2.19 & \textbf{6.40}  & 4.56 \\
        MAC (Python)~\cite{Zhang-CVPR2023} & \multicolumn{5}{c}{out-of-memory} \\
        MAC (C++)~\cite{Zhang-CVPR2023} & \textbf{94.57} & \underline{89.48} & 2.21 & \underline{6.52} & 6.89 \\
        PointDSC~\cite{Bai-CVPR2021} & 93.65 & 89.07 & \underline{2.17} & 6.75  & 5.28 \\
        VBReg~\cite{Jiang-CVPR2023} & 37.09  & 18.07 & 6.15 & 15.65 & 8.07\\
        TR-DE~\cite{Chen-CVPR2022c} & 91.37 & 86.99 & 2.71 & 7.62 & 12.76 \\
        $\tear$ (Ours) & \underline{94.52} & \textbf{89.65} & \textbf{2.06} & 6.55  & \textbf{1.26} \\
        \Xhline{1pt}
    \end{tabular}
    \label{tab:3DMatch_comparison}
\end{table}

\begin{table}[!t]
    \centering
    \caption{Results on the KITTI dataset (FPFH descriptors).}
    \vspace{-0.8em}
    \footnotesize
    \renewcommand{\tabcolsep}{3.2pt}
    \renewcommand\arraystretch{1.25}
    \begin{tabular}{l|ccccc}
        \Xhline{1pt}
         Method & RR($\%$)$\uparrow$ & F1($\%$)$\uparrow$ & RE($^{\circ}$)$\downarrow$ & TE(cm)$\downarrow$ & Time(s)$\downarrow$\\ 
        \Xhline{0.5pt}
        RANSAC~\cite{Fischler-C-ACM1981} & 95.68 & 81.23 & 1.06 & 23.19  & 3.79 \\
        TEASER++~\cite{Yang-T-R2021} & 97.84 & 93.73 & \underline{0.43} & 8.67  & \underline{0.36} \\
        SC$^2$-PCR~\cite{Chen-CVPR2022b} & \textbf{99.64} & \textbf{94.26} & \textbf{0.39} & \textbf{8.29} & 4.33 \\
        MAC (Python)~\cite{Zhang-CVPR2023} & 94.95 & 89.52 & 0.52 & 10.26 & 4.53 \\
        MAC (C++)~\cite{Zhang-CVPR2023} & \multicolumn{5}{c}{out-of-memory} \\
        PointDSC~\cite{Bai-CVPR2021} & 98.20 & 92.71 & 0.57 & 8.67 & 6.20 \\
        VBReg~\cite{Jiang-CVPR2023}  & 98.92 & 92.69  & 0.45 & \underline{8.41} & 8.20\\
        TR-DE~\cite{Chen-CVPR2022c} & 96.76 & 87.20 & 0.90 & 15.63 & 8.66 \\
        $\tear$ (Ours) & \underline{99.10} & \underline{93.85} & \textbf{0.39} & 8.62  & \textbf{0.25} \\
        \Xhline{1pt}
    \end{tabular}
    \label{tab:kitti_comparison}
\end{table}

\begin{table}[!t]
    \centering
    \caption{Results on the ETH dataset (ISS + FPFH descriptors). }
    \vspace{-0.8em}
    \footnotesize
    \renewcommand{\tabcolsep}{3.85pt}
    \renewcommand\arraystretch{1.25}
    \begin{tabular}{l|ccccc}
        \Xhline{1pt}
        Method & RR($\%$)$\uparrow$& F1($\%$)$\uparrow$  & RE($^{\circ}$)$\downarrow$ & TE(cm)$\downarrow$ & Time(s)$\downarrow$\\ 
        \Xhline{0.5pt}
        RANSAC~\cite{Fischler-C-ACM1981} & 69.05 & 65.17 & 0.44 & 10.31 & 6.12\\
        TEASER++~\cite{Yang-T-R2021} & \textbf{96.43} & \underline{92.23} & \underline{0.29} & \underline{5.84} & \underline{0.85} \\
        SC$^2$-PCR~\cite{Chen-CVPR2022b} & \underline{91.67} & 90.34 & 0.32 & 6.25 & 12.93 \\
        MAC (both)~\cite{Zhang-CVPR2023} &  \multicolumn{5}{c}{out-of-memory} \\ 
        PointDSC~\cite{Bai-CVPR2021} & \multicolumn{5}{c}{out-of-memory} \\
        VBReg~\cite{Jiang-CVPR2023}  & \multicolumn{5}{c}{out-of-memory} \\
        TR-DE~\cite{Chen-CVPR2022c} & 88.09 & 73.40 & 0.62 & 16.49  & 7.57 \\
        $\tear$ (Ours) & \textbf{96.43}  & \textbf{93.14} & \textbf{0.25} & \textbf{5.71} & \textbf{0.38} \\
        \Xhline{1pt}
    \end{tabular}
    \label{tab:eth_comparison}
\end{table}

\begin{table*}[!t]
    \centering
    \caption{Experiments with huge-scale point pairs at extremely high outlier ratios  on the Stanford 3D scanning dataset~\cite{Curless-CCGIT1996}. We report rotation errors in degrees $|$ translation errors in centimeter $|$ running times in seconds of various methods. Average over 20 trials.}
    \vspace{-0.8em}
    \footnotesize
    \renewcommand{\tabcolsep}{7pt}
    \renewcommand\arraystretch{1.25}
    \begin{tabular}{c|ccccc}
        \Xhline{1pt}
        \makecell{Point Cloud Name} & \textit{Armadillo} & \textit{Happy Buddha} & \textit{Asian Dragon} & \textit{Thai Statue} & \textit{Lucy}\\ 
        \makecell{\# of Input Point Pairs (Outlier Ratio)} & $10^5$ ($99\%$) & $5\times10^5$ ($99.2\%$) & $10^6$ ($99.4\%$) & $4\times10^6$ ($99.6\%$) & $10^7$ ($99.8\%$)\\ 
        \Xhline{0.5pt}
        \textit{Consistency Graph}-based~\cite{Yang-T-R2021, Chen-CVPR2022b, Zhang-CVPR2023} & out-of-memory & & & &  \\
        \textit{Deep Learning}-based~\cite{Bai-CVPR2021, Jiang-CVPR2023} & out-of-memory & & & &  \\
        RANSAC~\cite{Fischler-C-ACM1981} & 53.1 $|$ 23.1 $|$ 95.9 & 30.7 $|$ 15.8 $|$ 582 & 36.5 $|$ 22.7 $|$ 1179 & 37.1 $|$ 24.6 $|$ 6125 & $\geq$ 8 hours \\
        FGR~\cite{Zhou-ECCV2016} & 57.1 $|$ 39.1 $|$ 2.48 & 84.1 $|$ 23.7 $|$ 19.3 & 62.1 $|$ 19.7 $|$ 39.8 & 79.7 $|$ 15.2 $|$ 175 &  88.9 $|$ 11.5 $|$ 449\\
        GORE~\cite{Bustos-TPAMI2018}  & 0.67 $|$ 0.52 $|$ 6592 & $\geq$ 12 hours & & &  \\ 
        TR-DE~\cite{Chen-CVPR2022c} & 36.5 $|$ 16.9 $|$ 4658 & $\geq$ 9 hours & & & \\
        $\tear$ (Ours) & 0.51 $|$ 0.25 $|$ 12.7 & 0.23 $|$ 0.13 $|$ 119 & 0.14 $|$ 0.12 $|$ 356 & 0.11 $|$ 0.08 $|$ 1013  & 0.07 $|$ 0.06 $|$ 1972 \\
        \Xhline{1pt}
    \end{tabular}
    \label{tab:huge_scale}
\end{table*}

\section{Huge-Scale Experiments}\label{section:huge-scale-experiments}
Inspired by \cite{Peng-CVPR2022}, here we perform \textit{huge-scale} experiments using 5 objects (\textit{Armadillo}, \textit{Happy Buddha},  \textit{Asian Dragon}, \textit{Thai Statue}, \textit{Lucy}) from the Stanford 3D scanning dataset \cite{Curless-CCGIT1996}. The number of points in each object ranges from $10^5$ to $10^7$. For each object, we resize it so that all points lie in $[0,1]^3$ and treat it as the source point cloud $\{\bx_i\}_{i=1}^N$. We use the same procedure of \cref{sec:syn_exper} to generate the target point cloud $\{\by_i\}_{i=1}^N$ with a controlled number of random outliers.

\cref{tab:huge_scale} shows the consistency graph methods and deep learning methods are unscalable, RANSAC \cite{Fischler-C-ACM1981} and FGR \cite{Zhou-ECCV2016} are inaccurate at extreme outlier ratios, GORE \cite{Bustos-TPAMI2018} and TR-DE \cite{Chen-CVPR2022c} are inefficient. \cref{fig:huge-repre} visualizes the results in \cref{tab:huge_scale} for \textit{Asian Dragon} and \textit{Lucy}, where $\tear$ is the only method that aligns the huge-scale point clouds accurately.

\begin{figure}[!t]
    \centering
    \includegraphics[width=0.45\textwidth]{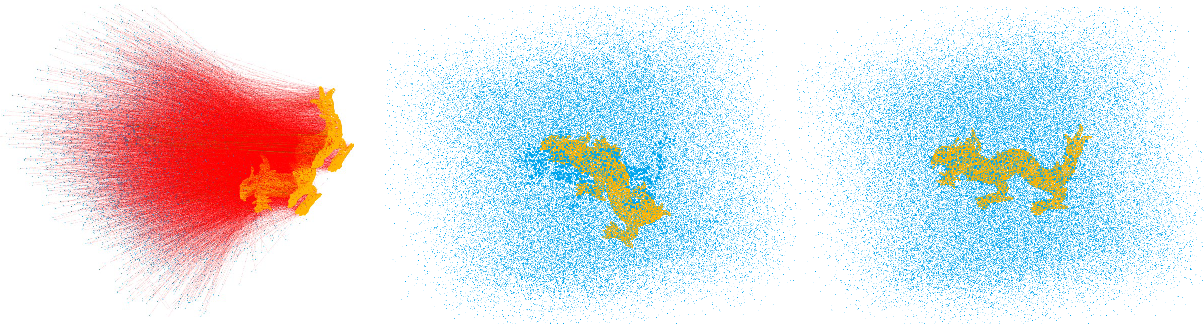}
    \\
    \makebox[0.14\textwidth]{\footnotesize $10^6$ (99.4$\%$)}
    \makebox[0.14\textwidth]{\footnotesize \makecell{RANSAC~\cite{Fischler-C-ACM1981} \\ 51.6$^{\circ}$ $|$ 18.6cm $|$ 1263s}}
    \makebox[0.14\textwidth]{\footnotesize \ \ \quad \makecell{$\tear$ (Ours) \\ 0.13$^{\circ}$ $|$ 0.11cm $|$ 289s}}
    \\[0.2em]
   \includegraphics[width=0.45\textwidth]{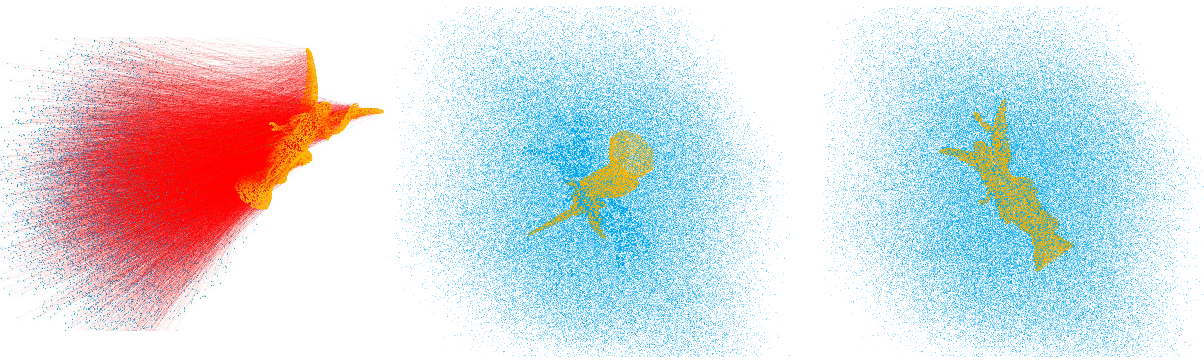}
    \\
    \makebox[0.14\textwidth]{\footnotesize $10^7$ (99.8$\%$)}
    \makebox[0.14\textwidth]{\footnotesize \makecell{FGR~\cite{Zhou-ECCV2016} \\ 91.7$^{\circ}$ $|$ 15.1cm $|$ 514s}}
    \makebox[0.14\textwidth]{\footnotesize \ \ \quad \makecell{$\tear$ (Ours) \\ 0.08$^{\circ}$ $|$ 0.05cm $|$ 1813s}}
    \\[-0.8em]
    \caption{Column 1: input point pairs (Top: \textit{Asian Dragon}, Bottom: \textit{Lucy}). Column 2: Outputs of RANSAC \cite{Fischler-C-ACM1981} and FGR \cite{Zhou-ECCV2016} (Format: rotation error $|$ translation error $|$ running time). Column 3: $\tear$ succeeds in registration for challenging scenarios.}
    \label{fig:huge-repre}
\end{figure}

Inasmuch as downsampling the point clouds would enable other methods to be applied, we perform such an experiment for these methods to compare with $\tear$. In particular, we take the $10^7$ point pairs generated from \textit{Lucy} and downsample them to $10^4$ point pairs, which are given as inputs to other methods. In \cref{fig:huge_down_rota}, the rotation errors of these methods are large (the translation errors are shown in the appendix). Indeed, downsampling would throw away inliers, making the subsequent registration problem more challenging. In fact, in experiments of \cref{fig:huge_down_rota}b, we found it was not just that the total number of inliers inevitably decreased after downsampling; the outlier ratio could even grow from $95\%$ to averagely $98.17\%$. In contrast, since $\tear$ is capable of operating the original point cloud, it delivers lower errors than other methods that perform downsampling. 

\begin{figure}[!t]
    \centering
    \includegraphics[width=0.45\textwidth]{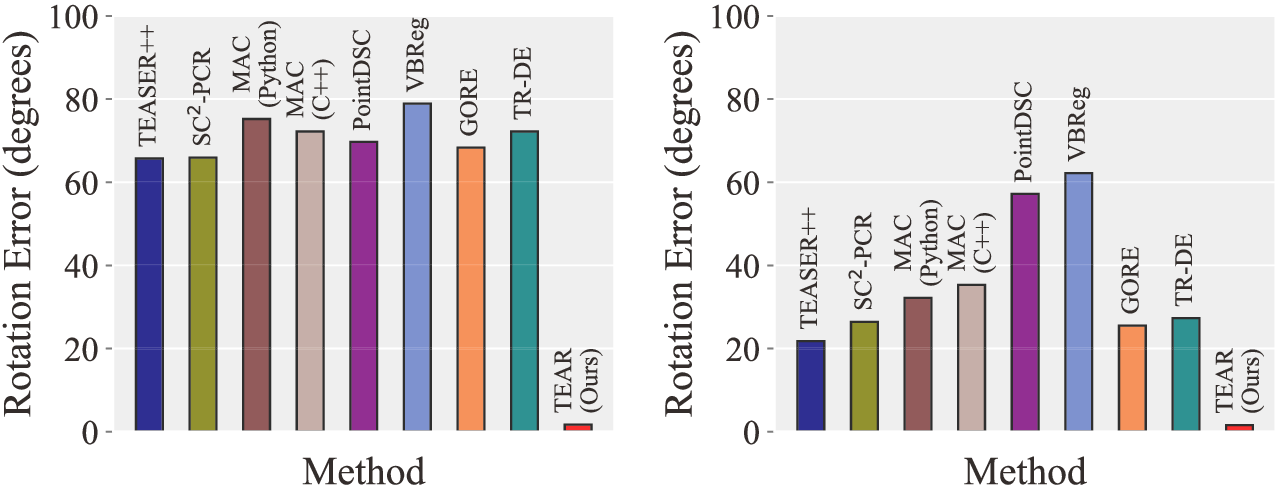}
    \\[-0.5em]
    \makebox[0.235\textwidth]{\footnotesize \quad \quad \quad (a)}
    \makebox[0.235\textwidth]{\footnotesize \quad \quad (b)}
    \\[-0.8em]
    \caption{Average rotation errors of other methods in \cref{tab:huge_scale} taking  as inputs the $10^4$ points downsampled from \textit{Lucy} that originally has $10^7$ point pairs (\cref{fig:huge_down_rota}a: $99.8\%$ outliers; \cref{fig:huge_down_rota}b: $95\%$ outliers). 
    $\tear$ runs on the original $10^7$ input point pairs. 20 trials.}
    \label{fig:huge_down_rota}
\end{figure}

\section{Conclusion and Future Work}
In the paper, we showed that $\tear$ stands on the simple principle of branch-and-bound, stands with state-of-the-art methods at the same level of accuracy, stands in contrast to other slower branch-and-bound methods, and stands out as a scalable method for outlier-robust 3D registration. 

We found it exciting to exhibit a case where branch-and-bound, a technique famously known for its global optimality guarantees and infamously known for its being slow, can actually be competitive in outlier-robust 3D registration. The key ideas of achieving so include using \textit{Truncated Entry-wise Absolute Residuals} \eqref{eq:TEAR} as the robust loss, deriving tight upper and lower bounds based on \ref{eq:TEAR}, and engineering an efficient implementation. We look forward to extending these ideas to other geometric vision problems, for example, absolute pose estimation (2D-3D registration).

\vspace{0.6em}
\noindent \textbf{Acknowledgements.} This work is supported in part by the Shenzhen Portion of Shenzhen-Hong Kong Science and Technology Innovation Cooperation Zone under Grant HZQBKCZYB-20200089, in part by the Hong Kong RGC under T42-409/18-R and 14218322, in part by the Hong Kong Centre for Logistics Robotics, and in part by the VC Fund 4930745 of the CUHK T Stone Robotics Institute.


{
    \small
    \bibliographystyle{ieeenat_fullname}
    \bibliography{Liangzu}
}

\appendix
\onecolumn

\section*{Appendix}

This appendix is organized as follows:
\begin{itemize}
    \item In \cref{sec:BnB}, we review the general branch-and-bound algorithm framework.
    \item In \cref{sec:bounds_tear,section:solve-tear-2}, we introduce the proposed bound computation methods for \textcolor{red}{TEAR-1} and \textcolor{red}{TEAR-2}, respectively.
    \item In \cref{sec:bounds_CM,sec:bounds_TLS}, we introduce the developed bound computation methods for \textcolor{cvprblue}{CM-1} and \textcolor{cvprblue}{TLS-1}, respectively.
    \item In \cref{appen: proof_prop_LB}, we prove \cref{prop:LB-range} (the statement of \cref{prop:LB-range} is presented in \cref{subsec:tear_lb}).
    \item In \cref{sec:extra_exper}, we present extra experimental details.
\end{itemize}

\section{The General Branch-and-bound Algorithm}
\label{sec:BnB}
In this section, we review the branch-and-bound framework. Given an objective function $\mathcal{T}(\cdot)$ and the corresponding decision variable $\mathbf{v}$, branch-and-bound computes a global minimizer of $\mathcal{T}(\cdot)$ up to a prescribed error $\epsilon>0$. It does so by recursively dividing the parameter space $\mathbb{B}_0$ of the decision variable into smaller branches and computing upper and lower bounds of the objective over each branch. By upper bounds we mean upper bounds of the optimal value, and by lower bounds over a given branch we mean lower bounds of the optimal value on this branch.

The algorithmic listing of this branch-and-bound recursion is shown in \cref{algo:bnb_algorithm}. The algorithm maintains the smallest upper bound $U^*$ found so far and the corresponding point $\mathbf{v}^*$ that achieves that bound; $U^*$ and $\mathbf{v}^*$ are initialized at Line 5, by a function that computes upper bounds. In the meantime, the algorithm starts with the initial branch $\mathbb{B}_0$ (Line 3), computes a lower bound (Line 6), and puts the branch $\mathbb{B}_0$ into a priority queue (Line 7). Next, branch-and-bound enters a while loop (Lines 8-24), at each iteration keeping examining the branch $\mathbb{B}$ in the queue with the highest priority (Line 9). If the smallest upper bound $U^*$ found so far is smaller than the lower bound $L(\mathbb{B})$ up to a small tolerance $\epsilon$, then the current point $\mathbf{v}^*$ already obtains the minimum value up to error $\epsilon$, therefore the algorithm terminates (Lines 10-12). Otherwise, the algorithm proceeds by dividing the current branch $\mathbb{B}$ into smaller branches $\mathbb{B}_i$. Similarly, if the lower bound $L(\mathbb{B}_i)$ is larger than $U^*$, then $\mathbb{B}_i$ can never contain any optimal solutions, therefore we can prune it (Lines 15 and 16). Otherwise, the algorithm computes an upper bound for $\mathbb{B}_i$ (Line 18), updates the current solutions $(U^*,\mathbf{v}^*)$ if needed (Lines 19-21), and stores the branch $\mathbb{B}_i$ into the queue $\mathcal{Q}$ with priority $L(\mathbb{B}_i)$.

As \cref{algo:bnb_algorithm} shows, the crucial step in the branch-and-bound method is computing the upper and lower bound of the objective on a given branch $\mathbb{B}$. Hence, to solve \textcolor{red}{TEAR-1}, \textcolor{cvprblue}{CM-1}, and \textcolor{cvprblue}{TLS-1}, we propose methods for computing the desired bounds in \cref{sec:bounds_tear,sec:bounds_CM,sec:bounds_TLS}, respectively. Moreover, in \cref{section:solve-tear-2} we describe the bound computation for \textcolor{red}{TEAR-2}.




\begin{algorithm}[!t]
    \DontPrintSemicolon
    \caption{The general branch-and-bound template to minimize a given objective function globally optimally.}\label{algo:bnb_algorithm}

    \textbf{Input:} Objective function $\mathcal{T}(\cdot)$, threshold $\epsilon$;

    \textbf{Output:} A global minimizer $\mathbf{v}^{*}$;

    Initial branch $\mathbb{B}_0 \gets $ parameter space of $\mathbf{v}$;

    $\mathcal{Q} \gets $ An empty priority queue;

    $U^*,\ \mathbf{v}^{*}\gets $ getUpperBound($\mathcal{T}(\cdot),\ \mathbb{B}_0)$; \tcp*{$U^*$ denotes the smallest upper bound so far, and $U^*=\mathcal{T}(\mathbf{v}^*)$.}

    $L(\mathbb{B}_0) \gets $ getLowerBound($\mathcal{T}(\cdot),\ \mathbb{B}_0)$;

    Insert $\mathbb{B}_0$ into $\mathcal{Q}$ with priority $L(\mathbb{B}_0)$;

    \textbf{while} $\mathcal{Q}$ is not empty \textbf{do}

    \ \ \ \ Pop the branch $\mathbb{B}$ from $Q$ with the lowest lower bound $L(\mathbb{B})$;

    \ \ \ \ \textbf{if} $U^*$ - $L(\mathbb{B}) < \epsilon$ \textbf{then} \tcp*{Terminate the algorithm: The current $\mathbf{v}^*$ is already optimal up to tolerance $\epsilon$.}

    \ \ \ \ \ \ \ \ \textbf{end while}

    \ \ \ \ \textbf{end if}

    \ \ \ \ \textbf{for} $\mathbb{B}_i$ \textbf{in} divideBranch($\mathbb{B}$) \textbf{do} \tcp*{Divide $\mathbb{B}$ into smaller branches $\mathbb{B}_i$'s and compute upper and lower bounds for each $\mathbb{B}_i$. }

    \ \ \ \ \ \ \ \ $L(\mathbb{B}_i) \gets$ getLowerBound($\mathcal{T}(\cdot),\ \mathbb{B}_i)$;

    \ \ \ \ \ \ \ \ \textbf{if} $L(\mathbb{B}_i) \geq U^*$ \textbf{then} \tcp*{The branch $\mathbb{B}_i$ does not contain any optimal solutions, so prune it.}


    \ \ \ \ \ \ \ \ \ \ \ \ \textbf{continue}

    \ \ \ \ \ \ \ \ \textbf{else} \tcp*{The branch $\mathbb{B}_i$ could potentially contain better solutions, so compute an upper bound and store it into the priority queue.}

    \ \ \ \ \ \ \ \ \ \ \ \  $U(\mathbb{B}_i),\ \mathbf{v}_i \gets$ getUpperBound($\mathcal{T}(\cdot),\ \mathbb{B}_i)$;

    \ \ \ \ \ \ \ \ \ \ \ \ \textbf{if} $U(\mathbb{B}) < U^*$ \textbf{then}

    \ \ \ \ \ \ \ \ \ \ \ \ \ \ \ \ $\mathbf{v}^* \gets \mathbf{v}$;\ $U^* \gets U(\mathbb{B})$;

    \ \ \ \ \ \ \ \ \ \ \ \ \textbf{end if}

    \ \ \ \ \ \ \ \ \ \ \ \ Insert $\mathbb{B}_i$ into $\mathcal{Q}$ with priority $L(\mathbb{B}_i)$;

    \ \ \ \ \ \ \ \ \textbf{end if}

    \ \ \ \ \textbf{end for}

    \textbf{end while}
\end{algorithm}

\section{Bounds for \textcolor{red}{TEAR-1}}
\label{sec:bounds_tear}

\subsection{Upper Bound}
\label{subsec:tear_ub}
Given a branch of $\br$, \eg, $[\alpha_1,\alpha_2]\times[\beta_1,\beta_1]$, we choose the center $\dot{\br}_1$ to compute an upper bound $\overline{U}$. Let $a_i:=y_{i1} - \dot{\br}_1^{\top}\bx_i$, and an upper bound can be computed as
\begin{equation}\label{eq:UB_r1_t1}
    \begin{split}
        \overline{U} &= \min_{t_1\in\mathbb{R}} \sum_{i=1}^{N} \min\left\{| a_i - t_1 |,\ \xi_{i1}\right\} \\
        &= \min_{t_1 \in \mathbb{R}} U(t_1),
    \end{split}
\end{equation}
where $U(t_1)$ is defined to be the sum of $U(t_1,i):=\min\left\{| a_i - t_1 |,\ \xi_{i1}\right\}$, that is $U(t_1):=\sum_{i=1}^N U(t_1,i)$. Define $l_{ai}:=a_1 - \xi_{i1}$ and $u_{ai}:=a_1 + \xi_{i1}$. Note that $U(t_1,i)$ equals to $| a_i - t_1 |$ if $t_1 \in [l_{ai}, u_{ai}]$, or otherwise $\xi_{i1}$~(see \cref{fig:UB_illus}a). Furthermore, $U(t_1,i)$ is differentiable at any point $t_1$ except when $t_1$ is equal to $l_{ai}$, $a_i$, or $u_{ai}$. Finally, at differentiable points, the derivative of $U(t_1,i)$ is either $1$ or $-1$ or $0$, therefore the derivative of $U(t_1)$ lies in $\{-N,\dots,-1, 0,1,\dots,N\}$.

We need more notations. Let $\{\lambda_{k}\}_{k=1}^{3N}$ be a sorted version of the 3$N$ numbers $\{a_i\}_{i=1}^{N} \cup \{l_{ai}\}_{i=1}^{N} \cup \{u_{ai}\}_{i=1}^{N}$. 
For each $k = 1, \cdots, 3N - 1$, define the open interval $\cI_k := (\lambda_k, \lambda_{k+1})$. Without loss of generality, we assume $\lambda_k\neq \lambda_{k+1}$ and therefore $\cI_k$ is not empty. Note that the derivative of $U(t_1,i)$ is equal to $1$ or $-1$ depending on the sign of $a_i-t_1$. Since the sign of $a_i-t_1$ in the definition of $U(t_1,i)$ is constant on every interval $\cI_k$, the derivative of $U(t_1,i)$ is constant on $\cI_k$ for every $i$ and every $k$. As a consequence, the derivative of $U(t_1)$ is constant on every interval $\cI_k$~(see \cref{fig:UB_illus}b). 

\begin{figure}[!t]
    \centering
    \includegraphics[width=0.75\textwidth]{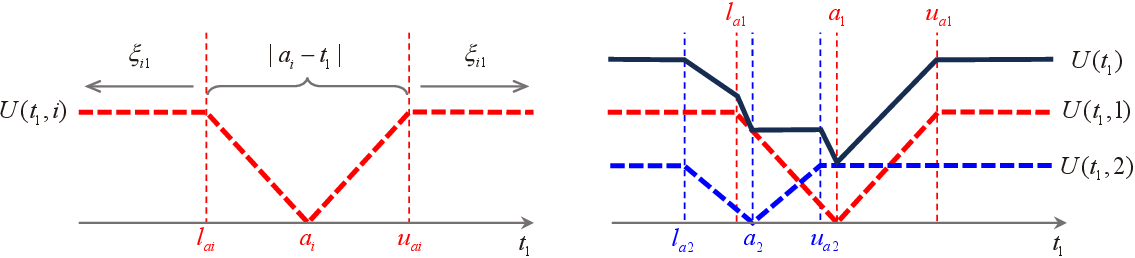}
    \\
    \makebox[0.3\textwidth]{\footnotesize (a) \quad \quad}
    \makebox[0.3\textwidth]{\footnotesize \quad \quad (b)}
    \\[-0.5em]
    \caption{(a) Illustration of $U(t_1,i) = \min \{ | a_i - t_1 |,\ \xi_{i1} \}$; (b) Illustration of $U(t_1)=\sum_{i=1}^{N} U(t_1,i)$ when $N=2$. (\cf \cref{subsec:tear_ub}).}
    \label{fig:UB_illus}
\end{figure}

With the above notations, we derive the following proposition that localizes a global minimizer of \eqref{eq:UB_r1_t1}:
\begin{proposition}\label{prop:UB-endpoints}
    In the $N$ points $\{a_i\}_{i=1}^{N}$, there is a globally optimal solution of \eqref{eq:UB_r1_t1}.
\end{proposition}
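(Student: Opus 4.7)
The plan is to exploit the piecewise-linear structure of $U(t_1) := \sum_{i=1}^N U(t_1, i)$, where $U(t_1, i) = \min\{|a_i - t_1|, \xi_{i1}\}$. Each summand is piecewise linear in $t_1$ with breakpoints at $l_{ai}, a_i, u_{ai}$ and slopes $0, -1, +1, 0$ on the four resulting pieces, so $U$ itself is piecewise linear with breakpoints contained in the sorted set $\{\lambda_k\}_{k=1}^{3N}$ from the paper. A global minimizer exists because $U$ is continuous and equals $\sum_i \xi_{i1}$ outside the compact region $[\min_i l_{ai}, \max_i u_{ai}]$.

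The first step is to reduce to the case that a minimizer lies at a breakpoint. If a global minimizer $t^*$ lies in the interior of some piece $(\lambda_k, \lambda_{k+1})$, then $U$ restricted to this piece is affine and attains its minimum in the interior, which forces the slope to vanish and hence $U$ to be constant on the whole interval. In particular the breakpoint endpoints $\lambda_k$ and $\lambda_{k+1}$ are also global minimizers, so I may assume $t^* = \lambda_k$ for some $k$.

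The main step is to argue that any breakpoint minimizer coincides with some $a_i$. Writing $D^-(\lambda), D^+(\lambda)$ for the left and right derivatives of $U$ at a breakpoint $\lambda$, the fact that $\lambda$ is a local minimum forces $D^-(\lambda) \leq 0 \leq D^+(\lambda)$, and hence the slope jump satisfies $J(\lambda) := D^+(\lambda) - D^-(\lambda) \geq 0$. On the other hand, the piecewise description of each $U(\cdot, i)$ yields, by direct summation,
\begin{equation*}
    J(\lambda) \;=\; 2\,|\{i : a_i = \lambda\}| \;-\; |\{i : l_{ai} = \lambda\}| \;-\; |\{i : u_{ai} = \lambda\}|,
\end{equation*}
since each $i$ with $a_i = \lambda$ contributes a slope jump of $+2$ (from $-1$ to $+1$), while each $i$ with $l_{ai} = \lambda$ contributes $-1$ (from $0$ to $-1$) and each $i$ with $u_{ai} = \lambda$ contributes $-1$ (from $+1$ to $0$). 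If no $a_i$ equals $\lambda$, then $J(\lambda) \leq -1$ (the fact that $\lambda$ is a breakpoint forces at least one of the remaining two counts to be positive), contradicting $J(\lambda) \geq 0$. Hence $\lambda = a_i$ for some $i$, which proves the claim.

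I expect the main obstacle to be setting up the slope-jump accounting cleanly when several of $l_{ai}, u_{ai}, a_i$ happen to coincide at the same breakpoint; the formula for $J(\lambda)$ above already handles such coincidences automatically through multiplicity, and the remaining verifications reduce to routine bookkeeping on the four affine pieces of each $U(\cdot, i)$.
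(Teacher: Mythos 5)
Your proof is correct and follows essentially the same route as the paper's: first reduce to the breakpoints $\{\lambda_k\}$ via the piecewise-affine structure, then rule out breakpoints of the form $l_{ai}$ or $u_{ai}$ by a sign argument on the slopes of the two adjacent pieces (the paper phrases this as $U'(\cI_{k+1}) = U'(\cI_k) - 1$ and compares $U(\lambda_{k+1})$ with its neighbors, which is equivalent to your condition $J(\lambda) \geq 0$ at a local minimum). Your slope-jump formula with multiplicities is a slightly cleaner bookkeeping device that handles coincident breakpoints explicitly, where the paper instead assumes without loss of generality that consecutive $\lambda_k$ are distinct.
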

\begin{proof}

    First we will show, in the $3N$ points $\{\lambda_{k}\}_{k=1}^{3N}$, there is a globally optimal solution of \eqref{eq:UB_r1_t1}. To prove this, let us assume there is a global minimizer different from the $3N$ points $\{\lambda_{k}\}_{k=1}^{3N}$. Note that $U(t_1)$ in \eqref{eq:UB_r1_t1} is differentiable except at the $3N$ points $\{\lambda_{k}\}_{k=1}^{3N}$, therefore the derivative of $U(t_1)$ at this global minimizer is $0$. Moreover, this derivative is $0$ on the entire interval $\cI_k$ and the objective value $U(t_1)$ is constant on $\cI_k$. Thus, every point on $\cI_k=(\lambda_k, \lambda_{k+1})$ is a global minimizer. Since $U(t_1)$ is continuous on $[\lambda_k, \lambda_{k+1}]$, has derivative $0$ on $(\lambda_k, \lambda_{k+1})$, and every point on $(\lambda_k, \lambda_{k+1})$ is a global minimizer, we conclude that $\lambda_k$ and $\lambda_{k+1}$ are global minimizers of $U(t_1)$ as well. We have just shown that in the $3N$ points $\{\lambda_{k}\}_{k=1}^{3N}$, there is a globally optimal solution of \eqref{eq:UB_r1_t1}.

    Next, we take one step further and prove, in the $N$ points $\{a_i\}_{i=1}^{N}$, there is a globally optimal solution of \eqref{eq:UB_r1_t1}. Let us assume $\lambda_{k+1}$ is a global minimizer of $U(t_1)$. We need to show $\lambda_{k+1}\in \{a_i\}_{i=1}^{N}$. For the sake of contradiction, assume $\lambda_{k+1} \in \{l_{ai}\}_{i=1}^{N}\cup \{u_{ai}\}_{i=1}^{N}$. Let us analyze the two intervals associated with $\lambda_{k+1}$, namely $\cI_k = (\lambda_k, \lambda_{k+1})$ and $\cI_{k+1} = (\lambda_{k+1}, \lambda_{k+2})$, and let us also analyze the derivatives of $U(t_1)$ on them. Since the derivatives of $U(t_1,i)$ on them are constant, let us denote by $U'(\cI_{k+1},i)$ and $U'(\cI_{k},i)$, respectively, the derivatives of $U(t_1,i)$ at some point of $\cI_{k+1}$ and $\cI_{k}$. Define $U'(\cI_{k+1})$ and $U'(\cI_k)$ similarly. Then we have the following observations:
    \begin{itemize}
        \item If $\lambda_{k+1} = l_{ai}$ for some $i$, then $U'(\cI_{k},i)=0$ and  $U'(\cI_{k+1},i)=-1$, \ie, by going from $\cI_k$ to $\cI_{k+1}$, the $i$-th component function $U(t_1,i)$ enters the linear decrease regime from the constant regime. At the same time, we have $U'(\cI_k, j)= U'(\cI_{k+1}, j)$ for any $j\neq i$; in other words, $U(\cI_k,j)$ keeps the same trend (linearly decrease, linearly increase, or constant). As a result, we have $U'(\cI_k)=U'(\cI_{k+1}) +1$.
        \item If $\lambda_{k+1} = u_{ai}$ for some $i$, then $U'(\cI_{k},i)=1$ and  $U'(\cI_{k+1},i)=0$. And we have $U'(\cI_k, j)= U'(\cI_{k+1}, j)$ for any $j\neq i$. Similarly we have $U'(\cI_k)=U'(\cI_{k+1}) +1$.
    \end{itemize}
    Since $U'(\cI_k)$ and $U'(\cI_{k+1})$ lie in $\{-N,\dots,-1, 0,1,\dots,N\}$, the assumption $\lambda_{k+1} \in \{l_{ai}\}_{i=1}^{N}\cup \{u_{ai}\}_{i=1}^{N}$ leads us to the following cases:
    \begin{itemize}
        \item If $U'(\cI_k)\geq 1$, then we have $U(\lambda_{k+1})>U(\lambda_k)$.
        \item If $U'(\cI_k)\leq 0$, then $U'(\cI_{k+1})\leq -1$ and we have $U(\lambda_{k+1})>U(\lambda_{k+2})$.
    \end{itemize}
    We have shown $U(\lambda_{k+1}) > \min\{U(\lambda_k), U(\lambda_{k+2})\}$. This is a contradiction to the assumption $\lambda_{k+1} \in \{l_{ai}\}_{i=1}^{N}\cup \{u_{ai}\}_{i=1}^{N}$. Therefore, any global minimizer of $U(t_1)$ in $\{\lambda_{k}\}_{k=1}^{3N}$ must be a point of $\{ a_i \}_{i=1}^N$. The proof is complete.
\end{proof}

\begin{algorithm}[!t]
    \DontPrintSemicolon
    \textbf{Input:} $\{a_i\}_{i=1}^{N}$, $\{l_{ai}\}_{i=1}^{N}$, and $\{u_{ai}\}_{i=1}^{N}$;

    \textbf{Output:} A global minimizer ${t_1}'$ and the minimum value $\overline{U}$ of \eqref{eq:UB_r1_t1};

    $\{\lambda_k\}_{k=1}^{3N} \gets $ Sort $\{a_i\}_{i=1}^{N} \cup \{l_{ai}\}_{i=1}^{N} \cup \{u_{ai}\}_{i=1}^{N}$;
        
        $U(\lambda_1) \gets \sum_{i=1}^{N} \xi_{i1}$;\quad $\overline{U} \gets U(\lambda_1)$; \tcp*{ Use $\overline{U}$ to store the smallest upper bound so far.}

        $m_0^{u} \gets 0 $;\quad $m_0^{d} \gets 0$; 
 
	\textbf{for} $k\gets 1$ \textbf{to} $3N - 1$ \textbf{do}:

 \ \ \ \ \textbf{if} $\lambda_{k} \in \{a_i\}_{i=1}^{N}$ \textbf{then}

        \ \ \ \ \ \ \ \ $m_k^{d} \gets m_{k-1}^{d} - 1$;
        
        \ \ \ \ \ \ \ \ $m_k^{u} \gets m_{k-1}^{u} + 1$;

        \ \ \ \ \ \ \ \ \textbf{if} $U(\lambda_k) < \overline{U}$ \textbf{then} \tcp*{Update $\overline{U}$ and the minimizer ${t_1}'$ once $U(\lambda_k)$ is smaller than $ \overline{U}$.}
        
        \ \ \ \ \ \ \ \ \ \ \ \ ${t_1}' \gets \lambda_k$;\quad $\overline{U} \gets U(\lambda_k)$;

        \ \ \ \ \ \ \ \ \textbf{end if}
        
        \ \ \ \ \textbf{else if} $\lambda_{k} \in \{l_{ai}\}_{i=1}^{N}$ \textbf{then}

        \ \ \ \ \ \ \ \ $m_k^{d} \gets m_{k-1}^{d} + 1$;

        \ \ \ \ \textbf{else if} $\lambda_{k} \in \{u_{ai}\}_{i=1}^{N}$ \textbf{then}

        \ \ \ \ \ \ \ \ $m_k^{u} \gets m_{k-1}^{u} - 1$;

        \ \ \ \ \textbf{end if}

        \ \  \ \ $U(\lambda_{k+1}) \gets U(\lambda_k)$ + $(m_k^{u} - m_k^{d})(\lambda_{k+1} - \lambda_k)$; \tcp*{Compute $U(\lambda_{k+1})$ incrementally with the updated $m_k^u$ and $m_k^d$ based on \eqref{eq:U_k+1}.}

        \textbf{end for}
        
	\caption{Globally Optimal 1D TEAR Solver for Upper Bound Computation (\cref{prop:UB-endpoints}).} \label{algo:UB-$t_1$}
\end{algorithm}

In light of \cref{prop:UB-endpoints}, an intuitive algorithm to solve~\eqref{eq:UB_r1_t1} would be computing $U(a_i)$ for each $i \in \{1,\dots,N\}$ separately; the solution is some $a_i$ that gives the smallest objective value. This algorithm has $O({N}^2)$ time complexity. 

Let us show how to solve \eqref{eq:UB_r1_t1} in $O(N\log N)$ time. First recall that, on each interval $[\lambda_k, \lambda_{k+1}]$, $U(t_1,i)$ is monotonic or constant. Let us define $\mathcal{M}_k^{u}$ (\textit{resp.}  $\mathcal{M}_k^{d}$) to be the set of indices $i$ for which $U(t_1,i)$ is increasing (\textit{resp.} decreasing). Denote by $m_k^{u}$ and $m_k^{d}$ the respective sizes of $\mathcal{M}_k^{u}$ and  $\mathcal{M}_k^{d}$. 
Then we can compute the value of $U(\lambda_k)$ as follows:
\begin{align}
    U(\lambda_k) &= \sum_{i\in\mathcal{M}_k^{u}}(\lambda_k - a_i) + \sum_{i\in\mathcal{M}_k^{d}}(a_i - \lambda_k) + \sum_{i\in \{1,\cdots,N\}\setminus\{\mathcal{M}_k^{u} \cup \mathcal{M}_k^{d}\}} \xi_{i1}.
\end{align}
An important observation is that, for every $k$, the value $U(\lambda_{k+1})$ can be computed incrementally from $U(\lambda_k)$:
\begin{align}
    U(\lambda_{k+1}) &= \sum_{i\in\mathcal{M}_k^{u}}(\lambda_{k+1} - a_i) + \sum_{i\in\mathcal{M}_k^{d}}(a_i - \lambda_{k+1}) + \sum_{i\in \{1,\cdots,N\}\setminus\{\mathcal{M}_k^{u} \cup \mathcal{M}_k^{d}\}}\xi_{i1}\\
                &= U(\lambda_k) + (m_k^u - m_k^d)(\lambda_{k+1} - \lambda_k). \label{eq:U_k+1}
\end{align}
\cref{algo:UB-$t_1$} implements the recurrence \eqref{eq:U_k+1} and solves \eqref{eq:UB_r1_t1} to global optimality. Interestingly, this recurrent formulation does not depend on the index sets $\mathcal{M}_k^{u}$ and $\mathcal{M}_k^{d}$, so we only need to update $m_k^u$ and $m_k^d$ in \cref{algo:UB-$t_1$}. Finally, since \cref{algo:UB-$t_1$} only consists of a sorting and scanning operation, its time complexity is $O(N \log N)$.


\begin{figure}[!t]
    \centering
    \includegraphics[width=0.75\textwidth]{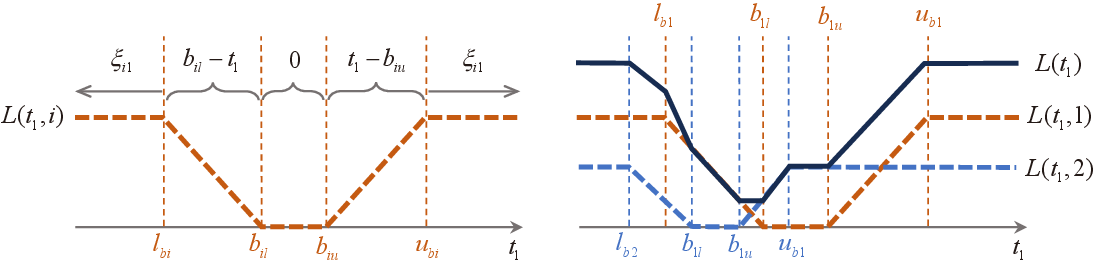}
    \\
    \makebox[0.3\textwidth]{\footnotesize (a) \quad \quad}
    \makebox[0.3\textwidth]{\footnotesize \quad \quad (b)}
    \\[-0.5em]
    \caption{(a) Illustration of $L(t_1,i) = \min_{b_i \in [b_{il}, b_{iu}]} L(t_1, b_i, i)$, where $L(t_1, b_i, i)$ = $\min\{|b_i-t_1|,\ \xi_{i1}\}$; (b) Illustration of $L(t_1)=\sum_{i=1}^{N} L(t_1,i)$ when $N=2$. (\cf \cref{subsec:tear_lb}).}
    \label{fig:LB_illus}
\end{figure}

\subsection{Lower Bound}
\label{subsec:tear_lb}

Here we first consider the following proposition:
\begin{proposition}\label{prop:LB-range}
    Given a branch $[\alpha_1,\alpha_2]\times[\beta_1,\beta_1]$ of $\br_1$, let $[b_{il},b_{iu}]$ denote the range of $b_i:=y_{i1} - {\br_1}^{\top}\bx_i$, specifically:
    \begin{align}
        b_{il} = \min \left\{y_{i1} - {\br_1}^{\top}\bx_i: \br_1=[\sin\beta \cos\alpha, \sin\beta \sin \alpha, \cos\beta ], \ \alpha\in [\alpha_1,\alpha_2],\ \beta\in [\beta_1,\beta_2] \right\}, \\
        b_{iu} = \max \left\{y_{i1} - {\br_1}^{\top}\bx_i: \br_1=[\sin\beta \cos\alpha, \sin\beta \sin \alpha, \cos\beta ], \ \alpha\in [\alpha_1,\alpha_2],\ \beta\in [\beta_1,\beta_2] \right\}.
    \end{align}
    We can compute $b_{il}$ and $b_{iu}$ in constant time.
\end{proposition}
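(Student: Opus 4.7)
Writing $f_i(\alpha,\beta) := \br_1(\alpha,\beta)^{\top}\bx_i = x_{i1}\sin\beta\cos\alpha + x_{i2}\sin\beta\sin\alpha + x_{i3}\cos\beta$, the identity $b_i = y_{i1} - f_i(\alpha,\beta)$ reduces the claim to computing $\min_R f_i$ and $\max_R f_i$ over the compact rectangle $R := [\alpha_1,\alpha_2]\times[\beta_1,\beta_2]$, since then $b_{il} = y_{i1} - \max_R f_i$ and $b_{iu} = y_{i1} - \min_R f_i$. My plan is to exhibit a constant-size candidate set that is guaranteed to contain both extrema, each candidate being computable in $O(1)$; evaluating $f_i$ on the set and taking min/max then yields $b_{il}, b_{iu}$ in constant time.

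Because $f_i$ is smooth on $R$, its extrema are attained either at an interior critical point, at a critical point of $f_i$ restricted to one of the four boundary edges, or at one of the four corners. For the interior, $\partial f_i/\partial\alpha = \sin\beta(x_{i2}\cos\alpha - x_{i1}\sin\alpha) = 0$ together with $\partial f_i/\partial\beta = 0$ forces $\br_1 = \pm \bx_i/\|\bx_i\|$ (ignoring the poles $\beta \in \{0,\pi\}$, which lie on the boundary of the global search domain and are handled as corners/edges); the angular coordinates of these two unit vectors are $\alpha^{*} = \mathrm{arctan2}(x_{i2}, x_{i1})$, $\beta^{*} = \arccos(x_{i3}/\|\bx_i\|)$ (and the antipodal point), both computable in $O(1)$, and each is retained only if it lies in $R$. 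For each edge with $\alpha$ fixed, say $\alpha=\alpha_1$, the function $\beta \mapsto f_i(\alpha_1,\beta)$ collapses to $A\sin\beta + B\cos\beta$ with $A = x_{i1}\cos\alpha_1 + x_{i2}\sin\alpha_1$ and $B = x_{i3}$; this is a single sinusoid whose stationary points solve $\tan\beta = A/B$, giving at most two candidates in $[\beta_1,\beta_2]$ obtained in $O(1)$. The two edges with $\beta$ fixed yield, analogously, equations of the form $\tilde{A}\sin\alpha = \tilde{B}\cos\alpha$ in $\alpha$, again at most two candidates per edge.

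Summing up, the candidate set contains at most four corners, at most eight boundary stationary points (two per edge), and at most two interior stationary points---a constant-size list, each entry evaluated in $O(1)$, which proves the proposition. The main obstacle I anticipate is bookkeeping rather than deep mathematics: correctly filtering the interior stationary points against $R$ (including $\alpha$-periodicity modulo $2\pi$ when the rectangle straddles the seam), handling the degenerate cases $x_{i1}=x_{i2}=0$ or $\bx_i=\bm{0}$ in which $f_i$ becomes independent of $\alpha$ (whence every $\alpha\in[\alpha_1,\alpha_2]$ is critical but the extrema still occur at an endpoint), and verifying that the poles $\beta\in\{0,\pi\}$, should they fall in $R$, are covered by the corner/edge enumeration. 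None of these affect the $O(1)$ running time.
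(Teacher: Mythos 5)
Your proof is correct, but it takes a genuinely different route from the paper's. You treat the problem as a standard constrained-extremum enumeration on a compact rectangle: the extrema of the smooth function $f_i(\alpha,\beta)=\br_1(\alpha,\beta)^{\top}\bx_i$ lie among a constant-size candidate set consisting of the (at most two) interior critical points $\br_1=\pm\bx_i/\|\bx_i\|$, the stationary points of the single sinusoids obtained on each of the four edges, and the four corners. The paper instead never enumerates critical points: it rewrites $x_{i1}\sin\beta\cos\alpha+x_{i2}\sin\beta\sin\alpha$ in amplitude--phase form as $\sqrt{x_{i1}^2+x_{i2}^2}\,\cos(\alpha-\alpha^{*})\sin\beta$, bounds $\cos(\alpha-\alpha^{*})$ over $[\alpha_1,\alpha_2]$ by constants $[\Psi_l,\Psi_u]$ via \cref{lem:bound_cos}, uses $\sin\beta\ge 0$ to substitute these constants, and then collapses the remaining expression into a second sinusoid in $\beta$, bounded again by the same lemma; this yields the closed-form expressions \eqref{eq:bound_r_3} that are implemented directly. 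The two approaches buy slightly different things. The paper's chain of inequalities is in fact tight at every step (the maximizer of $\cos(\alpha-\alpha^{*})$ over $[\alpha_1,\alpha_2]$ does not depend on $\beta$, and the coefficient $\sin\beta$ is nonnegative throughout $[0,\pi]$), so both methods return the exact range; but your argument makes exactness immediate and transparent --- which matters because the proposition claims the true $\min$ and $\max$, not merely an enclosure --- whereas the paper's, read literally as a sequence of $\le$'s, only certifies an enclosure unless one adds the tightness observation. Conversely, the paper's derivation hands the implementer explicit formulas with no case filtering, while your route requires the bookkeeping you flag yourself (seam-crossing in $\alpha$, poles $\beta\in\{0,\pi\}$, the degenerate case $x_{i1}=x_{i2}=0$); you correctly note none of this affects the $O(1)$ cost. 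Either proof suffices for the branch-and-bound lower bound, for which only the enclosure direction is actually needed.
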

With \cref{prop:LB-range} proved in \cref{appen: proof_prop_LB}, we can compute the bounds $\{b_{il},b_{iu}\}_{i=1}^N$ in $O(N)$ time. Moreover, \cref{prop:LB-range} provides the idea of relaxing (4) in the main manuscript into the following problem:
\begin{equation}
    \begin{split}\label{eq:LB_bi_t1}
        \underline{L} &= \min_{t_1\in\mathbb{R},\ b_i\in[b_{il}, b_{iu}]} \sum_{i=1}^{N} \min \{ | b_i - t_1 |,\ \xi_{i1}\} \\
        & = \min_{t_1\in\mathbb{R},\ b_i\in[b_{il}, b_{iu}]} \sum_{i=1}^{N} L(t_1,b_i,i),
    \end{split} 
\end{equation}
where $L(t_1,b_i,i) := \min \{ | b_i - t_1 |, \ \xi_{i1}\}$. Note that for any given $t_1 \in [b_{il}, b_{iu}]$, we can always set $b_i$ to $t_1$ so that $L(t_1,b_i,i)$ is minimized to $0$. Now let us consider
\begin{equation}
    \begin{split}
        L(t_1,i) := 
        &\min_{b_i\in[b_{il}, b_{iu}]}{L(t_1,b_i,i)} \\
        = &\begin{cases}
            b_{il} - t_1, \quad& t_1 \in [b_{il} - \xi_{i1}, b_{il}]; \\
            0, \quad&t_1 \in [b_{il}, b_{iu}]; \\
            t_1 - b_{iu}, \quad& t_1 \in [b_{iu}, b_{iu} + \xi_{i1}]; \\
            \xi_{i1}, \quad& \textnormal{otherwise}.
        \end{cases}
    \end{split}
\end{equation}
Define $l_{bi} := b_{il} - \xi_{i1}$ and $u_{bi} := b_{iu} + \xi_{i1}$. Note that $L(t_1,i)$ is differentiable at any point $t_1$ except when $t_1$ is equal to $l_{bi}$, $b_{il}$, $b_{iu}$, or $u_{bi}$. Furthermore, at differentiable points, the derivative of $L(t_1,i)$ is either $1$ or $-1$ or $0$~(see \cref{fig:LB_illus}a). Now we can convert \eqref{eq:LB_bi_t1} into the following equivalent problem:
\begin{equation}\label{eq:LB_t1}
    \begin{split}
        \underline{L} &= \min_{t_1\in\mathbb{R}} \sum_{i=1}^{N} \min_{b_i\in[b_{il}, b_{iu}]} L(t_1, b_i, i) \\
        &= \min_{t_1\in\mathbb{R}} \sum_{i=1}^{N} L(t_1,i) \\
        &= \min_{t_1\in\mathbb{R}} L(t_1),
    \end{split}
\end{equation}
where $L(t_1)$ is defined to be the sum of $L(t_1, i)$, i.e., $L(t_1) = \sum_{i=1}^{N} L(t_1,i)$. According to the derivation property of $L(t_1,i)$, the derivative of $L(t_1)$ at differentiable points lies in $\{-N,\dots,-1, 0,1,\dots,N\}$. Based on the above notions, we have

\begin{proposition}\label{prop:LB-endpoints}
    In the $2N$ points $\{b_{il}\}_{i=1}^{N} \cup \{b_{iu}\}_{i=1}^{N}$, there is a globally optimal solution of \eqref{eq:LB_t1}. And \eqref{eq:LB_t1} can be solved by \cref{algo:LB-$t_1$} in $O(N\log N)$ time.
\end{proposition}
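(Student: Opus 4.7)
The plan is to mirror the proof of Proposition~\ref{prop:UB-endpoints}, since $L(t_1)$ shares the same structural properties as $U(t_1)$: both are continuous piecewise-linear functions whose slopes on differentiable regions take values in $\{-N,\ldots,-1,0,1,\ldots,N\}$. The function $L(t_1)$ has $4N$ breakpoints, namely $\{l_{bi}\}_{i=1}^{N}\cup\{b_{il}\}_{i=1}^{N}\cup\{b_{iu}\}_{i=1}^{N}\cup\{u_{bi}\}_{i=1}^{N}$; I would denote their sorted order by $\{\mu_k\}_{k=1}^{4N}$ and the open intervals between consecutive breakpoints by $\cJ_k:=(\mu_k,\mu_{k+1})$. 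A first, essentially verbatim repeat of the first half of Proposition~\ref{prop:UB-endpoints} then reduces the search for a global minimizer to the breakpoint set: if a minimizer lay in the interior of some $\cJ_k$, its derivative there would vanish, $L$ would be constant on $\cJ_k$, and by continuity both $\mu_k$ and $\mu_{k+1}$ would also be minimizers.

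The substantive step is to further exclude $\{l_{bi}\}_{i=1}^{N}\cup\{u_{bi}\}_{i=1}^{N}$ from the candidate set. For this I would read off, from the piecewise description of $L(t_1,i)$ given before \eqref{eq:LB_t1}, how the per-summand derivative changes at each breakpoint type: crossing $l_{bi}$ turns the slope of $L(\cdot,i)$ from $0$ into $-1$, and crossing $u_{bi}$ turns it from $+1$ into $0$. Both transitions contribute exactly $-1$ to the slope of $L$, so in either case $L'(\cJ_k)=L'(\cJ_{k+1})+1$, which is precisely the relation exploited in Proposition~\ref{prop:UB-endpoints}. Assuming $\mu_{k+1}\in\{l_{bi}\}\cup\{u_{bi}\}$ is a global minimizer then forces a contradiction by the same dichotomy: if $L'(\cJ_k)\geq 1$ then $L(\mu_{k+1})>L(\mu_k)$, while if $L'(\cJ_k)\leq 0$ then $L'(\cJ_{k+1})\leq -1$ and $L(\mu_{k+1})>L(\mu_{k+2})$. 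Hence any global minimizer in the breakpoint set must actually lie in $\{b_{il}\}_{i=1}^{N}\cup\{b_{iu}\}_{i=1}^{N}$.

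For the algorithmic claim, I would adapt Algorithm~\ref{algo:UB-$t_1$} in a straightforward way: sort the $4N$ breakpoints, initialize $L(\mu_1)=\sum_{i=1}^{N}\xi_{i1}$ (since $\mu_1\leq l_{bi}$ for every $i$, each summand sits in its constant regime), and maintain counters $m_k^u,m_k^d$ tallying the indices $i$ for which $L(\cdot,i)$ currently has slope $+1$ or $-1$. The update rules at the four breakpoint types follow directly from the slope-change analysis of the previous paragraph, and the value recurrence $L(\mu_{k+1})=L(\mu_k)+(m_k^u-m_k^d)(\mu_{k+1}-\mu_k)$ is identical in form to \eqref{eq:U_k+1}. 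During the scan one only needs to compare the minimum-so-far against $L(\mu_k)$ for breakpoints $\mu_k\in\{b_{il}\}\cup\{b_{iu}\}$, as justified above. The total cost is $O(N\log N)$ for sorting plus $O(N)$ for the linear scan. The only real obstacle is the bookkeeping at the four breakpoint types, which doubles the case analysis compared to Algorithm~\ref{algo:UB-$t_1$} but introduces no new idea beyond those already used in Proposition~\ref{prop:UB-endpoints}.
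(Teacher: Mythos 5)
Your proposal is correct and follows essentially the same route as the paper's proof: reduce to the $4N$ sorted breakpoints via the piecewise-linearity/zero-derivative argument, exclude $\{l_{bi}\}\cup\{u_{bi}\}$ using the slope relation $L'(\cJ_k)=L'(\cJ_{k+1})+1$ and the resulting dichotomy, and then evaluate $L$ incrementally with slope counters after an $O(N\log N)$ sort. The slope-change bookkeeping you describe at the four breakpoint types matches the paper's Algorithm for the lower bound exactly.
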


\begin{algorithm}[!t]
    \DontPrintSemicolon
    \textbf{Input:} $\{b_{il}\}_{i=1}^{N}$, $\{b_{iu}\}_{i=1}^{N}$, $\{l_{bi}\}_{i=1}^{N}$, and $\{u_{bi}\}_{i=1}^{N}$;

    \textbf{Output:} A global minimizer $\hat{t}_1$ and the minimum value $\underline{L}$ of \eqref{eq:LB_t1};

    $\{\psi_k\}_{k=1}^{4N} \gets $ Sort $\{b_{il}\}_{i=1}^{N} \cup \{b_{iu}\}_{i=1}^{N} \cup \{l_{bi}\}_{i=1}^{N} \cup \{u_{bi}\}_{i=1}^{N}$;
        
        $L(\psi_1) \gets \sum_{i=1}^{N} \xi_{i1}$;\quad $\underline{L} \gets L(\psi_1)$; \tcp*{Use $\underline{L}$ to store the smallest lower bound so far.}

        $n_0^{u} \gets 0 $;\quad $n_0^{d} \gets 0$;
 
	\textbf{for} $k\gets 1$ \textbf{to} $4N - 1$ \textbf{do}:

        \ \ \ \ \textbf{if} $\psi_{k} \in \{b_{il}\}_{i=1}^{N}$ \textbf{then}

        \ \ \ \ \ \ \ \ $n_k^{d} \gets n_{k-1}^{d} - 1$;

        \ \ \ \ \ \ \ \ \textbf{if} $L(\psi_k) < \underline{L}$ \textbf{then} \tcp*{Update $\underline{L}$ and the minimizer $\hat{t}_1$ once $L(\psi_k)$ is smaller than $ \underline{L}$.}
        
        \ \ \ \ \ \ \ \ \ \ \ \ $\hat{t}_1 \gets \psi_k$;\quad $\underline{L} \gets L(\psi_k)$;

        \ \ \ \ \ \ \ \ \textbf{end if}

        \ \ \ \ \textbf{else if} $\psi_{k} \in \{b_{iu}\}_{i=1}^{N}$ \textbf{then}
      
        \ \ \ \ \ \ \ \ $n_k^{u} \gets n_{k-1}^{u} + 1$;

        \ \ \ \ \ \ \ \ \textbf{if} $L(\psi_k) < \underline{L}$ \textbf{then} \tcp*{Similarly to Line 9, update $\underline{L}$ and the minimizer $\hat{t}_1$ once $L(\psi_k)$ is smaller than $ \underline{L}$.}
        
        \ \ \ \ \ \ \ \ \ \ \ \ $\hat{t}_1 \gets \psi_k$;\quad $\underline{L} \gets L(\psi_k)$;

        \ \ \ \ \ \ \ \ \textbf{end if}
        
        \ \ \ \ \textbf{else if} $\psi_{k} \in \{l_{bi}\}_{i=1}^{N}$ \textbf{then}

        \ \ \ \ \ \ \ \ $n_{k+1}^{d} \gets n_k^{d} + 1$;

        \ \ \ \ \textbf{else if} $\psi_{k} \in \{u_{bi}\}_{i=1}^{N}$ \textbf{then}

        \ \ \ \ \ \ \ \ $n_{k+1}^{u} \gets n_k^{u} - 1$;

        \ \ \ \ \textbf{end if}

        \ \  \ \ $L(\psi_{k+1}) = L(\psi_k) + (n_k^u-n_k^d)(\psi_{k+1} - \psi_{k})$; \tcp*{Compute $L(\psi_{k+1})$ incrementally with the updated $n_k^u$ and $n_k^d$ based on \eqref{eq:L_k+1}.}

        \textbf{end for}
        
	\caption{Globally Optimal 1D TEAR Solver for Lower Bound Computation (\cref{prop:LB-endpoints}).} \label{algo:LB-$t_1$}
\end{algorithm}

\begin{proof}
     Let $\{\psi_k\}_{k=1}^{4N}$ be a sorted version of $\{b_{il}\}_{i=1}^{N} \cup \{b_{iu}\}_{i=1}^{N} \cup \{l_{bi}\}_{i=1}^{N} \cup \{u_{bi}\}_{i=1}^{N}$. 
     For each $k = 1, \cdots, 4N$, define the open interval $\cJ_k := (\psi_k, \psi_{k+1})$.
     Without loss of generality, we assume $\psi_k\neq \psi_{k+1}$ and therefore $\cJ_k$ is not empty.
     Similarly to $U(t_1, i)$ in \cref{subsec:tear_ub}, the derivative of $L(t_1,i)$ on every interval $\cJ_k$ is constant~(say, $0, -1$, or $1$); therefore the derivative of $L(t_1)$ is also constant on every interval $\cJ_k$ similarly to $U(t_1)$ in \cref{subsec:tear_ub}~(see \cref{fig:LB_illus}b). Since $L(t_1)$ shares similar derivation properties with $U(t_1)$, we can easily infer that in the $4N$ endpoints $\{\psi_k\}_{k=1}^{4N}$, there is a globally optimal solution of \eqref{eq:LB_t1}.
     
     In the following we take one step further and prove that in the $2N$ points $\{b_{il}\}_{i=1}^{N} \cup \{b_{iu}\}_{i=1}^{N}$, there is a globally optimal solution of \eqref{eq:LB_t1}. 
     Assume that $\psi_{k+1}$ is a global minimizer of $L(t_1)$. We need to show $\psi_{k+1} \notin \{l_{bi}\}_{i=1}^{N} \cup \{u_{bi}\}_{i=1}^{N}$. For the sake of contradiction, suppose that $\psi_{k+1} \in \{l_{bi}\}_{i=1}^{N} \cup \{u_{bi}\}_{i=1}^{N}$. Consider the two intervals associated with $\psi_{k+1}$, namely $\cJ_k = (\psi_k, \psi_{k+1})$ and $\cJ_{k+1} = (\psi_{k+1}, \psi_{k+2})$. Since the derivatives of $L(t_1)$ on them are constant, let us denote by $L'(\cI_{k+1})$ and $L'(\cI_k)$, respectively, the derivatives of $L(t_1)$ at some point of $\cJ_{k+1}$ and $\cJ_{k}$. Then we can observe that, as long as $\psi_{k+1}$ belongs to $\{l_{bi}\}_{i=1}^{N}$ or $\{u_{bi}\}_{i=1}^{N}$, it always holds that $L'(\cJ_k) = L'(\cJ_{k+1}) + 1$~(similarly to the equality $U'(\cI_k)=U'(\cI_{k+1}) + 1$ in \cref{subsec:tear_ub}).
     Since $L'(\cI_{k+1})$ and $L'(\cI_k)$ also lie in $\{-N,\dots,-1, 0,1,\dots,N\}$, the supposition $\psi_{k+1} \in \{l_{bi}\}_{i=1}^{N} \cup \{u_{bi}\}_{i=1}^{N}$ lead to the following cases:
     \begin{itemize}
        \item If $L'(\cJ_k)\geq 1$, then we have $L(\psi_{k+1})>L(\psi_k)$.
        \item If $L'(\cJ_{k})\leq 0$, then $L'(\cI_{k+1})\leq -1$ and we have $L(\psi_{k+1})>L(\psi_{k+2})$.
    \end{itemize}
    We have shown $L(\psi_{k+1}) > \min\{L(\psi_k),L(\psi_{k+2})\}$. This is a contradiction to the assumption that $\psi_{k+1}$ is a global minimizer. Therefore, any global minimizer of $L(t_1)$ in $\{\psi_{k}\}_{k=1}^{4N}$ must be a point of $\{b_{il}\}_{i=1}^{N} \cup \{b_{iu}\}_{i=1}^{N}$.

    Now, to find a global minimizer of \eqref{eq:LB_t1}, it remains to evaluate the objective value of \eqref{eq:LB_t1} at all points $\{b_{il}\}_{i=1}^{N} \cup \{b_{iu}\}_{i=1}^{N}$ and pick one point that yields the minimum objective. Similarly to the computation of $U(\lambda_k)$ in \eqref{eq:U_k+1}, we can compute $ L(\psi_k)$ incrementally using the formula
    \begin{equation}\label{eq:L_k+1}
        \begin{split}
            L(\psi_{k+1}) = L(\psi_k) + (n_k^u-n_k^d)(\psi_{k+1} - \psi_{k}),
        \end{split}
    \end{equation}
    where $n_k^u$ and $n_k^d$ represent the numbers of increasing and decreasing terms of $\{L(t_1, i)\}_{i=1}^N$ at $\cJ_k$, respectively. The details are in \cref{algo:LB-$t_1$}. In particular, \cref{algo:LB-$t_1$} iterates over $\{\psi_k\}_{k=1}^{4N}$ to update $n_k^u$ and $n_k^d$, so as to compute $L(\psi_{k+1})$ based on \eqref{eq:L_k+1}. At each iteration we can update $\underline{L}$ by comparing $L(\psi_{k+1})$ and $\underline{L}$, and when the loop finishes, the minimum value $\underline{L}$ is discovered, associated with the optimal solution stored in $\hat{t}_1$. 
\end{proof}

\section{Bounds for \textcolor{red}{TEAR-2}}\label{section:solve-tear-2}
In this section, we show that we can conduct a simple reparameterization on \textcolor{red}{TEAR-2}, so as to solve it using similar bound computation methods that we develop for \textcolor{red}{TEAR-1}~(see \cref{sec:bounds_tear}). 
Note that the extra constraint $\br_2^\top \hat{\br}_1 = 0$ in \textcolor{red}{TEAR-2} restricts the $\br_2$ to be determined by only 1 degree of freedom. Then we can substitute the decision variable $\br_2$ in \textcolor{red}{TEAR-2} with the variable $\alpha \in [0, 2\pi)$ in the constraint $\br_2 = [\sin{\beta}\cos{\alpha}, \sin{\beta}\sin{\alpha}, \cos{\beta}]^{\top} \in \mathbb{S}^2$, i.e., 
\begin{equation}
    \begin{split}\label{eq:tear-2-trans2}
        \min_{\alpha \in [0, 2\pi),t_2 \in \bbR} & \sum_{i\in \hat{\cI}_1 } \min \{| y_{i2} - \br_2^\top \bx_i - t_2|, \ \xi_{i2} \} \\ 
        \textnormal{s.t.} &\quad \quad \quad \quad \br_2^\top \hat{\br}_1 = 0 \\
        & \br_2 = [\sin{\beta}\cos{\alpha}, \sin{\beta}\sin{\alpha}, \cos{\beta}]^{\top} 
    \end{split}
\end{equation}
Therefore, we can branch over the 1-dimension region $[0, 2\pi)$ to search for the globally optimal solution. Specifically, given a branch $[\alpha_1, \alpha_2]$, we consider the following upper and lower bounds:

\myparagraph{Upper Bound} 
We choose the center $\dot{\alpha}$ of $[\alpha_1, \alpha_2]$ to compute an upper bound. Then the $\br_2$ can be solved by combining the constraints in \eqref{eq:tear-2-trans2} and we denote it by $\dot{\br_2}$. Let $a_{i2} := y_{i2} - \dot{\br_2}^\top \bx_i$, the upper bound can be computed as follows:
\begin{equation}
    \begin{split}\label{eq:tear-2-UB}
        \min_{t_2 \in \bbR } & \sum_{i\in \hat{\cI}_1 } \min \{| a_{i2} - t_2|, \ \xi_{i2}\}.
    \end{split}
\end{equation}
Obviously, \eqref{eq:tear-2-UB} shares the same problem structure with \eqref{eq:UB_r1_t1}. Therefore we can directly call \cref{algo:UB-$t_1$} to get a globally optimal solution of \eqref{eq:tear-2-UB}.

\myparagraph{Lower Bound}
Based on the constraints on $\br_2$ in \eqref{eq:tear-2-trans2}, it is clear that $\beta$ is restricted to an available range on the given branch $[\alpha_1, \alpha_2]$. Let $[\beta_1, \beta_2]$ denotes this range and $b_{i2} := y_{i2} - \br_2^{\top} \bx_i$, we can compute the range $[b_{i2}^l, b_{i2}^u]$ of $b_{i2}$ based on \cref{prop:LB-range}. Then it suffices to relax \eqref{eq:tear-2-trans2} into the following problem for a lower bound:
\begin{equation}
    \begin{split}\label{eq:tear-2-LB}
        \min_{t_2 \in \bbR, b_{i2} \in [b_{i2}^l, b_{i2}^u] } & \sum_{i\in \hat{\cI}_1 } \min \{| b_{i2} - t_2|, \ \xi_{i2}\}. 
    \end{split}
\end{equation}
Note that \eqref{eq:tear-2-LB} and \eqref{eq:LB_bi_t1} share the same structure, therefore we can adopt \cref{algo:LB-$t_1$} to solve \eqref{eq:tear-2-LB} with global optimality.

\section{Bounds for \textcolor{cvprblue}{CM-1}}
\label{sec:bounds_CM}
Before introducing the bounds for \textcolor{cvprblue}{CM-1}, we first present a relevant and widely used algorithm, called \textit{interval stabbing}.

\begin{theorem}
    (\textbf{Interval Stabbing}) Given a set of intervals $\mathcal{Z} = \{[z_{pl}, z_{pu}]\}_{p=1}^P$, consider to find some stabber $z$ that interacts with the most number of intervals, i.e., 
    \begin{equation}\label{eq:interval_stab}
        \begin{split}
            \max_{z\in \mathbb{R}} & \quad \sum_{p=1}^P \bm{1} \left( z\in[z_{pl}, z_{pu}] \right),
        \end{split}
    \end{equation}
    where $\bm{1}(\cdot)$ denotes the indicator function. Then \cref{algo:inter_stab} solves \eqref{eq:interval_stab} in $O(P\log P)$ time.
    \label{theorem:inter_stab}
\end{theorem}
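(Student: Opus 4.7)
The plan is to establish that the objective $f(z) := \sum_{p=1}^{P} \bm{1}(z \in [z_{pl}, z_{pu}])$ is a piecewise constant function of $z$, whose value only changes at the $2P$ endpoints $\{z_{pl}\}_{p=1}^{P} \cup \{z_{pu}\}_{p=1}^{P}$. Specifically, as $z$ crosses a left endpoint $z_{pl}$ from below, $f(z)$ increases by $1$; as $z$ crosses a right endpoint $z_{pu}$ from below, $f(z)$ decreases by $1$ (after including the contribution at $z_{pu}$ itself, since the intervals are closed). Between consecutive endpoints $f$ is constant, so a global maximizer can always be taken at one of these $2P$ endpoints; moreover, one can show a global maximizer exists at some left endpoint $z_{pl}$ (the counter is largest immediately after incrementing, not after decrementing).

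Given this structural property, I would propose a standard sweep-line algorithm for \cref{algo:inter_stab}. First, construct $2P$ events by tagging each $z_{pl}$ as a ``$+1$'' event and each $z_{pu}$ as a ``$-1$'' event. Second, sort all $2P$ events in non-decreasing order of coordinate, breaking ties by processing ``$+1$'' events before ``$-1$'' events (so that when a left and right endpoint coincide, both contributing intervals are counted simultaneously). Third, scan through the sorted events while maintaining a running counter $c$ and the best-so-far value $c^{\ast}$ together with the coordinate $z^{\ast}$ at which $c^{\ast}$ was attained. At each ``$+1$'' event located at coordinate $z$, update $c \leftarrow c + 1$ and then compare $c$ with $c^{\ast}$, updating $(c^{\ast}, z^{\ast})$ if needed; at each ``$-1$'' event, update $c \leftarrow c - 1$. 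Return $z^{\ast}$ and $c^{\ast}$ upon finishing the scan.

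Correctness will follow from the observation above: since the counter $c$ after processing all events up to and including coordinate $z$ equals $f(z)$ when ties are broken as specified, and since some left endpoint achieves the global maximum of $f$, the algorithm is guaranteed to examine the optimal value. For the complexity, the sorting step takes $O(P \log P)$ time, and the subsequent sweep is a single linear pass over $2P$ events, each involving $O(1)$ work. Thus the overall running time is $O(P \log P)$, which matches the bound in \cref{theorem:inter_stab}.

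The main subtlety---and hence the step I expect to need the most care---is the tie-breaking rule and handling of closed-interval boundaries. Because the intervals $[z_{pl}, z_{pu}]$ are closed, a point that is simultaneously the right endpoint of one interval and the left endpoint of another must be counted in both; this is precisely why ``$+1$'' events must be processed strictly before ``$-1$'' events at the same coordinate, and why the counter must be compared to $c^{\ast}$ after (not before) each ``$+1$'' update. The remainder of the argument is routine, but this ordering detail is essential for the algorithm's correctness when endpoints coincide, a case that arises in practice when \cref{algo:inter_stab} is invoked within the branch-and-bound routines for \ref{eq:CM-1}.
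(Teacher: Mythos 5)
Your proposal is correct and follows essentially the same route as the paper: the paper's Algorithm 4 is exactly the sweep-line procedure you describe (sort the $2P$ endpoints, increment the counter at left endpoints and record the best stabber there, decrement at right endpoints), and your piecewise-constant/left-endpoint argument supplies the correctness justification that the paper leaves implicit. Your observation about tie-breaking when a left and right endpoint coincide is a valid refinement of a detail the paper's pseudocode does not spell out.
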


\begin{algorithm}[!t]
    \DontPrintSemicolon
    \textbf{Input:} $\mathcal{Z} = \{[z_{pl},\ z_{pu}]\}_{p=1}^P$;

    \textbf{Output:} Best stabber $z^{*}$ and the corresponding number of stabbed intervals $T^{*}$;

    $\hat{\mathcal{Z}} \gets $ Sort all the endpoints in $\mathcal{Z}$;

    $count \gets 0$;\quad $T^{*} \gets count$;  

    \textbf{for} $p\gets 1$ to $2P$ \textbf{do}:

    \ \ \ \ \textbf{if}  $\hat{\mathcal{Z}}(p)$ is a left endpoint \textbf{then}

    \ \ \ \ \ \ \ \ $count \gets count + 1$;   \tcp*{If the current stabber moves to a left endpoint, one more interval is stabbed.}

    \ \ \ \ \ \ \ \ \textbf{if} $count > T^*$ \textbf{do}:  \tcp*{Update $T^*$ and the best stabber $z^*$ once the current number of stabbed intervals $count$ is larger than $T^*$.}

    \ \ \ \ \ \ \ \ \ \ \ \ $z^* \gets \hat{\mathcal{Z}}(p)$;\quad $T^* \gets count$;

    \ \ \ \ \ \ \ \ \textbf{end if}

    \ \ \ \ \textbf{else}

    \ \ \ \ \ \ \ \ $count \gets count - 1$; \tcp*{If the current stabber moves to a right endpoint, one less interval is stabbed.}

    \ \ \ \ \textbf{end if}

    \textbf{end for}
    \caption{Interval Stabbing (\cref{theorem:inter_stab}).}
    \label{algo:inter_stab}
\end{algorithm}

\subsection{Upper Bound}
\label{subsec:UB_CM}

Given a branch $[\underline{\br_1}, \overline{\br_1}]$ of $\br_1$, \eg, $[\alpha_1,\alpha_2]\times[\beta_1,\beta_1]$, we define $b_i := y_{i1} - \br_1^\top \bx_i$ as \cref{subsec:tear_lb}. Based on \cref{prop:LB-range}, the range $[b_{il}, b_{iu}]$ of $b_i$ can be easily computed. 
Moreover, we have the following proposition:
\begin{proposition}
    We can compute an upper bound $\overline{U}_{CM}$ of \textcolor{cvprblue}{CM-1} via solving \begin{equation}
        \overline{U}_{CM} = \max_{t_1\in \bbR} \sum_{i=1}^N \bm{1} \left( t_j \in [b_{il} - \xi_{i1}, b_{iu} + \xi_{i1}] \right).
    \end{equation}
    Specifically, $\overline{U}_{CM}$ can be computed in $O(N\log N)$ time by  \cref{algo:inter_stab}. 
\end{proposition}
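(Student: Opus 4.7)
The plan is to prove the proposition in two steps: first establish that the displayed quantity is a valid upper bound of CM-1 restricted to the given branch, and then reduce its computation to interval stabbing so that \cref{theorem:inter_stab} applies directly.

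For the upper bound claim, I would start by rewriting each summand of the CM-1 objective in ``stabbing'' form: the residual condition $|y_{i1} - \br_1^\top \bx_i - t_1| \le \xi_{i1}$ is equivalent to $t_1 \in [b_i - \xi_{i1},\, b_i + \xi_{i1}]$, where $b_i := y_{i1} - \br_1^\top \bx_i$. By \cref{prop:LB-range}, for every $\br_1$ in the branch we have $b_i \in [b_{il},\, b_{iu}]$, and therefore the interval $[b_i - \xi_{i1},\, b_i + \xi_{i1}]$ is contained in the enlarged interval $[b_{il} - \xi_{i1},\, b_{iu} + \xi_{i1}]$. Pointwise containment of intervals implies the pointwise inequality
\begin{equation*}
\bm{1}\!\left(|y_{i1} - \br_1^\top \bx_i - t_1|\le \xi_{i1}\right) \;\le\; \bm{1}\!\left(t_1 \in [b_{il} - \xi_{i1},\, b_{iu} + \xi_{i1}]\right),
\end{equation*}
which holds for every $i$, every $\br_1$ in the branch, and every $t_1\in\bbR$. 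Summing over $i$ and taking the maximum over $\br_1$ in the branch and $t_1\in\bbR$ on the left, and only over $t_1\in\bbR$ on the right (since the right-hand intervals no longer depend on $\br_1$), yields $\overline{U}_{CM}$ as an upper bound on the CM-1 objective restricted to the branch.

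For the complexity claim, the relaxed objective $\max_{t_1\in\bbR}\sum_{i=1}^N \bm{1}(t_1\in[b_{il}-\xi_{i1},\,b_{iu}+\xi_{i1}])$ is, by inspection, an instance of interval stabbing with the $N$ intervals $[b_{il}-\xi_{i1},\,b_{iu}+\xi_{i1}]$. Invoking \cref{theorem:inter_stab} with \cref{algo:inter_stab} on this collection produces an optimal stabber and the corresponding count in $O(N\log N)$ time. I would also note that the $O(N)$ computation of the endpoints $\{(b_{il},b_{iu})\}_{i=1}^N$, which \cref{prop:LB-range} guarantees, is dominated by the sorting step inside interval stabbing, so the overall complexity remains $O(N\log N)$.

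No step appears to be a serious obstacle: the whole argument is a one-line containment inequality followed by an invocation of an already-established algorithmic lemma. The only item worth stating explicitly is that the inequality is valid uniformly in $\br_1$ across the branch, since this is what converts a relaxation at each fixed $\br_1$ into a branch-wise upper bound usable by the branch-and-bound template of \cref{algo:bnb_algorithm}.
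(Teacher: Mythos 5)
Your proposal is correct and follows essentially the same route as the paper: both reduce the residual condition to membership of $t_1$ in the interval $[b_i-\xi_{i1},\,b_i+\xi_{i1}]$, enlarge it to the branch-independent interval $[b_{il}-\xi_{i1},\,b_{iu}+\xi_{i1}]$ via \cref{prop:LB-range}, and invoke \cref{algo:inter_stab} for the $O(N\log N)$ claim. The paper phrases the relaxation as an exchange of $\max_{\br_1}$ with the sum followed by interval enlargement, while you phrase it as a pointwise indicator inequality uniform over the branch; these are the same argument in substance.
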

\begin{proof}
We first consider \textcolor{cvprblue}{CM-1} constrained on the branch $[\underline{\br_1},\ \overline{\br_1}]$, that is,
\begin{equation}
    \max_{t_1\in \bbR} \max_{\br_1\in[\underline{\br_1},\ \overline{\br_1}]} \sum_{i=1}^N \bm{1} \left( | y_{i1} - \br_1^\top \bx_i - t_1| \leq \xi_{i1}  \right). \label{eq:CM_UB_1}
\end{equation}
To compute an upper bound of \eqref{eq:CM_UB_1}, we can relax it into
\begin{align}
    \max_{t_1\in \bbR} \sum_{i=1}^N \max_{\br_1\in[\underline{\br_1},\ \overline{\br_1}]} \bm{1} \left( | y_{i1} - \br_1^\top \bx_i - t_1| \leq \xi_{i1}  \right). \label{eq:CM_UB_2}
\end{align}
With the notation $b_i := y_{i1} - {\br_1}^{\top}\bx_i$ and the bounds $b_{il}$ and $b_{iu}$ in \cref{prop:LB-range}, we have
\begin{subequations}
    \begin{align}
    & \max_{t_1\in \bbR} \sum_{i=1}^N \max_{\br_1\in[\underline{\br_1},\ \overline{\br_1}]} \bm{1} \left( | y_{i1} - \br_1^\top \bx_i - t_1| \leq \xi_{i1}  \right) \notag \\
    = & \max_{t_1\in \bbR} \sum_{i=1}^N \max_{b_i\in[b_{il}, b_{iu}]} \bm{1} \left( b_i - \xi_{i1} \leq t_1 \leq b_i + \xi_{i1} \right) \label{eq:CM_UB_3}\\
    \leq & \max_{t_1\in \bbR} \sum_{i=1}^N \bm{1} \left( b_{il} - \xi_{i1} \leq t_1 \leq b_{iu} + \xi_{i1} \right) \label{eq:CM_UB_4} \\
    = & \max_{t_1\in \bbR} \sum_{i=1}^N \bm{1} \left( t_1 \in [b_{il} - \xi_{i1},\ b_{iu} + \xi_{i1}] \right) = \overline{U}_{CM}. \label{eq:CM_UB_5}
    \end{align}
\end{subequations}
Thus $\overline{U}_{CM}$ is indeed an upper bound of \textcolor{cvprblue}{CM-1}. Note that \eqref{eq:CM_UB_5} is exactly an interval stabbing problem as  \eqref{eq:interval_stab}, therefore $\overline{U}_{CM}$ can be solved in $O(N\log N)$ time by \cref{algo:inter_stab}.
\end{proof}

\subsection{Lower Bound}
\label{subsec:LB_CM}

We choose the center $\dot{\br}_1$ of the given branch $[\underline{\br_1}, \overline{\br_1}]$ to compute a lower bound $\underline{L}_{CM}$ of \textcolor{cvprblue}{CM-1}, \ie,
\begin{equation}\label{eq:CM_LB}
    \begin{split}
        \underline{L}_{CM} = \max_{t_1\in \bbR } \sum_{i=1}^N \bm{1} \left( | y_{i1} - \dot{\br}_1^\top \bx_i - t_1| \leq \xi_{i1}  \right).
    \end{split}
\end{equation}
Define $a_i:=y_{i1} - \dot{\br}_1^{\top}\bx_i$, $l_{ai}:=a_1 - \xi_{i1}$, and $u_{ai}:=a_1 + \xi_{i1}$. As in \cref{subsec:UB_CM}, let us rewrite \eqref{eq:CM_LB} as follows:
\begin{subequations}
    \begin{align}
        \underline{L}_{CM} &= \max_{t_1\in \bbR } \sum_{i=1}^N \bm{1} \left( a_i - \xi_{i1} \leq t_1 \leq  a_i + \xi_{i1} \right) \\
        &= \max_{t_1\in \bbR } \sum_{i=1}^N \bm{1} \left( y_{i1} - l_{ai} \leq t_1 \leq u_{ai} \right) \\
        &= \max_{t_1\in \bbR } \sum_{i=1}^N \bm{1} \left( t_i\in [l_{ai},\ u_{ai}]\right). \label{eq:CM_LB_final}
    \end{align}
\end{subequations}
We have transformed \eqref{eq:CM_LB} into an interval stabbing problem \eqref{eq:CM_LB_final}. Therefore we can compute the lower bound $\underline{L}_{CM}$ in $O(N\log N)$ time by \cref{algo:inter_stab}.

\section{Bounds for \textcolor{cvprblue}{TLS-1}}
\label{sec:bounds_TLS}


\begin{figure}[!t]
    \centering
    \includegraphics[width=0.75\textwidth]{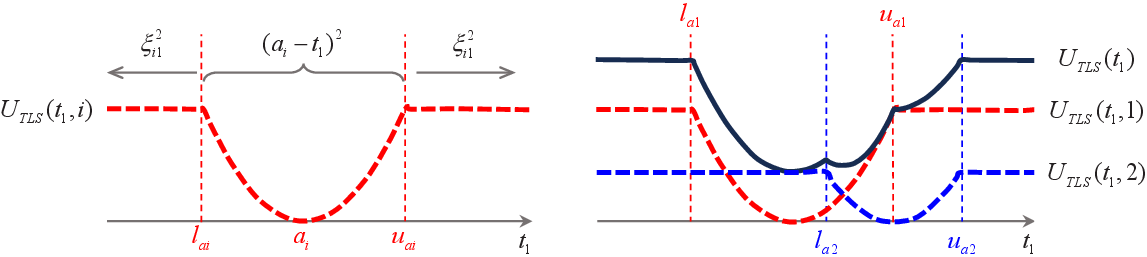}
    \\
    \makebox[0.3\textwidth]{\footnotesize (a) \quad \quad \ \ }
    \makebox[0.3\textwidth]{\footnotesize \quad \quad (b)}
    \\[-0.5em]
    \caption{(a) Illustration of $U_{TLS}(t_1,i) = \min\{(a_i - t_1)^2,\ \xi^2_{i1}\}$; (b) Illustration of $U_{TLS}(t_1)=\sum_{i=1}^{N} U_{TLS}(t_1,i)$ when $N=2$. (\cf \cref{subsec:UB_TLS}).}
    \label{fig:UB_illus_TLS}
\end{figure}

\subsection{Upper Bound}
\label{subsec:UB_TLS}

Similarly to \cref{subsec:tear_ub}, we choose the center $\dot{\br}_1$ of the given branch $ [\underline{\br_1}, \overline{\br_1}]$ to compute an upper bound $\overline{U}_{TLS}$ of this branch. Let $a_i:=y_{i1} - \dot{\br}_1^{\top}\bx_i$ as \cref{subsec:tear_ub} and \cref{subsec:LB_CM}, we have 
\begin{equation}
    \begin{split}\label{eq:UB_t1_TLS}
        \overline{U}_{TLS} &= \min_{t_1\in\mathbb{R}} \sum_{i=1}^{N} \min\{(a_i - t_1)^2,\ \xi^2_{i1}\} \\
        &= \min_{t_1\in\mathbb{R}} U_{TLS}(t_1),
    \end{split}
\end{equation}
where $U_{TLS}(t_1)$ is defined to be the sum of $U_{TLS}(t_1,i):=\min\{(a_i - t_1)^2,\ \xi^2_{i1}\}$, that is $U_{TLS}(t_1) := \sum_{i=1}^{N} U_{TLS}(t_1,i))$. We begin by the following auxiliary result (easy to prove):
\begin{theorem}\label{theorem:solve_qua}
    Consider the following problem:
    \begin{equation}
        \begin{split}\label{eq:LS_branch}
            \min_{d\in[d_l,\ d_u]} \sum_{q = 1}^{Q} (\gamma_q - d)^2.
        \end{split}
    \end{equation}
    Let $\tilde{d}:=\frac{1}{Q}\sum_{q=1}^{Q} \gamma_q$, then the global optimizer $\hat{d}$ to \eqref{eq:LS_branch} is given as follows:
    \begin{equation}\label{eq:conds-qua}
        \begin{split}
            \hat{d} = \begin{cases}
                \tilde{d}, & \tilde{d} \in [d_l,\ d_u]; \\
                d_u, & \tilde{d} > d_u; \\
                d_l, & \tilde{d} < d_l. \\
            \end{cases}
        \end{split}
    \end{equation}
\end{theorem}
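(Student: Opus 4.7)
The plan is to observe that the objective $f(d) := \sum_{q=1}^Q (\gamma_q - d)^2$ is a strictly convex quadratic in the scalar $d$, so the constrained minimizer over $[d_l, d_u]$ is simply the Euclidean projection of the unconstrained minimizer onto this interval. This reduces the theorem to a two-line computation plus a three-way case analysis.

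Concretely, first I would expand $f(d) = Q d^2 - 2\left(\sum_{q=1}^Q \gamma_q\right) d + \sum_{q=1}^Q \gamma_q^2$ and differentiate to get $f'(d) = 2Qd - 2\sum_{q=1}^Q \gamma_q$ and $f''(d) = 2Q > 0$. Setting $f'(d) = 0$ yields the unique unconstrained minimizer $\tilde{d} = \frac{1}{Q}\sum_{q=1}^Q \gamma_q$, which is the quantity defined in the statement. Strict convexity then tells us that $f$ is strictly decreasing on $(-\infty, \tilde{d}]$ and strictly increasing on $[\tilde{d}, +\infty)$.

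Next I would do the case analysis dictated by the position of $\tilde{d}$ relative to $[d_l, d_u]$. If $\tilde{d} \in [d_l, d_u]$, then $\tilde{d}$ itself is feasible and is the global optimizer. If $\tilde{d} > d_u$, then $f$ is strictly decreasing on $[d_l, d_u] \subset (-\infty, \tilde{d}]$, so the minimum on the interval is attained at the right endpoint $d_u$. Symmetrically, if $\tilde{d} < d_l$, then $f$ is strictly increasing on $[d_l, d_u] \subset [\tilde{d}, +\infty)$, so the minimum is attained at $d_l$. This matches the piecewise definition of $\hat{d}$ in \eqref{eq:conds-qua}.

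There is no real obstacle here; the result is the standard fact that the projection of an unconstrained least-squares minimizer onto a closed interval solves the box-constrained least-squares problem. The only thing to be careful about is phrasing the boundary cases $\tilde{d} = d_l$ or $\tilde{d} = d_u$ consistently (they fall into the first case of \eqref{eq:conds-qua}), so that the three cases cover $\mathbb{R}$ without conflict.
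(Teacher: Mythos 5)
Your proof is correct: the objective is a strictly convex quadratic in $d$ with unconstrained minimizer $\tilde{d}$, and the monotonicity of $f$ on either side of $\tilde{d}$ gives exactly the three-way projection formula in \eqref{eq:conds-qua}. The paper itself offers no proof (it labels the result ``easy to prove''), and your argument is precisely the standard one the authors intend, including the correct handling of the boundary cases $\tilde{d}=d_l$ and $\tilde{d}=d_u$.
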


\begin{algorithm}[!t]
    \DontPrintSemicolon
    \textbf{Input:} $\{l_{ai}\}_{i=1}^{N}$, $\{u_{ai}\}_{i=1}^{N}$, and $\{a_{i}\}_{i=1}^{N}$;

    \textbf{Output:} A global minimizer ${t_1}'$ and the minimum value $\overline{U}_{TLS}$ of \eqref{eq:UB_t1_TLS};

    $\{\lambda_k\}_{k=1}^{2N} \gets $ Sort $\{l_{ai}\}_{i=1}^{N} \cup \{u_{ai}\}_{i=1}^{N}$;
        
        $U_0 \gets \sum_{i=1}^{N} \xi_{i1}^2$;\quad $\overline{U}_{TLS} \gets U_0$;  

        $U_0^q \gets 0$;\quad $M_0^q\gets0$;\quad $\overline{a_0}\gets0$; \quad $U_0^c \gets U_0$;
 
	\textbf{for} $k\gets 1$ \textbf{to} $2N - 1$ \textbf{do}:

        \ \ \ \ $\mathcal{I}_k \gets [\lambda_k,\ \lambda_{k+1}]$;
        
        \ \ \ \ \textbf{if} $\lambda_{k} \in \{l_{ai}\}_{i=1}^{N}$ \textbf{then}

        \ \ \ \ \ \ \ \  $M_k^q \gets M_{k-1}^q + 1$;  \tcp*{Since $\lambda_k\in \{l_{ai}\}_{i=1}^{N}$, we enter a \textit{quadratic} regime.}
         
        \ \ \ \ \ \ \ \ $k_{in} \gets$ Index of $\lambda_k$ in $\{l_{ai}\}_{i=1}^{N}$;

        \ \ \ \ \ \ \ \ $a_k \gets $ $k_{in}^{th}$ element in $\{a_i\}_{i=1}^{N}$;

        \ \ \ \ \ \ \ \ $\overline{a_k} \gets \frac{M_{k-1}^q}{M_{k}^q} \cdot \overline{a_{k-1}} + \frac{1}{M_{k}^q}\cdot a_k$;

        \ \ \ \ \ \ \ \ ${t_{1k}} \gets$ Compare $\overline{a_k}$ with $\mathcal{I}_k$ based on \eqref{eq:conds-qua};  \tcp*{Compute the minimizer $t_{1k}$ on the interval $\mathcal{I}_k$ (see \cref{theorem:solve_qua}).}

        \ \ \ \ \ \ \ \ $U_k^q \gets U_{k-1}^q + {a_k}^2 + M_k^q\cdot t_{1k} \cdot (t_{1k} - 2\overline{a_k}) - M_{k-1}^q\cdot t_{1(k-1)} \cdot (t_{1(k-1)} - 2\overline{a_{k-1}})$;

        \ \ \ \ \ \ \ \ $U_k^c \gets U_{k-1}^c - \xi_{k_{in}1}^2$;

        \ \ \ \ \textbf{else if} $\lambda_{k} \in \{u_{ai}\}_{i=1}^{N}$ \textbf{then}

        \ \ \ \ \ \ \ \  $M_k^q \gets M_{k-1}^q - 1$; \tcp*{Since $\lambda_k\{u_{ai}\}_{i=1}^{N}$, we leave a \textit{quadratic} regime.}
         
        \ \ \ \ \ \ \ \ $k_{out} \gets$ Index of $\lambda_k$ in $\{u_{ai}\}_{i=1}^{N}$;

        \ \ \ \ \ \ \ \ $a_k \gets $ $k_{out}^{th}$ element in $\{a_i\}_{i=1}^{N}$;

        \ \ \ \ \ \ \ \ $\overline{a_k} \gets \frac{M_{k-1}^q}{M_{k}^q} \cdot \overline{a_{k-1}} - \frac{1}{M_{k}^q}\cdot a_k$;

        \ \ \ \ \ \ \ \ ${t_{1k}} \gets$ Compare $\overline{a_k}$ with $\mathcal{I}_k$ based on \eqref{eq:conds-qua}; \tcp*{Compute the minimizer $t_{1k}$ on th interval $\mathcal{I}_k$ (similarly to Line 13).}

        \ \ \ \ \ \ \ \ $U_k^q \gets U_{k-1}^q - {a_k}^2 + M_k^q\cdot t_{1k} \cdot (t_{1k} - 2\overline{a_k}) - M_{k-1}^q\cdot t_{1(k-1)} \cdot (t_{1(k-1)} - 2\overline{a_{k-1}})$;

        \ \ \ \ \ \ \ \ $U_k^c \gets U_{k-1}^c + \xi_{k_{out}1}^2$;

        \ \ \ \ \textbf{end if}
        
        \ \  \ \ $U_{k} \gets U_{k}^q + U_{k}^c$;

        \ \ \ \ \textbf{if} $U_{k} < \overline{U}_{TLS}$ \textbf{then}  \tcp*{Update $\overline{U}_{TLS}$ and the global minimizer ${t_1}'$ once $U_{k}$ is smaller than $\overline{U}_{TLS}$.}

        \ \ \ \ \ \ \ \ ${t_1}' \gets {t_{1k}}$;\quad $\overline{U}_{TLS} \gets U_{k}$;

        \ \ \ \ \textbf{end if}
        
        \textbf{end for}
        
	\caption{Globally Optimal 1D TLS Solver for Upper Bound Computation~\eqref{eq:UB_t1_TLS}.}
 \label{algo:UB-$t_1$_TLS}
\end{algorithm}

Let us now consider computing  $\overline{U}_{TLS} $ in \eqref{eq:UB_t1_TLS}. Define $l_{ai}:=a_i-\xi_{i1}$, $u_{ai}:=a_i+\xi_{i1}$.
Let $\{\lambda\}_{k=1}^{2N}$ be a sorted version of the $2N$ numbers $\{l_{ai}\}_{i=1}^{N} \cup \{u_{ai}\}_{i=1}^N$. On each interval $\mathcal{I}_k := [\lambda_k, \lambda_{k+1}]$, it is clear that $U_{TLS}(t_1,i)$ can be only the part of \textit{quadratic} $(a_i-t_1)^2$ or the \textit{constant} $\xi_{i1}^2$~(see \cref{fig:UB_illus_TLS}b). 

Based on \cref{theorem:solve_qua} and the above notation, we develop \cref{algo:UB-$t_1$_TLS} to get the globally optimal solution of~\eqref{eq:UB_t1_TLS} in $O(N\log N)$ time. \cref{algo:UB-$t_1$_TLS} iterates over $\{\lambda_k\}_{k=1}^{2N}$ and computes the minimal value $U_k$ of $U_{TLS}(t_1)$ on each interval $\mathcal{I}_k$. 
Obviously the minimum $U_k$ in $\{U_k\}_{k=1}^{2N-1}$ is exactly the minimum value $\overline{U}_{TLS}$, and next we will show that it can be computed incrementally during the iteration. Define $U_k^q$ to be the minimal value of the sum of $U_{TLS}(t_1,i)$s being \textit{quadratic} on each interval $\mathcal{I}_k$, as well as $U_k^c$ to be sum of $U_{TLS}(t_1,i)$s being \textit{constant}. It is clear that $U_k = U_k^q + U_k^c$. Based on \cref{theorem:solve_qua}, we have
\begin{equation}
    \begin{split}
        U_{k}^q &= \sum_{m \in \mathcal{M}_{k}^q} (a_m - {t_{1k}})^2 \\
        &= \sum_{m \in \mathcal{M}_{k}^q}{a_m}^2 + M_k^q\cdot {t_{1k}}^2 - 2{t_{1k}}\cdot{M_k^q}\cdot\overline{a_k},
    \end{split}
\end{equation}
where $\mathcal{M}_k^q$ is defined to be the set of indices of $U_{TLS}(t_1,i)$s being \textit{quadratic} on the interval $\mathcal{I}_k$, ${t_{1k}}$ is the corresponding value of $t_1$ leading to $U_k^q$ in $\mathcal{I}_k$~(that is, the $\hat{d}$ given by \cref{theorem:solve_qua}), $M_k^q$ is the set size of $\mathcal{M}_k^q$, and $\overline{a_k}$ is the average value of related $a_i$s in $\mathcal{M}_k^q$. At each iteration, the index set $\mathcal{M}_k^q$ differs from $\mathcal{M}_{k-1}^q$ by at most one element. More precisely:
\begin{itemize}
    \item If $\lambda_k\in \{l_{ai}\}_{i=1}^N$, then $\mathcal{M}_k^q$ has one more index $i$ than $\mathcal{M}_{k-1}^q$, and for this index $i$, the component $U_{TLS}(t_1,i)$ is quadratic on interval $\mathcal{I}_k$. In this case, we say \textit{we enter a quadratic regime}.
    \item If $\lambda_k\in \{u_{ai}\}_{i=1}^N$, then $\mathcal{M}_k^q$ has one less index $i$ than $\mathcal{M}_{k-1}^q$, and for this index $i$, the component $U_{TLS}(t_1,i)$ is constant (or no longer quadratic) on  interval $\mathcal{I}_k$. In this case, we say \textit{we leave a quadratic regime}.
\end{itemize}
In either case, we can update $U_k^q$ from $U_{k-1}^q$ as follows:
\begin{subequations}
    \begin{align}
        \overline{a_k} &= \begin{cases}
            \frac{M_{k-1}^q}{M_{k-1}^q + 1} \cdot \overline{a_{k-1}} + \frac{1}{M_{k-1}^q + 1}\cdot a_k, &\lambda_k \in \{l_{ai}\}_{i=1}^N;\\
            \frac{M_{k-1}^q}{M_{k-1}^q - 1} \cdot \overline{a_{k-1}} - \frac{1}{M_{k-1}^q-1}\cdot a_k, &\lambda_k \in \{u_{ai}\}_{i=1}^N;
        \end{cases}\\
        U_k^q - U_{k-1}^q &= \begin{cases}
             {a_k}^2 + M_k^q\cdot t_{1k} \cdot (t_{1k} - 2\overline{a_k}) - M_{k-1}^q\cdot t_{1(k-1)} \cdot (t_{1(k-1)} - 2\overline{a_{k-1}}), &\lambda_k \in \{l_{ai}\}_{i=1}^N;\\
             -{a_k}^2 + M_k^q\cdot t_{1k} \cdot (t_{1k} - 2\overline{a_k}) - M_{k-1}^q\cdot t_{1(k-1)} \cdot (t_{1(k-1)} - 2\overline{a_{k-1}}), &\lambda_k \in \{u_{ai}\}_{i=1}^N.\\
        \end{cases} \label{eq:TLS_ub_ite_q}
    \end{align}
\end{subequations}
Based on \eqref{eq:TLS_ub_ite_q}, we can easily compute $U_k^q$ in constant time~(Lines 10-14 and 18-22) at each iteration. As to $U_k^c$, at each iteration it also only differ with $U_{k-1}^c$ from one element, that is, containing one less \textit{constant} term when $\lambda_k$ belongs to $\{l_{ai}\}_{i=1}^N$, or containing one more \textit{constant} term when $\lambda_k$ belongs to $\{u_{ai}\}_{i=1}^N$. Therefore $U_k^c$ can be also incrementally computed by adding or subtracting the threshold term at each iteration~(Lines 15 and 23). Then $U_k$ can be computed by summing $U_k^q$ and $U_k^c$ at each iteration.
Since the sorting operation leads to $O(N\log N)$ time complexity and the iteration leads to $O(N)$ time complexity, the $\overline{U}_{TLS}$ can be computed in $O(N\log N)$ time by the proposed~\cref{algo:UB-$t_1$_TLS}.
In addition, based on the comparison at each iteration~(Lines 26-27), \cref{algo:UB-$t_1$_TLS} can solve \eqref{eq:UB_t1_TLS} to global optimality.

\subsection{Lower Bound}
\label{subsec:LB_TLS}

Let $b_i:=y_{i1} - {\br_1}^{\top}\bx_i$ as \cref{subsec:tear_lb} and \cref{subsec:UB_CM}, then the range $[b_{il},b_{iu}]$ of $b_i$ in the given branch can be solved by using~\cref{prop:LB-range}. To compute a lower bound, it suffices to relax \textcolor{cvprblue}{TLS-1} into the following problem:
\begin{equation}\label{eq:LB_bi_t1_TLS}
    \begin{split}
        \underline{L}_{TLS} &= \min_{t_1\in\mathbb{R},\ b_i\in[b_{il}, b_{iu}]} \sum_{i=1}^{N} \min\{( b_i - t_1 )^2,\ \xi_{i1}^2\} \\
    &= \min_{t_1\in\mathbb{R},\ b_i\in[b_{il}, b_{iu}]} \sum_{i=1}^{N} L_{TLS}(t_1,b_i,i),
    \end{split}
\end{equation}
where $L_{TLS}(t_1,b_i,i):= \min\{( b_i - t_1 )^2,\ \xi_{i1}^2\}$. Note that $\forall t_1 \in [b_{il}, b_{iu}]$, $b_i$ could be set by $t_1$ so as to minimize $L_{TLS}(t_1,b_i,i)$, accordingly we define
\begin{equation}
    \begin{split}
        L_{TLS}(t_1,i) &= \min_{b_i\in[b_{il}, b_{iu}]} L_{TLS}(t_1,b_i,i) \\ 
    &= \begin{cases}
        ( b_{il} - t_1 )^2, \quad&t_i\in[b_{il} - \xi_{i1},\ b_{il}]; \\
        0,  \quad&t_i\in[b_{il},\ b_{iu}]; \\
        ( b_{iu} - t_1 )^2,  \quad&\makecell{t_i\in [b_{iu},\ b_{iu} + \xi_{i1}]}; \\
        \xi_{i1}^2,  \quad&\textnormal{otherwise}.
    \end{cases}
    \end{split}
\end{equation}
Then \eqref{eq:LB_bi_t1_TLS} can be transformed equivalently to:
\begin{equation}\label{eq:LB_t1_TLS}
    \begin{split}
        \underline{L}_{TLS} &= \min_{t_1\in\mathbb{R}} \sum_{i=1}^{N} \min_{b_i\in[b_{il}, b_{iu}]} L_{TLS}(t_1,b_i,i) \\
        & = \min_{t_1\in\mathbb{R}} \sum_{i=1}^{N} L_{TLS}(t_1,i) \\
    &= \min_{t_1\in\mathbb{R}} L_{TLS}(t_1),
    \end{split}
\end{equation}
where $L_{TLS}(t_1)$ is defined to be the sum of $L_{TLS}(t_1,i)$, that is $L_{TLS}(t_1):=\sum_{i=1}^{N}L_{TLS}(t_1,i)$. Define $l_{bi} := b_{il} - \xi_{i1}$, $u_{bi} := b_{iu} + \xi_{i1}$, and $L_{TLS}(t_1):=\sum_{i=1}^{N}L_{TLS}(t_1,i)$. Let $\{\psi_k\}_{k=1}^{4N}$ be a sorted version of $\{b_{il}\}_{i=1}^{N} \cup \{b_{iu}\}_{i=1}^{N} \cup \{l_{bi}\}_{i=1}^{N} \cup \{u_{bi}\}_{i=1}^{N}$. Then on each interval $\mathcal{J}_k:=[\psi_k, \psi_{k+1}]$, it is obvious that $L_{TLS}(t_1,i)$ can be only one the \textit{two quadratics} $\{( b_{il} - t_1 )^2,\ ( b_{iu} - t_1 )^2\}$ or one of the \textit{two constants} $\{0,\ \xi_{i1}^2\}$~(see \cref{fig:LB_illus_TLS}). In the following, we develop \cref{algo:LB-$t_1$_TLS} to get the globally optimal solution of \eqref{eq:LB_t1_TLS} with $O(N^2)$ time complexity based on the above notions.

\begin{figure}[!t]
    \centering
    \includegraphics[width=0.75\textwidth]{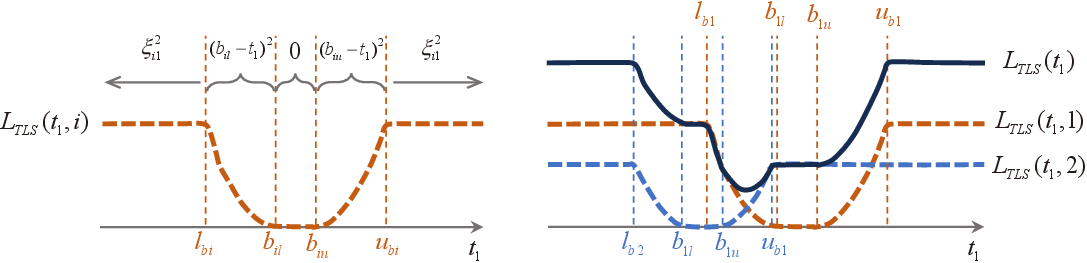}
    \\
    \makebox[0.3\textwidth]{\footnotesize \ \ (a)}
    \makebox[0.3\textwidth]{\footnotesize \quad \quad (b)}
    \\[-0.5em]
    \caption{(a) Illustration of $L_{TLS}(t_1,i) = \min_{b_i\in[b_{il}, b_{iu}]} L_{TLS}(t_1,b_i,i)$, where $L_{TLS}(t_1,b_i,i) = \min\{( b_i - t_1 )^2,\ \xi_{i1}^2\}$; (b) Illustration of $L_{TLS}(t_1)=\sum_{i=1}^{N}L_{TLS}(t_1,i)$ when $N=2$. (\cf \cref{subsec:LB_TLS}).}
    \label{fig:LB_illus_TLS}
\end{figure}

\begin{algorithm}[!t]
    \DontPrintSemicolon
    \textbf{Input:} $\{b_{il}\}_{i=1}^{N}$, $\{b_{iu}\}_{i=1}^{N}$,$\{l_{bi}\}_{i=1}^{N}$, and $\{u_{bi}\}_{i=1}^{N}$;

    \textbf{Output:} A global minimizer $\hat{t}_1$ and the minimum value $\underline{L}_{TLS}$ of \eqref{eq:LB_t1_TLS};

    $\{\psi_k\}_{k=1}^{4N} \gets $ Sort $\{b_{il}\}_{i=1}^{N} \cup \{b_{iu}\}_{i=1}^{N} \cup \{l_{bi}\}_{i=1}^{N} \cup$ $\{u_{bi}\}_{i=1}^{N}$;
        
        $L_0 \gets \sum_{i=1}^{N} \xi_{i1}^2$;\quad $\underline{L}_{TLS} \gets L_0$; 

        $\mathcal{N}_0^q \gets \emptyset$;\quad $L_0^c \gets L_0$;
 
	\textbf{for} $k\gets 1$ \textbf{to} $4N - 1$ \textbf{do}:

         \ \ \ \ $\mathcal{J}_k = [\lambda_k,\ \lambda_{k+1}]$;
        
        \ \ \ \ \textbf{if} $\psi_{k} \in \{l_{bi}\}_{i=1}^{N}$ \textbf{then}

        \ \ \ \ \ \ \ \ $k' \gets$ Index of $\psi_k$ in $\{l_{bi}\}_{i=1}^{N}$;

        \ \ \ \ \ \ \ \ $\mathcal{N}_k^q \gets \mathcal{N}_{k-1}^q \cup \{k'\}$; \tcp*{Since $\psi_k \in \{l_{bi}\}_{i=1}^{N}$, we enter a \textit{quadratic regime}.}

        \ \ \ \ \ \ \ \ $L_k^c \gets L_{k-1}^c - \xi_{k_{in}1}^2$;

        \ \ \ \ \textbf{else if} $\psi_{k} \in \{b_{il}\}_{i=1}^{N}$ \textbf{then}

        \ \ \ \ \ \ \ \ $k' \gets$ Index of $\psi_k$ in $\{b_{il}\}_{i=1}^{N}$;

        \ \ \ \ \ \ \ \ $\mathcal{N}_k^q \gets \mathcal{N}_{k-1}^q \setminus \{k'\}$; \tcp*{Since $\psi_k \in \{b_{il}\}_{i=1}^{N}$, we leave a \textit{quadratic regime}.}

        \ \ \ \ \ \ \ \ $L_k^c \gets L_{k-1}^c$;

        \ \ \ \ \textbf{else if} $\psi_{k} \in \{b_{iu}\}_{i=1}^{N}$ \textbf{then}

        \ \ \ \ \ \ \ \ $k' \gets$ Index of $\psi_k$ in $\{b_{iu}\}_{i=1}^{N}$;

        \ \ \ \ \ \ \ \ $\mathcal{N}_k^q \gets \mathcal{N}_{k-1}^q \cup \{k'\}$; \tcp*{Since $\psi_k \in \{b_{iu}\}_{i=1}^{N}$, we enter a \textit{quadratic regime}.}

        \ \ \ \ \ \ \ \ $L_k^c \gets L_{k-1}^c$;

        \ \ \ \ \textbf{else if} $\psi_{k} \in \{u_{bi}\}_{i=1}^{N}$ \textbf{then}

         \ \ \ \ \ \ \ \ $k' \gets$ Index of $\psi_k$ in $\{u_{bi}\}_{i=1}^{N}$;

        \ \ \ \ \ \ \ \ $\mathcal{N}_k^q \gets \mathcal{N}_{k-1}^q \setminus \{k'\}$; \tcp*{Since $\psi_k \in \{u_{bi}\}_{i=1}^{N}$, we leave a \textit{quadratic regime}.}

        \ \ \ \ \ \ \ \ $L_k^c \gets L_{k-1}^c + \xi_{k_{out}1}^2$;

        \ \ \ \ \textbf{end if}

        \ \ \ \  $\hat{t}_{1k},\ L_{k}^q \gets \min_{t_{1k}\in\mathcal{J}_k} \sum_{n\in\mathcal{N}^q} (a_n - t_{1k})^2$ based on \cref{theorem:solve_qua}; \tcp*{Compute the minimizer $\hat{t}_{1k}$ and minimal value $L_{k}^q$ of the sum of \textit{quadratic} terms on the interval $\mathcal{J}_k$ (see \cref{theorem:solve_qua}).}
        
        \ \  \ \ $L_{k} \gets L_{k}^q + L_{k}^c$;

        \ \ \ \ \textbf{if} $L_{k} < \underline{L}_{TLS}$ \textbf{then} \tcp*{Update $\underline{L}_{TLS}$ and the global minimizer $\hat{t}_1$ once $L_{k}$ is smaller than $\underline{L}_{TLS}$.}

        \ \ \ \ \ \ \ \ $\hat{t}_1 \gets \hat{t}_{1k}$;\quad $\underline{L}_{TLS} \gets L_{k}$;

        \ \ \ \ \textbf{end if}
        
        \textbf{end for}
        
	\caption{Globally Optimal 1D TLS Solver for Lower Bound Computation~\eqref{eq:LB_t1_TLS}.}
 \label{algo:LB-$t_1$_TLS}
\end{algorithm}

Specifically, we iterate over $\{\psi_k\}_{k=1}^{4N}$ to compute the minimal value $L_k$ of $L_{TLS}(t_1)$ on each interval $\mathcal{J}_k$, and the minimum one in $\{L_k\}_{k=1}^{4N-1}$ is exactly the $\underline{L}_{TLS}$. 
On each $\mathcal{J}_k$, we define a set $\mathcal{N}_k^q$ to store the indices of $L_{TLS}(t_1,i)$s being \textit{quadratic}, as well as a variable $L_k^c$ to store the sum of $L_{TLS}(t_1,i)$s being \textit{constant}.
At each iteration, the index set $\mathcal{N}_k^q$ differ from $\mathcal{N}_{k-1}^q$ by at most one element and the variable $L_k^c$ can be updated from $L_{k-1}^c$.
Suppose $k'$ is the index of $\psi_k$ in one of the sets $\{l_{bi}\}_{i=1}^{N}$/$\{b_{il}\}_{i=1}^{N}$/$\{b_{iu}\}_{i=1}^{N}$/$\{u_{bi}\}_{i=1}^{N}$, we have
\begin{itemize}
    \item If $\psi_k \in \{l_{bi}\}_{i=1}^{N}$, then $\mathcal{N}_k^q$ has one more index $k'$ than $\mathcal{N}_{k-1}^q$, and $L_k^c$ has one less constant term $\xi_{k'}^2$ than $L_{k'}^c$. In this case, similarly to \cref{subsec:UB_TLS}, we say \textit{we enter a quadratic regime}.
    \item If $\psi_k \in \{b_{il}\}_{i=1}^{N}$, then $\mathcal{N}_k^q$ has one less index $k'$ than $\mathcal{N}_{k-1}^q$, and $L_k^c$ equals to $L_{k-1}^c$. In this case, similarly to \cref{subsec:UB_TLS}, we say \textit{we leave a quadratic regime}.
    \item If $\psi_k \in \{b_{iu}\}_{i=1}^{N}$, then $\mathcal{N}_k^q$ has one more index $k'$ than $\mathcal{N}_{k-1}^q$, and $L_k^c$ equals to $L_{k-1}^c$. In this case, \textit{we enter a quadratic regime}.
    \item If $\psi_k \in \{u_{bi}\}_{i=1}^{N}$, then $\mathcal{N}_k^q$ has one less index $k'$ than $\mathcal{N}_{k-1}^q$, and $L_k^c$ has one more constant term $\xi_{k'}^2$ than $L_{k'}^c$. In this case, \textit{we leave a quadratic regime}.
\end{itemize}
In either cases, $\mathcal{N}_k^q$ and $L_k^c$ can be updated. Then we can compute the minimal value $L_k^q$ of the sum of $L_{TLS}(t_1,i)$s that $i\in\mathcal{N}_k^q$~(\textit{quadratic} terms) based on \cref{theorem:solve_qua} with at most $O(N)$ time complexity, and furthermore the minimal value $L_k = L_k^q + L_k^c$ on each interval $\mathcal{J}_k$. 
Based on the comparison at each iteration~(Lines 27-28), \cref{algo:LB-$t_1$_TLS} can solve \eqref{eq:LB_t1_TLS} to global optimality with $O(N^2)$ time complexity.

\section{Proof of \cref{prop:LB-range}}
\label{appen: proof_prop_LB}
Let $\bx_i := [x_{i1},\ x_{i2},\ x_{i3}]^{\top}$, we have
    \begin{equation}\label{eq:bound_r_1}
        \begin{split}
            b_i &= y_{i1} - {\br_1}^{\top}\bx_i\\
            &=y_{i1} - x_{i1}\sin{\beta}\cos{\alpha} - x_{i2}\sin{\beta}\sin{\alpha} - x_{i3}\cos{\beta} \\
            &= y_{i1} - (x_{i1}\cos{\alpha}+x_{i2}\sin{\alpha})\sin{\beta} - x_{i3}\cos{\beta}\\
            &= y_{i1} - \sqrt{{x_{i1}}^2+{x_{i2}}^2}\cos{(\alpha - \alpha^{*})}\sin{\beta} - x_{i3}\cos{\beta},
        \end{split}
    \end{equation}
    where $\alpha^{*} \in [0, \pi]$ denotes the arc-tangent angle of $x_{i2}/x_{i1}$. Now consider the following lemma:
    \begin{lemma}\label{lem:bound_cos}
    Given $\theta \in [\theta_1, \theta_2] \subseteq [0, \pi]$ and $\phi \in [0, \pi]$, define $f(\theta):=cos(\theta-\phi)$, there is
    \begin{equation}\label{eq:bound_cos_1}
        \begin{split}
            f(\theta) \in \begin{cases}
                [f(\theta_1),\ f(\theta_2)], & \textnormal{if} \ \phi \geq \theta_2; \\
                [f(\theta_2),\ f(\theta_1)], & \textnormal{if} \ \phi \leq \theta_1; \\
                [\min\{f(\theta_1),\ f(\theta_2)\},\ 1], & \textnormal{otherwise}.
            \end{cases}
        \end{split}
    \end{equation}
    Else if $[\theta_1, \theta_2] \subseteq (\pi, 2\pi]$, there is
    \begin{equation}\label{eq:bound_cos_2}
        \begin{split}
            f(\theta) \in \begin{cases}
                [f(\theta_2),\ f(\theta_1)], & \textnormal{if} \ \phi \geq \theta_2; \\
                [f(\theta_1),\ f(\theta_2)], & \textnormal{if} \ \phi \leq \theta_1; \\
                [-1, \max\{f(\theta_1),\ f(\theta_2)\}], & \textnormal{otherwise.}
            \end{cases}
        \end{split}
    \end{equation}
    \end{lemma}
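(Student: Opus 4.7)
The plan is to reduce the lemma to a monotonicity analysis of $\cos$ on sub-arcs of length at most $\pi$. Since $f(\theta)=\cos(\theta-\phi)$ is smooth with $f'(\theta)=-\sin(\theta-\phi)$, the derivative vanishes exactly when $\theta-\phi\in\pi\mathbb{Z}$. Between consecutive zeros of $f'$, the function $f$ is strictly monotonic: $\cos$ is increasing on each interval $[(2k-1)\pi,2k\pi]$ (so $f$ increases in $\theta$ there) and decreasing on each $[2k\pi,(2k+1)\pi]$. Consequently, the extrema of $f$ on $[\theta_1,\theta_2]$ are attained either at an endpoint or at an interior critical point, where $f=\pm 1$. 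The entire proof then reduces to locating the critical points relative to $[\theta_1,\theta_2]$ and reading off the direction of monotonicity on the remaining sub-intervals.

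For the first statement, the hypotheses $[\theta_1,\theta_2]\subseteq[0,\pi]$ and $\phi\in[0,\pi]$ force $\theta-\phi\in[-\pi,\pi]$, so the only critical point of $f$ that can be interior to $[\theta_1,\theta_2]$ is $\theta=\phi$, where $f(\phi)=1$. I then split into three sub-cases:
\begin{itemize}
    \item If $\phi\geq\theta_2$, then $\theta-\phi\in[-\pi,0]$ throughout, where $\cos$ is increasing, so $f$ is non-decreasing and the range is $[f(\theta_1),f(\theta_2)]$.
    \item If $\phi\leq\theta_1$, then $\theta-\phi\in[0,\pi]$ throughout, where $\cos$ is decreasing, so the range is $[f(\theta_2),f(\theta_1)]$.
    \item Otherwise $\theta_1<\phi<\theta_2$, so the interior maximum $f(\phi)=1$ and the two endpoint values together yield $[\min\{f(\theta_1),f(\theta_2)\},\,1]$.
\end{itemize}

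For the second statement, the argument $\theta-\phi$ sweeps through a sub-arc of $(0,2\pi]$ in which $\cos$ now changes monotonicity at the critical point where $\theta-\phi=\pi$ (with $f=-1$). I run the analogous three-way case analysis: in the two monotonic sub-cases the endpoints $\theta_1,\theta_2$ determine the extrema in the order stated by the lemma, while in the ``otherwise'' sub-case an interior critical point at which $\cos=-1$ pins the minimum to $-1$ and leaves the maximum as $\max\{f(\theta_1),f(\theta_2)\}$. The computability of the bounds in constant time is then immediate since each sub-case boils down to evaluating $\cos$ at two points and performing a comparison.

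The proof is not technically deep; the main obstacle is clerical bookkeeping: verifying each ordering of endpoints (e.g.\ $[f(\theta_1),f(\theta_2)]$ versus $[f(\theta_2),f(\theta_1)]$) by carefully tracking the sign of $f'$ on the particular shifted sub-arc. Because the statement uses ordered intervals to encode monotonicity information, the enumeration of sub-cases cannot be shortened by a bare $\min/\max$ symmetry argument. The only genuine subtlety is confirming that the stated dichotomy $\phi\gtrless\theta_i$ in the second case is indeed the correct trigger for each monotonicity pattern once the critical point of $f$ is shifted by $\pi$; once this bookkeeping is done, the two halves of the lemma are mirror images of one another.
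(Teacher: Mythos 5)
The paper itself offers no proof of this lemma (it is stated with the parenthetical ``easy to prove''), so your argument must stand on its own. Your treatment of the first display is correct and complete in outline: with $\theta-\phi\in[-\pi,\pi]$ the only critical point of $f$ that can lie in the interior of $[\theta_1,\theta_2]$ is the maximum at $\theta=\phi$, and the three cases follow from the sign of $f'(\theta)=-\sin(\theta-\phi)$ exactly as you describe.

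The gap is in the second display, at precisely the step you label ``the only genuine subtlety'' and then decline to carry out. You correctly locate the relevant critical point at $\theta=\phi+\pi$ (where $f=-1$), but you then assert without verification that the dichotomy $\phi\gtrless\theta_i$ as written in the lemma is ``the correct trigger.'' It is not. Under the standing hypothesis $\phi\in[0,\pi]$ together with $[\theta_1,\theta_2]\subseteq(\pi,2\pi]$, the condition $\phi\geq\theta_2$ is never satisfiable and $\phi\leq\theta_1$ always holds, so the second display as literally stated always returns $[f(\theta_1),f(\theta_2)]$ (i.e., claims $f$ is increasing) and never reaches the $[-1,\ \max\{f(\theta_1),f(\theta_2)\}]$ branch. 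This is false in general: take $\phi=\pi$, $\theta_1=1.1\pi$, $\theta_2=1.9\pi$; then $\theta-\phi\in[0.1\pi,0.9\pi]$, so $f$ is strictly decreasing and its range is $[f(\theta_2),f(\theta_1)]$, not $[f(\theta_1),f(\theta_2)]$. The correct trigger for the second display is the position of $\phi+\pi$ relative to $[\theta_1,\theta_2]$: $f$ is decreasing on the branch if $\phi+\pi\geq\theta_2$, increasing if $\phi+\pi\leq\theta_1$, and attains the interior minimum $-1$ otherwise. Your overall strategy (monotonicity between zeros of $f'$) is the right one, but carrying out the bookkeeping you deferred reveals that the second half of the lemma cannot be proved as stated; the conditions must be rewritten with $\phi+\pi$ in place of $\phi$, and a complete proof has to say so rather than asserting that the stated conditions work.
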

    Based on \cref{lem:bound_cos}~(easy to prove), the range of $\cos{(\alpha - \alpha^{*})}$ in \eqref{eq:bound_r_1} can be computed and we define it by $[\Psi_l,\ \Psi_u]$. Since $\sin{\beta} \geq 0$, we have
    \begin{equation}\label{eq:bound_r_2}
        \begin{split}
            y_{i1} - &\sqrt{{x_{i1}}^2 + {x_{i2}}^2}\Psi_u\sin{\beta} - x_{i3}\cos{\beta} \leq b_i \leq y_{i1} - \sqrt{{x_{i1}}^2 + {x_{i2}}^2}\Psi_l\sin{\beta} - x_{i3}\cos{\beta} \\
            \Leftrightarrow y_{i1} - & \sqrt{({x_{i1}}^2 + {x_{i2}}^2){\Psi_u}^2 + {x_{i3}}^2}\cos{(\beta - \beta_l^{*})} \leq b_i \leq y_{i1} - \sqrt{({x_{i1}}^2 + {x_{i2}}^2){\Psi_l}^2 + {x_{i3}}^2}\cos{(\beta - \beta_u^{*})},
        \end{split}
    \end{equation}
where $\beta_l^{*},\ \beta_u^{*} \in [0, \pi]$ denote the arc-tangent angles of $\big(\sqrt{{x_{i1}}^2 + {x_{i2}}^2}\Psi_u\big) / x_{i3}$ and $\big(\sqrt{{x_{i1}}^2 + {x_{i2}}^2}\Psi_l\big) / x_{i3}$, respectively. Note that the ranges of $\cos{(\beta - \beta_l^{*})}$ and $\cos{(\beta - \beta_u^{*})}$ can be solved based on \cref{lem:bound_cos}. Denote $\gamma_l^{u}$ as the upper bound of $\cos{(\beta - \beta_l^{*})}$, $\gamma_u^{l}$ as the lower bound of $\cos{(\beta - \beta_u^{*})}$, we have
\begin{equation}\label{eq:bound_r_3}
    \begin{split}
        b_{il} = y_{i1} - \gamma_l^{u}\sqrt{({x_{i1}}^2 + {x_{i2}}^2){\Psi_u}^2 + {x_{i3}}^2} \leq b_i \leq y_{i1} - \gamma_u^{l}\sqrt{({x_{i1}}^2 + {x_{i2}}^2){\Psi_l}^2 + {x_{i3}}^2} = b_{iu}.
    \end{split}
\end{equation}

Since $\Psi_l$, $\Psi_u$, $\gamma_l^{u}$, and $\gamma_u^{l}$ in \eqref{eq:bound_r_3} can be easily computed based on \cref{lem:bound_cos}, we can therefore get the range $[b_{il}, b_{iu}]$ of $b_i$ in \eqref{eq:bound_r_1} in constant time.

\section{Extra Experimental Details}
\myparagraph{Hyperparameter Setup} In Sec. 2.2 of the main manuscript, we decomposed the original 6-dimensional problem, \textcolor{red}{TEAR}, into two subproblems, \textcolor{red}{TEAR-1} and \textcolor{red}{TEAR-2}. \textcolor{red}{TEAR} has a threshold hyperparameter $\xi_i$, as is typical in many prior works. Moreover, \textcolor{red}{TEAR-1} has its own threshold $\xi_{i1}$ and \textcolor{red}{TEAR-2} has $\xi_{i2}$. 

However, this is not to say our method requires more hyperparameters than prior works. In fact, given the commonly used parameter $\xi_i$, we can choose $\xi_{i1}$ and $\xi_{i2}$ relatively easily, and here is how we do it. First, we simply set $\xi_{i1}$ to be equal to $\xi_{i}$. Second, recall the optimal solution $(\hat{\br}, \hat{t}_1)$ and the associated inlier indices $\hat{\cI}_1$ defined in (1) in the main manuscript. For each $i\in\hat{\cI}_1$, we set $\xi_{i2} $ to $ \xi_i - | y_{i1} - \hat{\br}_1^\top \bx_i - \hat{t}_1|$. Our experiments justify the choices of the hyperparameters. In addition, we set the minimal branch resolution in the BnB part of our method as 1e-3. As to TR-DE~\cite{Chen-CVPR2022c}, we set the resolution as 5e-2 to guarantee its experimental time limited in five days~(otherwise it costs averagely more than 200s for each real-world pair).

\label{sec:extra_exper}
\myparagraph{Dataset Details} 
For the three real-world datasets (3DMatch \cite{Zeng-CVPR2017}, KITTI \cite{Geiger-IJRR2013}, and ETH \cite{Theiler-ISPRS2014}), we follow~\cite{Chen-CVPR2022c, Chen-CVPR2022b, Li-arXiv2023} to set the inlier threshold $\xi_i$ based on the downsampling voxel size. Specifically, $\xi_i$ is set to 10 cm for the 3DMatch Dataset~\cite{Zeng-CVPR2017}, 60 cm for the KITTI Dataset~\cite{Geiger-IJRR2013}, and 30 cm for the ETH Dataset~\cite{Theiler-ISPRS2014}, respectively. 

In the Stanford 3D scanning dataset \cite{Curless-CCGIT1996}, we used 5 objects. And in Tab. 4 of the main manuscript, we reported the number of points each object contains, namely, \textit{Armadillo} has $10^5$ points, \textit{Happy Buddha} has $5\times10^5$ points, \textit{Asian Dragon} has $10^6$ points, \textit{Thai Statue} has $4\times10^6$ points, and \textit{Lucy} has $10^7$ points. We emphasize that each of these objects has slightly more points, and for clarity we downsampled them a little bit. In fact, \textit{Armadillo} has approximately $1.7\times10^5$ points, \textit{Happy Buddha} has approximately $5.4\times10^5$ points, \textit{Asian Dragon}  has approximately $3.6\times 10^6$ points, \textit{Thai Statue} has approximately $4.9\times10^6$ points, and \textit{Lucy} has approximately $1.4\times 10^7$ points.

\begin{figure}[!t]
    \centering
    \includegraphics[width=0.475\textwidth]{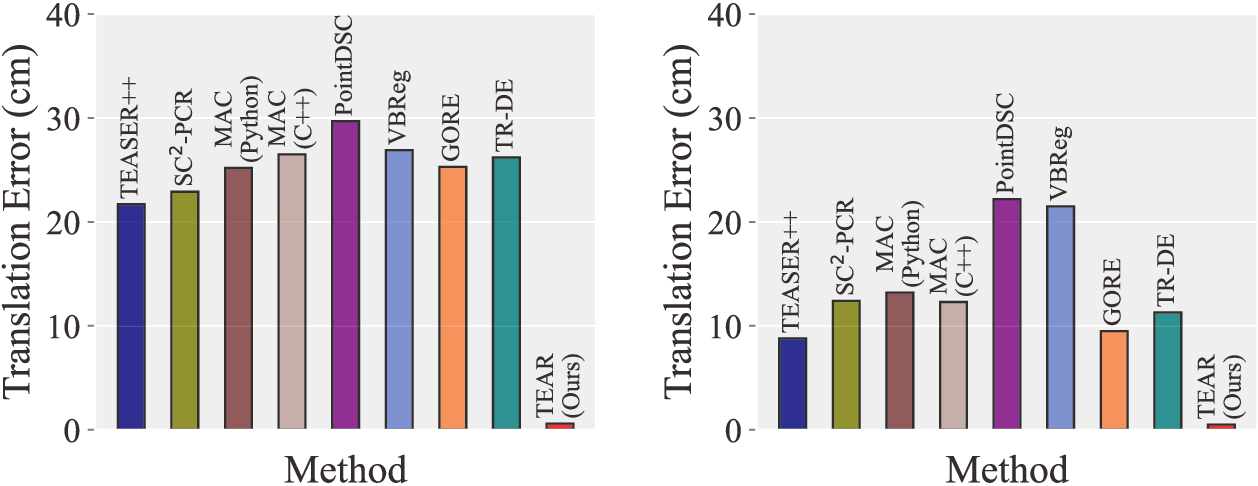}
    \\[-0.3em]
    \makebox[0.235\textwidth]{\footnotesize \quad \quad \ (a)}
    \makebox[0.235\textwidth]{\footnotesize \quad \quad \quad (b)}
    \\[-0.5em]
    \caption{Average translation errors of other methods in Tab. 4 of the manuscript taking as inputs the $10^4$ points downsampled from \textit{Lucy} that originally has $10^7$ point pairs (\cref{fig:huge_down_transla}a: 99.8\% outliers; \cref{fig:huge_down_transla}b: 95\% outliers). $\tear$ runs on the original $10^7$ input point pairs. 20 trials.}
    \label{fig:huge_down_transla}
\end{figure}

\myparagraph{Related Translation Errors of Fig. 7}
Recall that in Fig. 7 of the main manuscript, we report the average rotation errors of the unscalable methods in Tab. 4 of the manuscript evaluated on the downsampled data and our $\tear$ on the original data. As shown in \cref{fig:huge_down_transla}, we additionally report the related average translation errors.

\end{document}